\documentclass[a4paper, USenglish, cleveref, thm-restate]{lipics-v2021}

\bibliographystyle{plainurl}

\title{Admission Control of Quasi-Reversible Queueing Systems: Optimization and Reinforcement Learning}

\titlerunning{Admission Control of Quasi-Reversible Queueing Systems}

\author{Céline Comte}{LAAS-CNRS, Université de Toulouse, CNRS, Toulouse, France}{celine.comte@laas.fr}{https://orcid.org/0009-0005-9413-7124}{}

\author{Pascal Moyal}{IECL, Université de Lorraine / Inria PASTA, Nancy, France}{pascal.moyal@univ-lorraine.fr}{https://orcid.org/0000-0002-6638-5551}{}

\authorrunning{C. Comte and P. Moyal}

\Copyright{Céline Comte and Pascal Moyal}

\ccsdesc[500]{Mathematics of computing~Markov processes}
\ccsdesc[500]{Applied computing~Operations research}

\keywords{Admission control, quasi-reversibility, reversibility, linear programming, policy-gradient reinforcement learning}

\acknowledgements{Thank you to Fabien Mathieu for insightful discussions that helped construct \Cref{sec:cost}.}

\nolinenumbers

\hideLIPIcs

\usepackage{algorithm}
\usepackage{algorithmicx}
\usepackage{algpseudocode}
\usepackage{makecell}
\usepackage[dvipsnames]{xcolor}

\usepackage[acronym]{glossaries}
\glsdisablehyper

\newacronym{AC}{AC}{actor-critic}
\newacronym{BAC}{BAC}{balanced admission control}
\newacronym{CCDF}{CCDF}{complementary cumulative distribution function}
\newacronym{CTMC}{CTMC}{continuous-time Markov chain}
\newacronym{DTMC}{DTMC}{discrete-time Markov chain}
\newacronym{FCFM}{FCFM}{first-come-first-matched}
\newacronym{FCFS}{FCFS}{first-come-first-served}
\newacronym{LP}{LP}{linear program}
\newacronym{MDP}{MDP}{Markov decision process}
\newacronym{OI}{OI}{order-independent}
\newacronym{PASTA}{PASTA}{Poisson arrivals see time averages}
\newacronym{PGRL}{PGRL}{policy-gradient reinforcement learning}
\newacronym{RL}{RL}{reinforcement learning}
\newacronym{SAGE}{SAGE}{score-aware gradient estimator}

\usepackage{amsmath}
\usepackage{amssymb}
\usepackage{bbm}  
\usepackage{stmaryrd}  

\newcommand\bN{\mathbb{N}}
\newcommand\bR{{\mathbb{R}}}

\newcommand\bs{{s}}
\newcommand\bt{{t}}

\newcommand\cA{\mathcal{A}}
\newcommand\cB{\mathcal{B}}
\newcommand\cF{\mathcal{F}}
\newcommand\cI{\mathcal{I}}
\newcommand\cP{\mathcal{P}}

\newcommand\cR{\mathcal{R}}
\newcommand\cS{\mathcal{S}}
\newcommand\cV{\mathcal{V}}
\newcommand\cX{\mathcal{X}}

\newcommand\cL{\mathcal{L}}

\newcommand\un{\llbracket 1,n\rrbracket}

\newcommand\bE[1]{\mathbb{E}\left(#1\right)}

\newcommand\one{\mathbbm{1}}
\newcommand\indicator[1]{\one\left[#1\right]}

\newcommand\rcont{r_{\text{cont}}}
\newcommand\rdisc{r_{\text{disc}}}

\DeclareMathOperator*{\argmin}{arg\,min}
\DeclareMathOperator*{\argmax}{arg\,max}

\newcommand\gammabac{{\partial\Gamma}}

\newcommand\rec{{\text{rec}}}
\newcommand\card{{\textrm{card}}}
\newcommand\supp{\textrm{supp}}

\allowdisplaybreaks

\raggedbottom

\usepackage{tikz}
\usetikzlibrary{calc}

\theoremstyle{definition}
\newtheorem{assumption}{Assumption}

\begin{document}

\maketitle

\begin{abstract}
	In this paper, we introduce a versatile scheme for optimizing the arrival rates of quasi-reversible queueing systems. We first propose an alternative definition of quasi-reversibility that encompasses reversibility and highlights the importance of the definition of customer classes. Then we introduce balanced arrival control policies, which generalize the notion of balanced arrival rates introduced in the context of Whittle networks, to the much broader class of quasi-reversible queueing systems. We prove that supplementing a quasi-reversible queueing system with a balanced arrival-control policy preserves the quasi-reversibility, and we specify the form of the stationary measures. We revisit two canonical examples of quasi-reversible queueing systems, Whittle networks and order-independent queues. Lastly, we focus on the problem of admission control and leverage our results in the frameworks of optimization and reinforcement learning. 
\end{abstract}

\section{Introduction} \label{sec:introduction}

In this paper, we introduce a versatile scheme for the control of arrival rates in {\em quasi-reversible} 
queueing systems, a broad class of multi-class queueing systems that has been studied extensively in the literature~\cite{kelly,K11,C11}. Our definition of quasi-reversibility differs slightly from the definitions given in these references.  
In a nutshell, under our definition, the quasi-reversibility of a queueing system implies the existence of
an aggregation of the states of the natural Markov chain 
of the system into \emph{macrostates}, such that the macrostate process thereby obtained satisfies a reversibility property 
with respect to the stationary measures of the Markov chain. Note that the 
macrostate process is not Markov in general, yet it coincides in distribution with a reversible Markov chain, see \Cref{sec:reversibility} below.
This aggregation is obtained through a so-called \emph{counting function}, which maps the microstate to the macrostate by only retaining the number of customers of each class in the system.

This alternative definition of quasi-reversibility is instrumental to our first main contribution, namely, the definition of \emph{balanced} arrival-control policies. These generalize the balanced arrival rates introduced in the context of Whittle networks without internal routing~\cite{BJP04,J10}, to the much broader class of quasi-reversible queueing systems.
As our first main result (\Cref{theo:balanced_policies_preserve_quasi_reversibility}), we show that the quasi-reversibility of the queueing system under consideration is preserved by any balanced arrival-control policy, and we specify the form of the stationary measures in this case. Roughly speaking, we show that supplementing a quasi-reversible queueing system with a balanced arrival control boils down to multiplying the stationary measures by the balance function associated with the arrival control.

To illustrate the definition of balanced policies,
we then focus on the special case of admission control,
where arrival control is interpreted
as admitting or rejecting incoming customers.
For a quasi-reversible queueing system with a finite state space, we formulate the problem of finding the best balanced admission control policy as a linear optimization problem. This allows us to show that there exists a policy optimal among balanced policies that is deterministic (\Cref{theo:polytope}), thus generalizing a result shown in \cite{J10} in the context of Whittle networks.
We conclude with a digression, to compare the performance and structure of the best balanced policy with that of an optimal policy. To do so, we leverage both insights from the literature on (quasi-)reversible queueing systems and numerical results in a toy example.

Lastly, we show the potential interest of quasi-reversibility and balanced arrival control for implementing efficient \gls{RL} algorithms that optimize the access control. Given a family of parameterized balanced policies, we show that, thanks to quasi-reversibility, the stationary distribution of the parameterized system satisfies a simple gradient equation with respect to the system parameter. This allows us to construct a gradient estimator that we use in a gradient-ascent type algorithm common in \gls{PGRL}. The exploitation of product-form like stationary measures to implement this class of \gls{RL} algorithms, is reminiscent of the approach recently developed in \cite{CJSS24}. 

To illustrate these results, we discuss two canonical examples of quasi-reversible systems, which gather a wide class of queueing systems:
multi-type Whittle networks as defined in \cite{serfozo}, and \gls{OI} queues introduced in~\cite{B95,B96}. After checking that these two queueing systems (already known to be quasi-reversible, at least under some assumptions) satisfy our definition of quasi-reversibility,
we show that the \gls{PGRL} procedure defined above can be fruitfully applied to optimize admission control into a variant, with customer abandonment, of the redundancy model with customer-server compatibility introduced in \cite{G16} which, as observed in \cite{BC17}, is an example of \gls{OI} queues.

\subsection{Overview}

This paper is organized as follows. In \Cref{sec:quasi_reversible_queueing_systems}, we introduce quasi-reversible systems, and we review the literature on reversibility and discuss other variants of quasi-reversibility. Second, in \Cref{sec:balanced_arrival_rates}, we introduce the notion of \emph{balanced arrival control} in quasi-reversible systems and their interest in the context of \gls{RL}. 
In \Cref{sec:two_canonical_examples}, we show how this approach can be applied to two well-known general examples of quasi-reversible systems, namely, order-independent queues in \Cref{sec:oi}, and multi-class Whittle systems in \Cref{sec:multi_class_whittle}. Then, in \Cref{sec:admission_control} we set our optimization problem for admission control. Optimality of deterministic policies among balanced policies is shown in 
\Cref{sec:optimal_balanced_policies} for quasi-reversible queueing systems with a finite state space. The cost of implementing balanced policies is investigated in \Cref{sec:cost}.
In \Cref{sec:numerics}, we present a case study, and we implement an \gls{RL} algorithm to optimize the admission control in a redundancy model subject to server-customer compatibility. \Cref{sec:conclu} concludes this work.

\subsection{Notation} \label{sec:notation}

We let $\bN$, $\bN_{> 0}$, $\bR$, and $\bR_{>0}$ denote
the sets of non-negative integers, positive integers, reals, and positive reals, respectively. For $a\le b\in\bN$, we use 
$\llbracket a, b \rrbracket$ to denote the integer interval $\{a,a+1,\ldots,b\}$, and define $\llbracket a, b \llbracket$, $\rrbracket a, b \llbracket$ and $\rrbracket a, b \rrbracket$ accordingly. 

Fix $m,n\in\bN_{>0}$. We let $\mathscr M_{n,m}(\bN)$ denote the set of matrices with $n$ rows and $m$ columns of 
integer entries. With some abuse of notation, the null vectors of $\bR^n$ and the null matrix of $\mathscr M_{n,m}(\bN)$ are both denoted by $\mathbf 0$. For any $i\in\llbracket 1,n \rrbracket$, we let $e_i$ be the vector of $\bR^n$ with one in component~$i$ and zero elsewhere. Likewise, for all $i\in\un$ and $k\in\llbracket 1,m \rrbracket$, we let $e_{ik}$ be the matrix of $\mathscr M_{n,m}(\bN)$ with one in component $ik$, and zero elsewhere.
The vector space $\bR^n$ is equipped with the coordinate-wise semi-ordering, namely, for all $x=(x_1,\ldots,x_n),y=(y_1,\ldots,y_n)\in\bR^n$, $x\le y $ if and only if $x_i\le y_i$ for all $i\in\llbracket 1,n \rrbracket$. 
We also let $\bR_{>0}^n$ be the subset of vectors of $\bR^n$ having positive entries. 

A subset $\cX \subseteq \bN^n$ is called a \emph{Ferrers set}
if $0 \in \cX$ and $\cX$ is coordinate convex,
in the sense that, for $x \in \bN^n$, $x \in \cX$ implies $y \in \cX$
for each $y \in \bN^n$ such that $y \le x$.

Let $\cA$ be a finite set. We denote by $\card(\cA)$, the cardinality of $\cA$. We let $\star$ denote the Kleene-star operator of~$\cA$, namely, $\cA^\star$ is the set of words on the alphabet $\cA$. A word $w\in \cA^\star$ is then written as $w=w_1\cdots w_\ell$, where $\ell\in\bN_{>0}$ is the \emph{length} of $w$. The empty word of $\cA^\star$ is denoted by $\varnothing$, is supposed to be of null length. 

The identity function of any set into itself is denoted by~$\textrm{Id}$. Given a real-valued function~$f$, $\supp(f)$ denotes the support of $f$, i.e., the preimage of $\bR \setminus \{0\}$ by $f$.

\glsresetall

\section{Quasi-reversible queueing systems} \label{sec:quasi_reversible_queueing_systems}

In this section, we introduce quasi-reversible queueing systems,
a broad class of multi-class queueing systems
described for instance in \cite[Section~3.2]{kelly}.
In \Cref{sec:queueing_systems}, we define what we intend by 
a \emph{queueing system}: It is formally defined  
as a (discrete-state-space) \gls{CTMC}
whose state contains enough information
to count the number of customers of each class.
\Cref{sec:quasi_reversibility} gives our definition of quasi-reversibility.
To prepare the ground for our first main result
(\Cref{theo:balanced_policies_preserve_quasi_reversibility}
in \Cref{sec:balanced_arrival_rates_preserve_quasi_reversibility}),
we pay particular attention to assumptions
regarding the \gls{CTMC}'s transition diagram.
In case the reader would like to make these definitions more concrete,
we encourage them to take a look at \Cref{sec:two_canonical_examples} right away,
as it describes two canonical examples
of quasi-reversible systems, 
and applies our results to these examples.

\subsection{Queueing systems} \label{sec:queueing_systems}

In this paper, a queueing system is defined as
a triple $Q = (\cS, |\cdot|, q)$,
where $\cS$ is called the (micro-)state space,
$|\cdot| : \cS \to \bN^n$ the counting function,
and $q: \cS \times \cS \to \bR_{\ge 0}$ the transition kernel.
Let us specify each of these elements.

\subsubsection{State space, counting function, and microstate-independence} \label{sec:state_space}

Consider a queueing system
with a finite number~$n \in \bN_{> 0}$ of customer classes.
Intuitively, a class is a label that is attached to a customer
and that may carry information about this customer,
such as its service requirement.
As we will see later, we assume that the class of a customer
is fixed throughout its stay in the queueing system. 
Let $\cS$ denote the state space of the system,
assumed to be countable.

The system state is assumed to
contain enough information to recover
the number of customers of each class
via a mapping $|\cdot|$ called the \emph{counting} function:
for each $s \in \cS$,
$|s| = (|s|_1, |s|_2, \ldots, |s|_n) \in \bN^n$
gives the number of customers of each class
when the system is in state~$s$.
We let $|\cS| = \{|s|; s \in \cS\}$
denote the image of $\cS$ by~$|\cdot|$,
and we assume that $|\cS|$ is a Ferrers set.
Also, for each $x \in \bN^n$
we let $\cS_x = \{s \in \cS; |s| = x\}$
(which is empty if $x \notin |\cS|$ by definition). 
An element of $\cS$ is called a \emph{microstate}
and an element of $|\cS|$ a \emph{macrostate}\footnote{%
	The state spaces $\cS$ and $|\cS|$ may not be disjoint in general
	(for instance, if $|\cdot|$ is the identify function).
	In this case, the formal distinction between microstates and macrostates
	is mostly intuitive and related to the counting function:
	a microstate is an input of this function, while a macrostate is an output.
	As we will see in \Cref{sec:kernel,sec:reversibility},
	the stochastic process describing the evolution of the microstate over time
	satisfies the Markov property,
	while the stochastic process describing the evolution of the macrostate does not in general.%
}.
We assume that there exists exactly one state in~$\cS$,
called the \emph{empty state} and denoted by $\varnothing$,
such that $| \varnothing | = 0$. 

To illustrate this definition,
we mention the case of a multi-class queueing system
(again, with a set $\llbracket 1, n \rrbracket$ of customer classes),
such as the \gls{OI} queue of \Cref{sec:oi},
where the service rate
depends on the arrival order of customers.
The microstate can be defined as
the sequence $s = s_1 s_2 \cdots s_\ell \in \cS \subseteq \llbracket 1, n \rrbracket^*$,
where $\ell \in \bN$ is the number of customers in the queue and,
if $\ell \ge 1$, $s_p$ is the class of the $p$-th oldest customer,
for each $p \in \llbracket 1, \ell \rrbracket$
(in particular, $s_1$ is the class of the oldest customer).
Then the counting function is given by
$|s|_i = \sum_{p = 1}^\ell \mathbf{1}_{s_p = i}$,
for each $i \in \llbracket 1, n \rrbracket$.

We will often consider functions $f: \cS \to \bR$
that are \emph{microstate-independent} in the sense that
$f(s) = f(t)$ for each $s, t \in \cS$ such that $|s| = |t|$.
This is equivalent to
the existence of a function~$\tilde{f}: |\cS| \to \bR$
such that $f = \tilde{f} \circ | \cdot |$,
that is, $f(s) = \tilde{f}(|s|)$ for each $s \in \cS$.
With a slight abuse of notation,
we will neglect the distinction between $f$ and $\tilde{f}$
and see~$f$ as a function defined on both~$\cS$ and~$|\cS|$,
possibly writing $f(x)$ for $f(s)$ when $x = |s|$,
depending on what is convenient from a notation point of view;
whether the input is a microstate or a macrostate
will be clear from the notation.
In the same spirit, given a function $f: |\cS| \to \bR$,
we will often write $f(s)$ for $f(x)$ when $x = |s|$.

\subsubsection{Transition kernel} \label{sec:kernel}

We assume that the evolution of the microstate over time
defines a \gls{CTMC}
with transition kernel $q: \cS \times \cS \to \bR_{\ge 0}$.
This \gls{CTMC} will be called the \emph{microstate chain}
and will often be identified with the kernel~$q$
or the triple~$Q = (\cS, |\cdot|, q)$.
For each $s, t \in \cS$,
$q(s, t)$ gives the rate at which the system
transitions to state~$t$ when it is in state~$s$.
In particular, $q(s, s)$ gives the transition rate
from a state~$s \in \cS$ to itself
(so that $q$ is not an infinitesimal generator).
We assume that transitions out of each state~$s \in \cS$
can be partitioned as follows:
\begin{itemize}
	\item \emph{Arrivals} of class~$i$, that is, 
	transitions to a state $t \in \cS_{|s| + e_i}$,
	for each $i \in \un$;
	\item \emph{Departures} of class~$i$, i.e., 
	transitions to a state $t \in \cS_{|s| - e_i}$,
	for each $i \in \un$;
	\item \emph{Internal transitions}, that is, 
	transitions to a state~$t \in \cS_{|s|}$.
\end{itemize}
In particular, we assume that $q(s, t) = 0$ if
$t \notin \cS_{|s|} \cup \bigcup_{i = 1}^n (\cS_{|s| + e_i} \cup \cS_{|s| - e_i})$.
Intuitively, customers arrive and leave one at a time,
and a customer's class is constant over time. 

The arrival (resp.\ departure) transitions coincide with customer arrivals to (resp.\ departures from) the queueing system. For example, if the arrival rate of class~$i$ customers is constant equal to~$\nu_i$, the model will have arrival transitions with rate $q(s, t) = \nu_i$ for states $s, t \in \cS$ with $|t| = |s| + e_i$, whenever a transition from~$s$ to~$t$ is feasible. Internal transitions can be used to capture many transitions that, while modifying the state (in the sense of the state of a Markov chain), do not modify the number of customers of each class. For example, in \Cref{sec:multi_class_whittle}, we consider a queueing network where internal transitions correspond to movements of customers from one queue to another. Since internal transitions play an important role in the remainder, we urge the reader to read \Cref{sec:multi_class_whittle} now if they do not understand this concept.

To avoid pathological cases
(e.g., to guarantee that the empty state is reachable) and simplify the definitions in \Cref{sec:definition}, we make the following technical assumption.
\begin{assumption} \label{ass:unichain}
	There exists $\cS_\rec \subseteq \cS$ such that:
	\begin{enumerate}
		\item \label{ass:unichain-1}
		For each $s \in \cS_\rec$,
		there is a sequence $\varnothing = t_1, t_2, \ldots, t_\ell = s$
		of elements of $\cS$, with $\ell \ge 1$,
		such that $|t_k| \le |t_{k+1}|$ and $q(t_k, t_{k+1}) > 0$
		for each $k \in \llbracket 1,\ell-1 \rrbracket$.
		\item \label{ass:unichain-2}
		For each $s \in \cS \setminus \cS_\rec$,
		there is no sequence $\varnothing = t_1, t_2, \ldots, t_\ell = s$
		in~$\cS$, with $\ell \ge 1$,
		such that $q(t_k, t_{k+1}) > 0$
		for each $k \in \llbracket 1,\ell-1 \rrbracket$.
		\item \label{ass:unichain-3}
		For each $s \in \cS$,
		there is a sequence $s = t_1, t_2, \ldots, t_\ell = \varnothing$
		in~$\cS$, with $\ell \ge 1$,
		such that $|t_k| \ge |t_{k+1}|$ and $q(t_k, t_{k+1}) > 0$
		for each $k \in \llbracket 1,\ell-1 \rrbracket$.
		\item \label{ass:unichain-4}
		The set $|\cS_\rec| = \{|s|; s \in \cS_\rec\}$ is a Ferrers set.
	\end{enumerate}
\end{assumption}
\Cref{ass:unichain} implies in particular that~$Q$ is \emph{unichain}:
the microstate
chain admits $\cS_\rec$ as a single recurrent class, 
plus a possibly empty set $\cS \setminus \cS_\rec$ of transient states.
More specifically, under \Cref{ass:unichain},
the empty state~$\varnothing$ is recurrent,
and other states in $\cS_\rec$ communicate with~$\varnothing$
via paths that are monotonic with respect to~$| \cdot |$.
We will see in \Cref{sec:two_canonical_examples} that,
despite these restrictions,
this structure is sufficiently versatile
to capture broad families of systems,
such as Whittle networks and \gls{OI} queues,
equipped with sophisticated routing mechanisms,
such as those described by the method of stages~\cite{K76,B76}.

Unless stated otherwise, a queueing system
will be understood as a triple $Q = (\cS, |\cdot|, q)$ 
whose associated microstate chain satisfies
\Cref{ass:unichain} and is in particular unichain.

\subsection{Stationary measures and quasi-reversibility} \label{sec:quasi_reversibility}

\subsubsection{General definition} \label{sec:quasi_reversibility_definition}

Consider a queueing system $Q = (\cS, |\cdot|, q)$
as defined in \Cref{sec:queueing_systems}.
Due to \Cref{ass:unichain},
the microstate chain has a stationary measure~$\Pi: \cS \to \bR_{\ge 0}$ that is unique up to a positive multiplicative constant, 
and characterized by the balance equations 
\begin{align*}
	\Pi(s) \sum_{t \in \cS} q(s, t)
	&= \sum_{t \in \cS} \Pi(t) q(t, s),
	\quad \text{for each } s \in \cS,
\end{align*}
along with the normalizing condition $\sum_{s \in \cS} \Pi(s) = 1$.
As we assumed $q(s, t) = 0$
for each $s, t \in \cS$ with
$t \notin \cS_{|s|} \cup \bigcup_{i = 1}^n (\cS_{|s| + e_i} \cup \cS_{|s| - e_i})$,
these balance equations rewrite as follows:
\begin{multline}
	\label{eq:balance-equations}
	\Pi(s) \left(
	\sum_{i = 1}^n \sum_{t \in \cS_{|s| + e_i}} q(s, t)
	+ \sum_{i = 1}^n \sum_{t \in \cS_{|s| - e_i}} q(s, t)
	+ \sum_{t \in \cS_{|s|}} q(s, t)
	\right) \\
	= \left(
	\sum_{i = 1}^n \sum_{t \in \cS_{|s| + e_i}} \Pi(t) q(t, s)
	+ \sum_{i = 1}^n \sum_{t \in \cS_{|s| - e_i}} \Pi(t) q(t, s)
	+ \sum_{t \in \cS_{|s|}} \Pi(t) q(t, s)
	\right),
	\quad \text{for each } s \in \cS.
\end{multline}

\begin{definition}\label{def:quasirev}
	We say that a queueing system $Q = (\cS, |\cdot|, q)$ is \textit{quasi-reversible}
	if its stationary measures~$\Pi$
	also satisfy the following set of equations:
	\begin{align}
		\label{eq:partial-balance-1}
		\Pi(s) \sum_{t \in \cS_{|s| + e_i}} q(s, t)
		&= \sum_{t \in \cS_{|s| + e_i}} \Pi(t) q(t, s),
		\quad \text{for each } s \in \cS  \text{ and } i \in \un.
	\end{align}
\end{definition}

Equation~\eqref{eq:partial-balance-1} in \Cref{def:quasirev}
states that the probability flow out of state~$s$
due to a class-$i$ arrival
is equal to the probability flow into state~$s$
due to a class-$i$ departure,
for each $s \in \cS$ and $i \in \un$,
which is equivalent to \cite[Equation~(3.11)]{kelly}.
Subtracting~\eqref{eq:partial-balance-1}
from~\eqref{eq:balance-equations} yields as a by-product
\begin{multline} \label{eq:partial-balance-2}
	\Pi(s) \left(
	\sum_{i = 1}^n \sum_{t \in \cS_{|s| - e_i}} q(s, t)
	+ \sum_{t \in \cS_{|s|}} q(s, t)
	\right) \\
	= \sum_{i = 1}^n \sum_{t \in \cS_{|s| - e_i}} \Pi(t) q(t, s)
	+ \sum_{t \in \cS_{|s|}} \Pi(t) q(t, s),
	\quad \text{for each } s \in \cS.
\end{multline}
In fact, satisfying
\eqref{eq:balance-equations}--\eqref{eq:partial-balance-1}
is equivalent to satisfying
\eqref{eq:partial-balance-1}--\eqref{eq:partial-balance-2},
hence quasi-reversibility is stronger than stationarity.
Also, observe that quasi-reversibility
does not require positive recurrence,
as it is stated as a property of the stationary measures. 
Lastly note that, while Assumption \ref{ass:unichain} is not needed to properly define quasi-reversibility (in fact, even the unichain property is not necessary), this (slightly) restrictive assumption will be needed to 
define balanced access controls, and to show that quasi-reversibility is preserved by such controls (\Cref{theo:balanced_policies_preserve_quasi_reversibility}). For simplicity, we thus decided to impose Assumption~\ref{ass:unichain} from the start. A convenient consequence is that the stationary measure is unique up to a positive multiplicative constant. As we show in \Cref{sec:two_canonical_examples}, Assumption~\ref{ass:unichain} is satisfied by many classical queueing systems.

\subsubsection{Other definitions of quasi-reversibility} \label{sec:quasi_reversibility_alternative}

For clarity, let us compare \Cref{def:quasirev}
with two other classical definitions of quasi-reversibility,
namely, \cite[Section~3.2]{kelly} and \cite[Definition~1]{C11}.
All three definitions have many consequences in common,
but they also have differences:
\begin{enumerate}[(i)]
	\item \label{diff1}
	\Cref{def:quasirev} is more general than
	\cite[Section~3.2]{kelly} and \cite[Definition~1]{C11}
	in that it does not require
	the arrival rate of each class to be independent of the current state,
	i.e., we do not impose the existence, for each $i \in \un$,
	of a $\lambda_i > 0$ such that
	\begin{align}
		\label{eq:constant-arrival}
		\sum_{t \in \cS_{|s| + e_i}} q(s, t) = \lambda_i \indicator{|s| + e_i \in \cX},
		\quad \text{for each } s \in \cS.
	\end{align}
	\item \label{diff2}
	\Cref{def:quasirev} is more restrictive than \cite[Definition~1]{C11}
	in that it restricts the sets of states towards which
	each type of transitions (arrival, departure, and internal) can lead.
	This restriction imposed by \Cref{def:quasirev}
	is necessary to prove~\Cref{theo:balanced_policies_preserve_quasi_reversibility}.
	It rules out the negative customers considered in \cite{C11},
	but is still general enough to capture complex queueing systems
	such as those of \Cref{sec:two_canonical_examples}.
\end{enumerate}
All in all,
both \Cref{def:quasirev} and \cite[Definition~1]{C11}
are strict extensions of \cite[Section~3.2]{kelly},
but neither definition can be seen as a special case of the other.
Point~\ref{diff1} above has the following consequences:
\begin{itemize}
	\item Under \Cref{def:quasirev},
	quasi-reversibility cannot be interpreted
	as a condition of independence between
	the system state and the arrival and departure processes,
	as it was the case in \cite[Section~3.2]{kelly} and \cite[Definition~2]{C11}.
	However, other consequences of quasi-reversibility continue to hold;
	in particular, a network of quasi-reversible queueing systems is still quasi-reversible
	(see \cite[Theorem~3.7(iv)]{kelly} and \Cref{sec:multi_class_whittle}).
	\item Relaxing assumption~\eqref{eq:constant-arrival}
	allows \Cref{def:quasirev} to capture
	queueing systems that are (truly) reversible,
	even when the arrival rates are state dependent.
	On this question,
	see \Cref{sec:reversibility} below, as well as
	the discussion in \cite[Equation~(3.11)]{kelly}.
\end{itemize}

\subsubsection{Quasi-reversibility and (true) reversibility} \label{sec:reversibility}

We make a small digression to clarify the relationship between quasi-reversibility and reversibility. The first-time reader can skip this section if necessary, and possibly come back to it after reading the equivalence result of \Cref{theo:equivalence_whittle_oi}, which is a special case.

\paragraph*{Reversibility as a special case of quasi-reversibility}

Consider a queueing system~$Q^\star = (\cS^\star, |\cdot|^\star, q^\star)$ where
$| \cdot |^\star = \textrm{Id}$ is the identity function:
microstates and macrostates are identical,
$\cS^\star \subseteq \bN^n$,
and $\cS^\star_{|x|} = \{x\}$ for each $x \in \cS^\star$.
Then the definition of quasi-reversibility (\Cref{def:quasirev})
simplifies as follows:
\begin{align*}
	\Pi^\star(x) q^\star(x, x + e_i) = \Pi^\star(x + e_i) q^\star(x + e_i, x),
	\quad \text{for each } x \in \cS^\star \text{ and } i \in \un,
\end{align*}
where $\Pi^\star$ is the stationary distribution of $Q^\star$.
This is exactly the definition of reversibility.
In this way, under the assumption that
$| \cdot |^\star = \textrm{Id}$,
quasi-reversibility is equivalent to reversibility.

\paragraph*{Reversibility as an aggregation of quasi-reversibility}

Now consider an arbitrary quasi-reversible queueing system~$Q = (\cS, |\cdot|, q)$.
To simplify the discussion, let us assume that $\cS = \cS_\rec$,
so that the stationary distribution~$\Pi$ of the microstate chain of~$Q$ is positive on $\cS$.
In that case, quasi-reversibility of the microstate chain
implies reversibility of the macrostate process,
even if this process does not have the Markov property in general.  
To see this, let $x \in |\cS|$ and $i \in \un$ such that $x + e_i \in |\cS|$.
Summing~\eqref{eq:partial-balance-1} over all $s \in \cS_x$ yields
\begin{align*}
	\sum_{s \in \cS_x} \Pi(s) \sum_{t \in \cS_{x + e_i}}  q(s, t)
	&= \sum_{s \in \cS_x} \sum_{t \in \cS_{x + e_i}} \Pi(t) q(t, s).
\end{align*}
Given an arbitrary distribution~$\Pi^\star$ with support~$|\cS|$,
we can rewrite this equation as
\begin{align} \label{eq:aggregation}
	\Pi^\star(x)
	\underbrace{
		\sum_{s \in \cS_x} \sum_{t \in \cS_{x + e_i}}
		\frac{\Pi(s)}{\Pi^\star(x)} q(s, t)
	}_{q^\star(x, x + e_i)}
	&= \Pi^\star(x + e_i)
	\underbrace{
		\sum_{s \in \cS_x} \sum_{t \in \cS_{x + e_i}}
		\frac{\Pi(t)}{\Pi^\star(x + e_i)} q(t, s)
	}_{q^\star(x + e_i, x)}.
\end{align}
Using~\eqref{eq:aggregation},
there are two ways of interpreting $q^\star$ and $\Pi^\star$:
\begin{itemize}
	\item If we abstract ourselves away from the original queueing system~$Q$,
	we can simply see $Q^\star = (|\cS|, \textrm{Id}, q^\star)$
	as a reversible queueing system.
	In particular, $q^\star$ is the transition kernel
	of a reversible \gls{CTMC} with stationary distribution~$\Pi^\star$.
	\item Focusing on the distribution~$\Pi^\star$
	given by $\Pi^\star(x) = \sum_{s \in \cS_x} \Pi(s)$
	for each $x \in |\cS|$,
	\eqref{eq:aggregation} shows that,
	even if the macrostate process does not have the Markov property in general,
	this process is reversible,
	and under stationarity it has the same transition rates
	as a reversible \gls{CTMC} with transition kernel~$q^\star$.
\end{itemize}

\section{Balanced arrival control} \label{sec:balanced_arrival_rates}

In this section, we consider \emph{balanced} arrival control, a way of modifying the arrival rates in a quasi-reversible queueing system that preserves the quasi-reversibility property. Balanced arrival control was already considered in the special case of Whittle networks, see \cite{serfozo,BJP04,J10}. We recall the definition in \Cref{sec:definition} and provide examples in \Cref{sec:examples_of_balanced_arrival_rates}. Our main contribution, in \Cref{sec:balanced_arrival_rates_preserve_quasi_reversibility}, consists of showing that balanced arrival control is also compatible with quasi-reversibility, in the sense that supplementing a quasi-reversible queueing system with a balanced arrival control again yields a quasi-reversible queueing system. Lastly, \Cref{sec:gradient_of_parameterized_systems} shows a consequence of this result that is useful for optimizing arrival rates via gradient descent.

\subsection{Definition} \label{sec:definition}

Consider a quasi-reversible queueing system
$Q = (\cS, |\cdot|, q)$.
An arrival-control policy, or policy for short,
is a function $\gamma: \cS \to \bR_{\ge 0}^n$.
The queueing system under policy~$\gamma$ is
$Q_\gamma = (\cS, |\cdot|, q_\gamma)$,
where $q_\gamma: \cS \times \cS \to \bR_{\ge 0}$ is given as follows:
for each $s, t \in \cS$,
\begin{align} \label{eq:tilde-q}
	q_\gamma(s, t) &= \begin{cases}
		q(s, t) \gamma_i(s)
		&\text{if $t \in \cS_{|s| + e_i}$ for some $i \in \un$}, \\
		q(s, t)
		&\text{otherwise}.
	\end{cases}
\end{align}
If $\gamma_i(s) \in [0, 1]$
for each $s \in \cS$ and $i \in \llbracket 1, n \rrbracket$,
then $\gamma$ can be interpreted as an admission control policy,
i.e., $Q_\gamma$ is a variant of $Q$ where some incoming customers are rejected.
If $\sum_{i = 1}^n \gamma_i(s) = 1$
for each $s \in \cS$,
then $\gamma$ can be interpreted as a load-balancing policy.
Examples of such policies in the context of Whittle networks without internal transitions were considered in \cite{BJP04,J10}.

Let us briefly discuss the assumptions of \Cref{sec:queueing_systems}.
$Q_\gamma$ again satisfies $q_\gamma(s, t) = 0$
if $t \notin \cS_{|s|} \cup
\bigcup_{i = 1}^n (\cS_{|s| + e_i} \cup \cS_{|s| - e_i})$,
it is unichain (which follows from~\eqref{eq:tilde-q}
and the fact that $Q$ satisfies \Cref{ass:unichain}), and it admits $\varnothing$ as a recurrent state. 
Therefore, $Q_\gamma$ has a stationary measure $\Pi_\gamma$ that is unique up to a positive multiplicative constant,
and this measure satisfies $\Pi_\gamma(\varnothing) > 0$.
However, in general,
$Q_\gamma$ does not satisfy \Cref{ass:unichain} and is not quasi-reversible.

This motivates us to introduce \emph{balanced} policies in \Cref{def:balanced_policies} below. As we will see in \Cref{sec:balanced_arrival_rates_preserve_quasi_reversibility}, if the policy $\gamma$ is balanced, then $Q_\gamma$ does satisfy \Cref{ass:unichain} and is quasi-reversible.
This result will be exploited in the remainder of the paper to provide efficient ways of searching balanced policies that optimize certain performance criteria.

\begin{definition}[Balanced policy] \label{def:balanced_policies}
	Consider a queueing system $Q = (\cS, |\cdot|, q)$.
	A policy $\gamma: \cS \to [0, 1]^n$ is said to be \emph{balanced}
	if it satisfies the following two assumptions:
	\begin{enumerate}
		\item \label{def:balanced_policiesd-1} \emph{Microstate-independence:}
		$\gamma(s) = \gamma(t)$ 
		for each $s, t \in \cS$ such that $|s|=|t|$.
		\item \label{def:balanced_policiesd-2} \emph{Balance condition:}
		There exists a function
		$\Gamma: |\cS| \to \bR_{\ge 0}$
		such that $\Gamma(0) = 1$, and
		\begin{align} \label{eq:balance_function}
			\Gamma(x) \gamma_i(x) &= \Gamma(x + e_i),
			\quad \text{for each } x \in |\cS| \text{ and } i \in \un
			\text{ so that } x + e_i \in |\cS|.
		\end{align}
	\end{enumerate}
	In this case,
	we say that the policy $\gamma$ is $\Gamma$-balanced,
	or equivalently that $\Gamma$ is the \emph{balance function} of the policy~$\gamma$, and we write $\gamma = \gammabac$.
\end{definition}

\begin{figure}[ht]
	\centering
	\begin{tikzpicture}[xscale=2.5, yscale=2]
		\foreach \i in {0, ..., 4} {
			\foreach \j in {0, ..., 2} {
				\node (\i\j) at (\i, \j)
				{\small $\Gamma(\i, \j)$};
			}
		}
		
		\foreach \i\k in {0/1, 1/2, 2/3, 3/4} {
			\foreach \j in {0, 1, 2} {
				\draw[->, thick] (\i\j)
				-- node[midway, below]
				{\small $\gamma_1(\i, \j)$}
				(\k\j);
			}
		}
		
		\foreach \i in {0, ...,4} {
			\foreach \j/\k in {0/1, 1/2} {
				\draw[->, thick] (\i\j)
				-- node[midway, rotate=90, above]
				{\small $\gamma_2(\i, \j)$}
				(\i\k);
			}
		}
	\end{tikzpicture}
	\caption{%
		An intuitive description of the balance condition
		in a toy queueing system of dimension $n = 2$
		with macrostate space $\cX = \{x \in \bN^2\,;\, x_1 \le 4, x_2 \le 2\}$.
		For each $x \in \cX$,
		$\Gamma(x)$ is the product of admission probabilities
		along an arbitrary increasing path going from the origin to~$x$.
		The balance condition guarantees that
		$\Gamma(x)$ is well-defined in the sense that
		the product is independent of the path.
	}
	\label{fig:balance_condition}
\end{figure}

Before giving examples, let us discuss several points in \Cref{def:balanced_policies}. 
\Cref{def:balanced_policies} consists of two conditions. Microstate-independence is tied to the notions of \emph{classes}  and \emph{counting function} introduced in \Cref{sec:state_space}. In general, there may be several ways of defining classes and the counting function in a queueing system, and these may lead to different definitions of balance. If a policy~$\gamma$ is microstate-independent, it can be seen either as a (microstate-independent) function on~$\cS$, or as a function on $\cX = |\cS|$. In the remainder, we will often take the later point of view, in which case we will ignore~$\cS$ and say that $\gamma$ is a policy on~$\cX$.
The balance condition, which is the second part of \Cref{def:balanced_policies}, is illustrated in \Cref{fig:balance_condition}.
Intuitively, for any $x$, Equation~\eqref{eq:balance_function} states that
$\Gamma(x)$ is the product of the policy's transition rates
over any increasing path from the origin to~$x$,
so that this product is independent of the chosen path.
In fact, one can verify that there exists a balance function~$\Gamma$ that satisfies~\eqref{eq:balance_function} if and only if
\begin{align} \label{eq:balance_condition}
	\gamma_i(x) \gamma_j(x + e_i)
	= \gamma_i(x + e_j) \gamma_j(x),
	\quad \left\{
	\begin{aligned}
		&\text{for each } x \in |\cS|, i, j \in \llbracket 1, n \rrbracket
		\text{ such that} \\
		&x + e_i, x + e_j \in \supp(\Gamma),
		x + e_i + e_j \in |\cS|.
	\end{aligned}
	\right.
\end{align}
Observe that~\eqref{eq:balance_function}
would still be satisfied if we replaced $\Gamma$ with
$\frac1Z \Gamma$ for any $Z > 0$.
The original queueing system~$Q$
is retrieved by taking $\Gamma(x) = 1$ for each $x \in |\cS|$.

Given a Ferrers set $\cX \subseteq \bN^n$
interpreted as a queueing system's macrostate space,
we will see that there is a bijection between
the set of balanced policies on~$\cX$
and the set of balance functions on~$\cX$.
The set of balance functions on~$\cX$ is given by
\begin{align}
	\cB_\cX &= \left\{
	\Gamma: \cX \to \bR_{\ge 0}\,;\,
	\Gamma(0) = 1
	\text{ and $\supp(\Gamma)$ is a Ferrers set}
	\right\}.\label{eq:defBX}
\end{align}
Indeed, if $\Gamma$ satisfies the balance condition~\eqref{eq:balance_function} for a certain policy~$\gamma$,
then $\Gamma(x) = 0$ for some $x \in \cX$
implies that $\Gamma(y) = 0$ for each $y \in \cX$ such that $x \le y$,
which means that $\supp(\Gamma)$ is a Ferrers set.
Conversely, for each $\Gamma \in \cB_\cX$, we can construct a policy~$\gamma$ on~$\cX$ that satisfies~\eqref{eq:balance_function} by letting
\begin{align} \label{eq:balance_function_bis}
	\gamma_i(x) = \frac{\Gamma(x + e_i)}{\Gamma(x)},
	\quad \text{for each } x \in \supp(\Gamma) \text{ and } i \in \llbracket 1, n \rrbracket \text{ such that } x + e_i \in \cX,
\end{align}
and $\gamma(x) = 0$ for each $x \in \cX \setminus \supp(\Gamma)$.
Now, if we let $\gamma$ and $\gamma'$ denote two $\Gamma$-balanced policies, \eqref{eq:balance_function_bis} implies that $\gamma_i(x) = \gamma'_i(x)$ for each $x \in \supp(\Gamma)$ and $i \in \llbracket 1, n \rrbracket$ such that $x + e_i \in \cX$.
In particular, the restrictions of $Q_\gamma$ and $Q_{\gamma'}$ to $\supp(\Gamma)$ are identical, i.e., they have the same transition rates.
Since one can also verify that the recurrent sets of $Q_\gamma$ and $Q_{\gamma'}$ are included into $\supp(\Gamma)$, it follows that $Q_\gamma$ and $Q_{\gamma'}$ have the same stationary distribution. Therefore, by assimilating two policies if they lead to the same stationary distribution if necessary, it follows that~\eqref{eq:balance_function} defines a bijection between the set $\cB_\cX$ of balance functions and the set of balanced policies.
In general, when we write $\gamma = \gammabac$,
we mean that $\gamma$ satisfies~\eqref{eq:balance_function_bis}
and takes an arbitrary value on $\cX \setminus \supp(\Gamma)$,
say $\gamma(x) = 0$ for each $x \in \cX \setminus \supp(\Gamma)$.

Balanced arrival control has been studied extensively
in the context of Whittle networks without internal transitions~\cite{BJP04,J10}.
The key novelty of \Cref{def:balanced_policies}
is to apply balanced policies
in the much more general context of quasi-reversible queueing systems,
via the definitions of the counting function~$| \cdot |$
and of microstate-independence.

\subsection{Examples} \label{sec:examples_of_balanced_arrival_rates}

Consider a queueing system~$Q = (\cS, |\cdot|, q)$.
To make the definition of balanced policies more concrete,
let us focus on the special case of admission control,
where $\gamma_i(x) \in [0, 1]$ is the admission probability
of class-$i$ customers
when the macrostate is~$x$,
for each $x \in |\cS|$ and $i \in \llbracket 1, n \rrbracket$.
Examples of balanced policies
are shown in \Cref{tab:examples}.

\begin{table}[ht]
	\scalebox{0.88}{
		\renewcommand{\arraystretch}{2}
		\begin{tabular}{|c|c|c|c|}
			\hline
			Name
			& \makecell{Admission probability \\ (with $\gamma_i(x) = 0$ if \\ $\Gamma(x) = 0$ or $x + e_i \notin \cX$)}
			& \makecell{Balance function \\ (with $\Gamma(x) = 0$ if $x \notin \cX$)}
			& Parameter
			\\ \hline
			Static
			& \makecell{$\displaystyle \gamma_i(x) = \alpha_i$}
			& \makecell{$\displaystyle \Gamma(x) = \prod_{i = 1}^n {\alpha_i}^{x_i}$}
			& \makecell{Reals $\alpha_i \in [0, 1]$}
			\\ \hline
			Decentralized
			& \makecell{$\displaystyle \gamma_i(x) = \psi_i(x_i)$}
			& \makecell{$\displaystyle \Gamma(x) = \prod_{i = 1}^n \prod_{\ell = 0}^{x_i - 1} \psi_i(\ell)$}
			& \makecell{Functions $\psi_i: \overline{\cX_i} \to [0, 1]$, \\ with $\overline{\cX_i} = \{x_i; x \in \cX\}$}
			\\ \hline
			Size-based
			& \makecell{$\displaystyle \gamma_i(x) = \psi(|x|_1)$}
			& \makecell{$\displaystyle \Gamma(x) = \prod_{\ell = 0}^{|x|_1 - 1} \psi(\ell)$}
			& \makecell{Function $\psi: \overline{\cX} \to [0, 1]$, \\ with $\overline{\cX} = \{\lVert x \rVert_1; x \in \cX\}$}
			\\ \hline
			Deterministic
			& \makecell{$\displaystyle \gamma_i(x) = \indicator{x + e_i \in \cA}$}
			& \makecell{$\displaystyle \Gamma(x) = \indicator{x \in \cA}$}
			& \makecell{A Ferrers set~$\cA \subseteq \cX$ called \\ the policy's mask}
			\\ \hline
			\makecell{Cum-prod (for \\ cumulative-product)}
			& \makecell{$\displaystyle \gamma_i(x) = \prod_{\substack{y \in \cX: y \le x + e_i, \\ y_i = x_i + 1}}^{\phantom{1}} \psi(y)$}
			& \makecell{$\displaystyle \Gamma(x) = \prod_{\substack{y \in \cX: y \le x}} \psi(y)$}
			& \makecell{Function $\psi: \cX \to [0, 1]$}
			\\ \hline
		\end{tabular}
	}
	\caption{Examples of balanced policies over an arbitrary Ferrers set $\cX \subseteq \bN^n$. Here, $x \in \cX$ and $i \in \un$.
	}
	\label{tab:examples}
\end{table}

The first row shows a simple but common family
of balanced policies: \emph{static policies},
where the admission decision
does not depend on the current state.
The next two rows show that balanced policies
include two important families of admission policies:
\emph{decentralized policies},
where the decision to admit a customer of a given class
only depends on the number of customers \emph{of this class} in the system,
and \emph{size-based policies},
where the decision to admit a customer
only depends on the \emph{total} number of customers in the system
(irrespective of their classes).
It follows that,
if we know \textit{a priori} that the optimal policy
belongs to one of these two classes,
then we can search the optimal policy among balanced policies.
\emph{Deterministic policies},
on the fourth row of \Cref{tab:examples},
will play a fundamental role
in \Cref{sec:admission_control},
as we will see that the best balanced policy
(in a sense that we will define in \Cref{sec:optimal_balanced_policies})
can be chosen of this form,
at least when the state space is finite.
Observe that deterministic balanced policies
form a subset of deterministic possibly-imbalanced policies.
Indeed, under a deterministic balanced policy,
the only states~$x \in \cX$ and classes $i, j \in \un$
such that $\gamma_i(x) \neq \gamma_j(x)$
are such that $x + e_i \in \cA$ and $x + e_j \notin \cA$
or $x + e_i \notin \cA$ and $x + e_j \in \cA$,
where $\cA \subseteq |\cS|$ is called the policy's mask.
\emph{Cum-prod policies},
on the fifth row of \Cref{tab:examples},
is a family of balanced policies
that includes deterministic policies
but admit a continuous parameterization.

The simple examples of \Cref{tab:examples} can be generalized in various ways depending on the problem at hand. For instance, decentralized policies can be augmented by factors involving multiple variables, for instance,
\begin{align*}
	\Pi(x) = \left( \prod_{i = 1}^n \prod_{\ell = 0}^{x_i - 1} \psi_i(\ell) \right)
	\left( \prod_{\{i, j\} \subseteq \un}
	\prod_{\ell = 0}^{x_i + x_j - 1}
	\psi_{i, j}(\ell) \right),
	\quad x \in \cX,
\end{align*}
where $\psi_i$ is as defined for decentralized policies,
and $\psi_{i, j}: \overline{\cX}_{i, j} \to [0, 1]$
with $\overline{\cX}_{i, j} = \{x_i + x_j; x \in \cX\}$
for each $\{i, j\} \subseteq \un$.
One may restrict the latter product
to only some pairs of classes,
using for instance information on the structure of the system.

One may also consider balance functions~$\Gamma$ of the form $\Gamma(x) = f(x)$ for each $x \in \cX$, where $f: [0, +\infty)^n \to [0, +\infty)$ is a differentiable function with non-positive first partial derivatives (in particular, $f$ is componentwise non-increasing). One broad class of functions that satisfy this requirement are \glspl{CCDF} of multivariate distributions, such as multivariate Gaussian distributions.

\subsection{Balanced arrival rates preserve quasi-reversibility} \label{sec:balanced_arrival_rates_preserve_quasi_reversibility}

\Cref{theo:balanced_policies_preserve_quasi_reversibility} hereafter shows that
supplementing a quasi-reversible queueing system with a
($\Gamma$-)balanced policy
preserves quasi-reversibility
and boils down to modulating 
the stationary distribution by $\Gamma$,
in the sense of~\eqref{eq:piquasi}.
\Cref{theo:balanced_policies_preserve_quasi_reversibility} was inspired by
two observations:
(i) \cite[Proposition~1.23]{serfozo} and \cite[Equation~(10)]{BJP04},
two special cases of \Cref{theo:balanced_policies_preserve_quasi_reversibility}
that were stated in a family of reversible queueing systems,
namely Whittle networks without internal transitions (see \Cref{sec:multi_class_whittle}),
and (ii) the reversibility of the macrostate process
of a quasi-reversisble queueing system
discussed in \Cref{sec:reversibility}.

\begin{theorem} \label{theo:balanced_policies_preserve_quasi_reversibility}
	Consider a quasi-reversible queueing system
	$Q = (\cS, |\cdot|, q)$,
	and let $\Pi$ denote one of its stationary measures.
	For each balance function~$\Gamma \in \cB_{|\cS|}$,
	the triplet
	$Q_\gammabac = (\cS, |\cdot|, q_\gammabac)$
	is a quasi-reversible queueing system,
	and its stationary measures are obtained
	by taking $\frac1Z \Pi_\gammabac$ for any $Z > 0$, where
	\begin{align} \label{eq:piquasi}
		\Pi_{\gammabac}(\bs) = \Pi(\bs) \Gamma(\bs),
		\quad \text{for each } s \in \cS.
	\end{align}
\end{theorem}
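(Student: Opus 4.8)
The plan is to exhibit the measure $\Pi_\gammabac$ defined by $\Pi_\gammabac(s) = \Pi(s)\Gamma(|s|)$ as a stationary measure of $q_\gammabac$ that moreover satisfies the quasi-reversibility equations~\eqref{eq:partial-balance-1}, and then to conclude by uniqueness. Rather than checking the full balance equations~\eqref{eq:balance-equations} head-on, I would exploit the remark recorded just after \Cref{def:quasirev}: for a kernel with the arrival/departure/internal structure imposed in \Cref{sec:kernel}, a measure satisfies~\eqref{eq:balance-equations} together with~\eqref{eq:partial-balance-1} if and only if it satisfies~\eqref{eq:partial-balance-1} together with~\eqref{eq:partial-balance-2}. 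Hence it suffices to verify that $\Pi_\gammabac$ satisfies both~\eqref{eq:partial-balance-1} and~\eqref{eq:partial-balance-2} for the kernel $q_\gammabac$; this simultaneously yields stationarity (through~\eqref{eq:balance-equations}) and quasi-reversibility (which is exactly~\eqref{eq:partial-balance-1}). Two preliminary points remain: that $Q_\gammabac$ satisfies \Cref{ass:unichain}, so that it is a queueing system in our sense with a stationary measure unique up to a positive constant, and a clean bookkeeping of the macrostates where $\Gamma$ vanishes.

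For~\eqref{eq:partial-balance-1}, fix $s \in \cS$ and $i \in \un$ with $\Gamma(|s|) > 0$. On an arrival transition we have $q_\gammabac(s,t) = q(s,t)\gamma_i(s)$ for $t \in \cS_{|s|+e_i}$, where by microstate-independence and~\eqref{eq:balance_function_bis} the factor $\gamma_i(s) = \Gamma(|s|+e_i)/\Gamma(|s|)$ is common to all such $t$; the reverse transitions $t \to s$ are class-$i$ departures, hence unchanged, $q_\gammabac(t,s) = q(t,s)$, while $\Pi_\gammabac(t) = \Pi(t)\Gamma(|s|+e_i)$ since $|t| = |s|+e_i$. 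Substituting and using $\Pi_\gammabac(s) = \Pi(s)\Gamma(|s|)$, both sides of~\eqref{eq:partial-balance-1} for $q_\gammabac$ reduce to $\Gamma(|s|+e_i)$ times the corresponding sides of~\eqref{eq:partial-balance-1} for the original kernel $q$, which agree by quasi-reversibility of $Q$. The computation for~\eqref{eq:partial-balance-2} is analogous: departures and internal transitions out of $s$ are unchanged, whereas a transition $t \to s$ with $|t| = |s|-e_i$ is a class-$i$ arrival carrying the factor $\gamma_i(|t|) = \Gamma(|s|)/\Gamma(|s|-e_i)$, so that $\Pi_\gammabac(t)\,q_\gammabac(t,s) = \Pi(t)\,q(t,s)\,\Gamma(|s|)$; every surviving term thus acquires the common factor $\Gamma(|s|)$, and~\eqref{eq:partial-balance-2} for $q_\gammabac$ reduces to $\Gamma(|s|)$ times~\eqref{eq:partial-balance-2} for $q$. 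The macrostates where $\Gamma$ vanishes are dispatched using that $\supp(\Gamma)$ is a Ferrers set (hence downward closed): if $\Gamma(|s|) = 0$ then $\Pi_\gammabac(s) = 0$, every $t$ feeding $s$ through a surviving arrival also has $\Gamma(|t|) = 0$, and both sides of~\eqref{eq:partial-balance-1} and~\eqref{eq:partial-balance-2} vanish; and if $\Gamma(|s|+e_i) = 0$ while $\Gamma(|s|)>0$ the class-$i$ arrival out of $s$ is killed and each $t \in \cS_{|s|+e_i}$ has $\Pi_\gammabac(t) = 0$, so~\eqref{eq:partial-balance-1} holds trivially.

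It remains to verify \Cref{ass:unichain} for $Q_\gammabac$, with candidate recurrent class $\cR = \{s \in \cS_\rec : |s| \in \supp(\Gamma)\}$. The crucial observation is that every $q_\gammabac$-positive path issued from $\varnothing$ keeps its macrostate inside $\supp(\Gamma)$: arrivals leaving $\supp(\Gamma)$ are killed by construction of $\gamma$, while departures and internal transitions can only keep the macrostate in $\supp(\Gamma)$ because the latter is coordinate convex. This yields condition~(2) of \Cref{ass:unichain}, since a state outside $\cR$ is either transient already in $Q$ or has a macrostate outside $\supp(\Gamma)$, and in both cases is unreachable under $q_\gammabac$. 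For condition~(1), an increasing path from $\varnothing$ to $s \in \cR$ in $Q$ stays in $\supp(\Gamma)$ by the Ferrers property, so none of its arrival steps is killed and it survives in $q_\gammabac$; condition~(3) transfers verbatim, as decreasing paths use only unchanged transitions; and condition~(4) holds because $|\cR| = |\cS_\rec| \cap \supp(\Gamma)$ is an intersection of Ferrers sets. With \Cref{ass:unichain} in hand, $Q_\gammabac$ is a queueing system whose stationary measure is unique up to a positive multiplicative constant, so the measure built above is, up to normalization, that stationary measure, and~\eqref{eq:partial-balance-1} then certifies quasi-reversibility.

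I expect the two substitutions of the second paragraph to be essentially mechanical once the factorization $\gamma_i(x) = \Gamma(x+e_i)/\Gamma(x)$ is in place. The main obstacle is the careful handling of $\supp(\Gamma)$: simultaneously ensuring the partial-balance identities remain valid where $\Gamma$ vanishes, and recognizing that $\supp(\Gamma)$ being a Ferrers set is precisely what makes $Q_\gammabac$ unichain with recurrent class $\cR$.
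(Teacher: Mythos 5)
Your proposal is correct and follows essentially the same route as the paper's proof: verify \Cref{ass:unichain} for $Q_\gammabac$ with recurrent class $\cS_\rec \cap \{s \in \cS : \Gamma(|s|) > 0\}$, then check the partial balance equations \eqref{eq:partial-balance-1}--\eqref{eq:partial-balance-2} for $\Pi_\gammabac$ via the factorization $\gamma_i(x) = \Gamma(x+e_i)/\Gamma(x)$, invoking the equivalence noted after \Cref{def:quasirev} to get stationarity and quasi-reversibility at once. The only differences are cosmetic: you factor out the common terms $\Gamma(|s|+e_i)$ and $\Gamma(|s|)$ and treat the vanishing-$\Gamma$ macrostates by explicit case analysis, whereas the paper absorbs both into the single identity $\gammabac_i(s)\Gamma(s) = \Gamma(t)$, valid on all of $\cS$.
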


\begin{proof}
	We proceed in two steps: first we prove that $Q_\gammabac$ satisfies \Cref{ass:unichain}, and then we prove (at once) that it is quasi-reversible and that its stationary measures are proportional to \eqref{eq:piquasi}.
	
	\vspace{.2cm}
	
	\noindent \emph{Step 1: $Q_\gammabac$ satisfies \Cref{ass:unichain}.}
	Let us first show that $Q_\gammabac$ satisfies \Cref{ass:unichain}
	and that its recurrent class is $\cS_{\gammabac, \rec} = \cS_\rec \cap \cS_{\Gamma > 0}$,
	where $\cS_\rec$ is the recurrent class of $Q$
	and $\cS_{\Gamma > 0} = \{s \in \cS; \Gamma(s) > 0\}$.
	We verify each point of \Cref{ass:unichain} one after the other:
	\begin{enumerate}
		\item Let $s \in \cS_{\gammabac, \rec}$.
		Since $s \in \cS_\rec$, there is a sequence $\varnothing = t_1, t_2, \ldots, t_\ell = s$ in $\cS$
		such that $|t_k| \le |t_{k+1}|$ and $q(t_k, t_{k+1}) > 0$ for each $k \in \llbracket 1,\ell-1 \rrbracket$.
		Combining $\Gamma(s) > 0$, \eqref{eq:tilde-q}, \eqref{eq:balance_function_bis}, and the microstate-independence of $\gammabac$,
		we obtain that $q_\gammabac(t_k, t_{k+1}) > 0$ for $k \in \llbracket 1,\ell-1\rrbracket$. 
		\item Let $s \in \cS \setminus \cS_{\gammabac, \rec}$:
		either $s \notin \cS_\rec$, or $s \notin \cS_{\Gamma > 0}$, or both:
		\begin{itemize}
			\item If $s \notin \cS_\rec$, we know
			that there is no sequence $\varnothing = t_1, t_2, \ldots, t_\ell = s$ in $\cS$
			such that $q(t_k, t_{k+1}) > 0$ for each $k \in \llbracket 1,\ell-1 \rrbracket$,
			and the same follows if we replace $q$ with $q_\gammabac$ by~\eqref{eq:tilde-q}.
			\item If $s \notin \cS_{\Gamma > 0}$, we have $\Gamma(t) = 0$ for each $t \in \cS$ such that $|t| \ge |s|$ because $\cS_{\Gamma > 0}$ is a Ferrers set.
			Combining this observation with~\eqref{eq:tilde-q} and~\eqref{eq:balance_function_bis},
			we conclude that for each sequence $\varnothing = t_1, t_2, \ldots, t_\ell = s$,
			there must exit an index $k \in \llbracket 1,\ell-1 \rrbracket$ such that $q_\gammabac(t_k, t_{k+1}) = 0$.
		\end{itemize}
		\item Let $s \in \cS$.
		There is a sequence $s = t_1, t_2, \ldots, t_\ell = \varnothing$ in $\cS$
		such that $|t_k| \ge |t_{k+1}|$ and $q(t_k, t_{k+1}) > 0$ for each $k \in \llbracket 1,\ell-1 \rrbracket$.
		To conclude, it suffices to observe that
		$q_\gammabac(t_k, t_{k+1}) = q(t_k, t_{k+1})$ for each $k \in \llbracket 1,\ell-1 \rrbracket$.
		\item We have $|\cS_{\gammabac, \rec}| = |\cS_\rec| \cap \supp(\Gamma)$,
		where $\supp(\Gamma) = \{x \in |\cS|; \Gamma(x) > 0\}$.
		The set $|\cS_\rec|$ is a Ferrers set
		because $Q$ satisfies \Cref{ass:unichain}.
		The set $\supp(\Gamma)$ also is also a Ferrers set because $\Gamma \in \cB_{|\cS|}$.
		Hence, $|\cS_{\gammabac, \rec}|$ also satisfies these two properties.
	\end{enumerate}
	
	\vspace{.2cm}
	
	\noindent \emph{Step~2: Stationary distribution and quasi-reversibility.}
	Now, to show that $Q_\gammabac$ is quasi-reversible and admits~$\Pi_{\gammabac}$
	given by~\eqref{eq:piquasi} as a stationary measure, it is enough to show that $\Pi_{\gammabac}$ 
	satisfies the following partial balance equations:
	\begin{align}
		\label{eq:partial-balance-1-pi}
		\Pi_{\gammabac}(\bs)
		\sum_{t \in \cS_{|\bs| + e_i}}
		q(\bs, \bt) \gammabac_i(s)
		&= \sum_{\bt \in \cS_{|\bs| + e_i}}
		\Pi_{\gammabac}(\bt)
		q(\bt, \bs),
		\text{for each } \bs \in \cS, i \in \un,
	\end{align}
	and
	\begin{multline}
		\label{eq:partial-balance-2-pi}
		\Pi_{\gammabac}(\bs) \left(
		\sum_{i = 1}^n
		\sum_{\bt \in \cS_{|\bs| - e_i}} q(\bs, \bt)
		+ \sum_{t \in \cS_{|\bs|}} q(s, t)
		\right) \\
		= \sum_{\bt \in \cS_{|\bs| - e_i}} \Pi_{\gammabac}(\bt) q(\bt, \bs) \gammabac_i(s)
		+ \sum_{t \in \cS_{|\bs|}} \Pi_{\gammabac}(t) q(t, s),
		\quad \text{for each } s \in \cS.
	\end{multline}
	Indeed, \eqref{eq:partial-balance-1-pi}--\eqref{eq:partial-balance-2-pi}
	is the counterpart of \eqref{eq:partial-balance-1}--\eqref{eq:partial-balance-2}
	in $Q_{\gammabac}$,
	and we observed below \Cref{def:quasirev}
	that \eqref{eq:partial-balance-1}--\eqref{eq:partial-balance-2}
	was equivalent to stationarity and quasi-reversibility.
	It will be useful to observe that,
	if we see $\Gamma$ as a function on the microstate space~$\cS$,
	\eqref{eq:balance_function} can be rewritten as 
	\begin{equation}
		\label{eq:balance_functionC}
		\gammabac_i(\bs)\Gamma(\bs)= \Gamma(\bt),
		\quad \text{for each }
		\bs \in \cS, i\in \un, \text{ and } \bt \in \cS_{|\bs|+e_i}. 
	\end{equation} 
	
	Let us first prove~\eqref{eq:partial-balance-1-pi}.
	For each $s \in \cS$ and $i \in \un$,
	we have successively,
	starting from the left-hand side of~\eqref{eq:partial-balance-1-pi},
	\begin{align*}
		\Pi_{\gammabac}(s) \sum_{t \in \cS_{|s| + e_i}} q(s, t) \gammabac_i(s)
		&= \Gamma(s) \gammabac_i(s) \Pi(s) \sum_{t \in \cS_{|s| + e_i}} q(s, t),
		&& \text{by \eqref{eq:piquasi},} \\
		&= \Gamma(s) \gammabac_i(s) \sum_{t \in \cS_{|s| + e_i}} \Pi(t) q(t, s),
		&& \text{by \eqref{eq:partial-balance-1},} \\
		&= \sum_{t \in \cS_{|s| + e_i}} \Pi(t) \Gamma(t) q(t, s),
		&& \text{by \eqref{eq:balance_functionC},} \\
		&= \sum_{t \in \cS_{|s| + e_i}} \Pi_{\gammabac}(t) q(t, s),
		&& \text{by \eqref{eq:piquasi}.}
	\end{align*}
	Now let us prove~\eqref{eq:partial-balance-2-pi}.
	For each $s \in \cS$,
	we have successively,
	starting from the left-hand side of~\eqref{eq:partial-balance-2-pi},
	\begin{multline*}
		\Pi_{\gammabac}(\bs) \left(
		\sum_{i = 1}^n
		\sum_{\bt \in \cS_{|\bs| - e_i}} q(\bs, \bt)
		+ \sum_{t \in \cS_{|\bs|}} q(s, t)
		\right) \\
		\begin{aligned}
			&= \Gamma(s) \Pi(s) \left(
			\sum_{i = 1}^n
			\sum_{\bt \in \cS_{|\bs| - e_i}} q(\bs, \bt)
			+ \sum_{t \in \cS_{|\bs|}} q(s, t)
			\right),
			&& \text{by \eqref{eq:piquasi}}, \\
			&= \Gamma(s) \left(
			\sum_{i = 1}^n \sum_{t \in \cS_{|s| - e_i}} \Pi(t) q(t, s) + \sum_{t \in \cS_{|s|}} \Pi(t) q(t, s)
			\right),
			&& \text{by \eqref{eq:partial-balance-2}}, \\
			&= \sum_{i = 1}^n \sum_{t \in \cS_{|s| - e_i}}
			\Pi(t) \Gamma(t) q(t, s) \gammabac_i(t)
			+ \sum_{t \in \cS_{|s|}} \Pi(t) \Gamma(t) q(t, s),
			&& \text{by \eqref{eq:balance_functionC}}, \\
			&= \sum_{i = 1}^n \sum_{t \in \cS_{|s| - e_i}}
			\Pi_{\gammabac}(t) q(t, s) \gammabac_i(t)
			+ \sum_{t \in \cS_{|s|}} \Pi_{\gammabac}(t) q(t, s),
			&& \text{by~\eqref{eq:piquasi}}. 
		\end{aligned}
	\end{multline*}
	This completes the proof of \Cref{theo:balanced_policies_preserve_quasi_reversibility}.
\end{proof}

\subsection{Gradient of parameterized systems} \label{sec:gradient_of_parameterized_systems}

In the literature, results like
\Cref{theo:balanced_policies_preserve_quasi_reversibility}
have been applied mainly to evaluate performance,
i.e., to compute an average performance criterion that can be written as an expectation under the stationary distribution, when this expectation exists;
see for instance \cite{BJP04,J10}.
Here, we instead show a consequence of
\Cref{theo:balanced_policies_preserve_quasi_reversibility}
in case of parameterized balance functions,
as explained below.
This results is useful for sensitivity analysis
and in the context of \gls{PGRL},
as we will develop in \Cref{sec:numerics}.

Consider a quasi-reversible queueing system
$Q = (\cS, |\cdot|, q)$
and let $d \in \bN_{> 0}$.
We call \emph{parameterization of the balance function} a mapping $\theta \in \bR^d \mapsto \Gamma_\theta \in \cB_{|\cS|}$ such that:
\begin{enumerate}
	\item For each $s \in \cS$, the function
	$\theta \in \bR^d \mapsto \Gamma_\theta(s)$
	is differentiable and positive,
	so that in particular $\nabla_\theta \log \Gamma_\theta(s)$ is well defined for each $\theta \in \bR^d$.
	\item For each $\theta \in \bR^d$,
	the queueing system
	$Q_\theta \triangleq Q_{\partial \Gamma_\theta}$,
	which is quasi-reversible by \Cref{theo:balanced_policies_preserve_quasi_reversibility},
	is also positive recurrent.
\end{enumerate}
A simple example of parameterization is given by
$\Pi_\theta(s) = \prod_{i = 1}^n \sigma(\theta_i)^{|s|_i}$,
where $d = n$ and $\sigma: x \in \bR \mapsto 1 / (1 + \textrm{e}^{-x}) \in \bR$ is a sigmoid function.
One can then verify that the set of balanced policies
generated by letting $\theta$ vary in $\bR^n$
is the set of static policies mentioned in \Cref{sec:examples_of_balanced_arrival_rates}.
In general, in practice, the parameterization should be such that we can compute $\Gamma_\theta(s)$ and $\nabla_\theta \log \Gamma_\theta(s)$ efficiently, for each $\theta \in \bR^d$ and $s \in \cS_\rec$ (e.g., in closed form or by automatic differentiation).

For each $\theta \in \bR^d$,
let $\Pi_\theta \triangleq \Pi_{\partial \Gamma_\theta}$
denote the stationary distribution of $Q_\theta$,
so that
\begin{align}
	\label{eq:log_stationary_distribution}
	\log \Pi_\theta(s)
	&= \log \Pi(s) + \log \Gamma_\theta(s) - \log Z_\theta,
	\quad \text{for each } s \in \cS_\rec, \\
	\label{eq:normalizing_constant}
	Z_\theta
	&= \sum_{s \in \cS_\rec} \Pi(s) \Gamma_\theta(s),
\end{align}
where \eqref{eq:log_stationary_distribution}
follows from~\eqref{eq:piquasi},
and \eqref{eq:normalizing_constant}
from the normalizing condition
$\sum_{s \in \cS} \Pi_\theta(s) = 1$.
\Cref{theo:gradient} below
gives a closed-form expression for the gradient $\nabla_\theta \log \Pi_\theta(s)$
for each $s \in \cS_\rec$, when this gradient is properly defined.
This result can be seen as an extension of \cite[Equation~(12) in Theorem~1]{CJSS24},
and it remains valid if we replace $\bR^d$ with any open set $\Omega \subseteq \bR^d$.
The key take-away of~\eqref{eq:gradient} is that~$\nabla_\theta \log \Pi_\theta(s)$
can be written as an expectation under the distribution~$\Pi_\theta$.

\begin{corollary} \label{theo:gradient}
	Consider a quasi-reversible queueing system
	$Q = (\cS, |\cdot|, q)$
	and a parameterization $\theta \in \bR^d \mapsto \Gamma_\theta \in \cB_{|\cS|}$
	of the balance function.
	Assume that, for each $\theta \in \bR^d$,
	$Q_\theta$ is positive recurrent
	and the random vector
	$\nabla_\theta \log \Pi_\theta(S_\theta)$,
	where $S_\theta \sim \Pi_\theta$, is integrable.
	Then, for each $\theta \in \bR^d$, we have
	\begin{align} \label{eq:gradient}
		\nabla_\theta \log \Pi_\theta(s)
		&= \nabla_\theta \log \Gamma_\theta(s)
		- \bE{\nabla_\theta \log \Gamma_\theta(S_\theta)},
		\quad \text{for each } s \in \cS_\rec.
	\end{align}
\end{corollary}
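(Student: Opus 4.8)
The plan is to differentiate the logarithmic form~\eqref{eq:log_stationary_distribution} of the stationary distribution and to recognize the gradient of the normalizing constant as an expectation under~$\Pi_\theta$. First I would fix $\theta \in \bR^d$ and take the gradient of both sides of~\eqref{eq:log_stationary_distribution} with respect to~$\theta$; since the base measure~$\Pi$ of the unperturbed system does not depend on~$\theta$, the term $\nabla_\theta \log \Pi(s)$ vanishes, leaving (once $Z_\theta$ is known to be differentiable, see below)
\[
	\nabla_\theta \log \Pi_\theta(s) = \nabla_\theta \log \Gamma_\theta(s) - \nabla_\theta \log Z_\theta,
	\quad s \in \cS_\rec.
\]
Comparing this with the target identity~\eqref{eq:gradient}, it then suffices to prove that $\nabla_\theta \log Z_\theta = \bE{\nabla_\theta \log \Gamma_\theta(S_\theta)}$ with $S_\theta \sim \Pi_\theta$; that is, the score of the normalizing constant equals the expected score of the balance function.

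To obtain this, I would start from the definition~\eqref{eq:normalizing_constant} and write $\nabla_\theta \log Z_\theta = \nabla_\theta Z_\theta / Z_\theta$. Differentiating $Z_\theta = \sum_{s \in \cS_\rec} \Pi(s) \Gamma_\theta(s)$ term by term and using the identity $\nabla_\theta \Gamma_\theta(s) = \Gamma_\theta(s)\, \nabla_\theta \log \Gamma_\theta(s)$, valid because $\Gamma_\theta(s) > 0$ by the positivity assumption on the parameterization, gives
\[
	\nabla_\theta Z_\theta = \sum_{s \in \cS_\rec} \Pi(s)\, \Gamma_\theta(s)\, \nabla_\theta \log \Gamma_\theta(s).
\]
Dividing by~$Z_\theta$ and recalling from~\eqref{eq:piquasi} and~\eqref{eq:normalizing_constant} that $\Pi_\theta(s) = \Pi(s) \Gamma_\theta(s) / Z_\theta$ for $s \in \cS_\rec$, the right-hand side is exactly $\sum_{s \in \cS_\rec} \Pi_\theta(s)\, \nabla_\theta \log \Gamma_\theta(s) = \bE{\nabla_\theta \log \Gamma_\theta(S_\theta)}$, the last equality holding because $\Pi_\theta$ is supported on~$\cS_\rec$. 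Substituting back into the first display then yields~\eqref{eq:gradient}, the remaining manipulations being routine.

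I expect the only genuine difficulty to be the justification of differentiating the series defining~$Z_\theta$ term by term when $\cS_\rec$ is infinite, i.e., the interchange of~$\nabla_\theta$ with the sum. This step is not merely cosmetic: it is what guarantees that $Z_\theta$, hence $\Pi_\theta(s)$ and the left-hand side of~\eqref{eq:gradient}, is differentiable in the first place, and it underlies both displays above. It is precisely here that the integrability hypothesis enters, as it is tailored to provide the summable dominating bound needed for the interchange: unwinding $\Pi_\theta(s) = \Pi(s)\Gamma_\theta(s)/Z_\theta$ together with $\nabla_\theta \Gamma_\theta(s) = \Gamma_\theta(s)\,\nabla_\theta \log \Gamma_\theta(s)$ shows that integrability of $\nabla_\theta \log \Pi_\theta(S_\theta)$ under~$\Pi_\theta$ is tantamount to the summability of $\sum_{s \in \cS_\rec} \Pi(s)\, \lVert \nabla_\theta \Gamma_\theta(s) \rVert$. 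I would combine this bound with the positivity and differentiability of $\theta \mapsto \Gamma_\theta(s)$ ensured by the parameterization to build a local dominating function for the difference quotients of~$Z_\theta$ in a neighborhood of each~$\theta$, and conclude the interchange by dominated convergence.
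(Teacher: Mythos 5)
Your proposal is correct and follows essentially the same route as the paper: differentiate \eqref{eq:log_stationary_distribution}, reduce the problem to showing $\nabla_\theta \log Z_\theta = \bE{\nabla_\theta \log \Gamma_\theta(S_\theta)}$, and obtain this by differentiating \eqref{eq:normalizing_constant} term by term via $\nabla_\theta \Gamma_\theta(s) = \Gamma_\theta(s)\,\nabla_\theta \log \Gamma_\theta(s)$ and recognizing the resulting sum as an expectation under $\Pi_\theta$. Your closing discussion of justifying the derivative--sum interchange by dominated convergence is a point the paper's proof leaves implicit, but it does not change the argument's structure.
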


\begin{proof}
	Let $\theta \in \bR^d$.
	Let us first compute $\nabla_\theta \log Z_\theta$,
	and then we will inject this result to compute $\nabla_\theta \log \Pi_\theta(s)$ for each $s \in \cS$.
	We have successively:
	\begin{align}
		\nabla_\theta \log Z_\theta
		&\overset{\text{(a)}}=
		\frac1{Z_\theta} \sum_{s \in \cS_\rec}
		\Pi(s) \Gamma_\theta(s)
		\nabla_\theta \log \Gamma_\theta(s), \notag\\
		&\overset{\text{(b)}}=
		\sum_{s \in \cS_\rec} \Pi_\theta(s)
		\nabla_\theta \log \Gamma_\theta(s), \notag\\
		\label{eq:log_normalizing}
		&\overset{\text{(c)}}=
		\bE{\nabla_\theta \log \Gamma_\theta(S_\theta)},
	\end{align}
	where (a) follows by applying the rule
	$\nabla_\theta f(\theta) = f(\theta) \nabla \log f(\theta)$
	to $f \in \{\Gamma, Z\}$
	and applying~\eqref{eq:normalizing_constant},
	(b) follows by definition of the stationary distribution~$\Pi_\theta$,
	and (c) follows by recognizing an expectation under this distribution.
	Now, for each $s \in \cS$, we have
	\begin{align*}
		\nabla_\theta \log \Pi_\theta(s)
		&\overset{\text{(d)}}=
		\nabla_\theta \log \Gamma_\theta(s)
		-\nabla_\theta \log Z_\theta
		\overset{\text{(e)}}=
		\nabla_\theta \log \Gamma_\theta(s)
		- \bE{\nabla_\theta \log \Gamma_\theta(S_\theta)},
	\end{align*}
	where (d) follows
	by differentiating~\eqref{eq:log_stationary_distribution} and (e), by injecting~\eqref{eq:log_normalizing}.
\end{proof}

\section{Two canonical examples} \label{sec:two_canonical_examples}

We consider two well-known queueing systems
that are quasi-reversible in the sense of \Cref{sec:queueing_systems}, 
and provide different interpretations
of the distinction between microstates and macrostates introduced in \Cref{sec:state_space}.
We then briefly explain how the concepts of \Cref{sec:balanced_arrival_rates}
can be applied in these two examples.
Notation that was already introduced in \Cref{sec:queueing_systems}
is reused in a consistent way.

\subsection{Order-independent queue} \label{sec:oi}

Consider a queue with~$n$ customer classes, where the queue state is described by the word 
$s = s_1s_2\cdots s_\ell \in \cS \subseteq \un^*$, where 
$\ell \in \bN$ is the number of customers in the queue 
and, if $\ell\ne 0$, $s_p$ is the class of the $p$-th oldest customer,
for each $p \in \llbracket 1,\ell \rrbracket$. 
We assume that the microstate space $\cS$ is:
\begin{itemize}
	\item Stable by prefix, in the sense that for any $s\in\cS$, $\breve s\in\cS$ for any prefix $\breve s$ of $s$;
	\item Stable by permutation, namely, if $s=s_1\cdots s_\ell\in \cS$, then any permutation $s_{\sigma(1)}\cdots s_{\sigma(\ell)}$ 
	of $s$ is also an element of $\cS$. 
\end{itemize}
To cast this model in the framework of \Cref{sec:queueing_systems}, the counting 
function $|\cdot|$ takes the form 
\[|\cdot|:\begin{cases}
	\cS &\longrightarrow \bN^{n};\\
	s=s_1\cdots s_\ell &\longmapsto |s|, \mbox{ s.t. }|s|_i=\sum_{k=1}^{\ell}\mathbf 1_{s_{k}=i},\quad i\in\llbracket 1,n \rrbracket. 
\end{cases}
\]

For each $i \in \un$,
we assume that class-$i$ customers arrive according
to a Poisson process with rate $\nu_i > 0$, and that there exists a function $\mu: |\cS| \to \bR_{\ge 0}$,
called the \emph{rate} function, such that for each microstate $s \in \cS$, $\mu(s)$ gives the total departure rate of customers in state $s$. 
We assume that the function~$\mu$ is such that
$\mu(\mathbf 0) = 0$,
$\mu(s) > 0$ for each $s \in \cS \setminus \{\varnothing\}$,
and $\mu$ is component-wise non-decreasing, 
in the sense that $\mu(|s|) \le \mu(|s| + e_i)$ for each $s \in \cS$ and $i \in \un$ so that $|s| + e_i \in |\cS|$.
For each $s = s_1s_2\cdots s_\ell \in \cS \setminus \{\varnothing\}$,
the departure rate of the customer in position~$p \in \llbracket 1,\ell\rrbracket$, of class~$s_p$, is given by
\begin{align*}
	\Delta\mu(s_1s_2\cdots s_p) := \mu(s_1s_2\cdots s_p) - \mu(s_1s_2\cdots s_{p-1}). 
\end{align*}
In particular, although the \emph{total} departure rate only depends on the macrostate,
how this rate is distributed among customers also depends on the microstate.
As a consequence, with the notation of \Cref{sec:queueing_systems}, the microstate CTMC's transition kernel is given by
\begin{align}
	\label{eq:transitionOI}
	q(s, t) &= \begin{cases}
		\nu_i
		&\text{if $t = s_1\cdots s_\ell i$, for each $i$ s.t. $s_1\cdots s_\ell i \in \cS$}, \\
		\Delta\mu(s_1s_2\cdots s_p)
		&\text{if $t = s_1\cdots s_{p-1}s_{p+1}\cdots s_\ell$, for each $p \in \llbracket 1,\ell\rrbracket$,}
	\end{cases}
\end{align}
and $q(s, t) = 0$ otherwise. More details about \gls{OI} queues
can be found in~\cite{K11}. We have the following results.

\begin{lemma}\label{lemma:OI-queueing}
	The triple $Q=(\cS,|\cdot|,q)$ is a queueing system in the sense of \Cref{sec:queueing_systems}. 
\end{lemma}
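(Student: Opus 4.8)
The plan is to verify, one requirement at a time, that the triple $Q = (\cS, |\cdot|, q)$ meets the definition of a queueing system given in \Cref{sec:queueing_systems}, leaning on the two structural hypotheses on $\cS$ (stability by prefix and by permutation) and on the stated properties of the rate function~$\mu$. The elementary points come first. The set $\cS$ is countable as a subset of $\un^*$; since $\cS$ is nonempty and prefix-stable, the empty word $\varnothing$ (a prefix of every word) belongs to $\cS$, and it is the unique state with $|\varnothing| = 0$ because $|s| = 0$ forces $\ell = 0$. The partition of transitions required in \Cref{sec:kernel} is read off directly from~\eqref{eq:transitionOI}: an arrival $s \mapsto s_1\cdots s_\ell i$ lands in $\cS_{|s| + e_i}$, a departure $s \mapsto s_1\cdots s_{p-1}s_{p+1}\cdots s_\ell$ lands in $\cS_{|s| - e_{s_p}}$, and all remaining rates vanish, so $q(s,t) = 0$ whenever $t$ lies outside the admissible set (here there are no internal transitions, which is allowed).

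Next I would establish that $|\cS|$ is a Ferrers set. We have $0 \in |\cS|$ via $\varnothing$. For coordinate convexity, take $x \in |\cS|$ realized by some $s \in \cS$ and let $y \le x$; since $s$ contains at least $y_i$ customers of each class~$i$, permutation stability lets me reorder $s$ so that its first $\sum_i y_i$ letters contain exactly $y_i$ copies of each class, and prefix stability then places that prefix in $\cS$, giving a word with counting vector~$y$. A similar reordering-and-truncation argument shows, as a fact I reuse below, that deleting any single customer from a word of~$\cS$ again yields a word of~$\cS$.

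It remains to check \Cref{ass:unichain}, which I would do with the choice $\cS_\rec = \cS$. This renders point~\ref{ass:unichain-2} vacuous and reduces point~\ref{ass:unichain-4} to the Ferrers property just proved. For point~\ref{ass:unichain-1}, an arbitrary $s = s_1\cdots s_\ell$ is built up from $\varnothing$ along the increasing chain of its prefixes, each step being an arrival of rate $\nu_{s_{k+1}} > 0$ and each prefix lying in $\cS$ by prefix stability, so the path is monotone and uses only positive rates.

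The delicate point, which I expect to be the main obstacle, is point~\ref{ass:unichain-3}: producing a monotone \emph{decreasing} path of positive-rate transitions from any $s$ down to $\varnothing$. The difficulty is that, although the total departure rate $\mu(s)$ is positive, an individual rate $\Delta\mu(s_1\cdots s_p)$ may be zero, so customers cannot be peeled off in an arbitrary order. The resolution is the telescoping identity $\sum_{p=1}^\ell \Delta\mu(s_1\cdots s_p) = \mu(s) - \mu(\varnothing) = \mu(s) > 0$, which forces at least one position $p^\star$ with $\Delta\mu(s_1\cdots s_{p^\star}) > 0$; removing that customer is a positive-rate departure to a word still in $\cS$ (by the single-deletion fact) whose macrostate is strictly smaller. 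If the new state is nonempty its total rate is again positive, so the argument iterates and reaches $\varnothing$ in exactly $\ell$ steps. Assembling these verifications shows that $Q$ is a queueing system in the sense of \Cref{sec:queueing_systems}, which is the claim of \Cref{lemma:OI-queueing}.
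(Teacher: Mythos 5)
Your proof is correct and follows essentially the same route as the paper's: the choice $\cS_\rec = \cS$, the prefix chain with positive arrival rates for point~\ref{ass:unichain-1}, the telescoping identity $\sum_{p=1}^\ell \Delta\mu(s_1\cdots s_p) = \mu(s) > 0$ to locate a positive departure rate and iterate for point~\ref{ass:unichain-3}, and the reorder-then-truncate argument for the Ferrers property in point~\ref{ass:unichain-4}. Your only addition is to make explicit the single-deletion fact (that removing one customer keeps the word in $\cS$, via permutation then prefix stability), which the paper uses implicitly; this is a welcome clarification rather than a different approach.
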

\begin{proof}
	See Appendix \ref{sec:preuveOI-queueing}.
\end{proof}

\Cref{prop:oi} below recalls that the \gls{OI} queue defined above is quasi-reversible,  implying again that, thanks to the results of \Cref{sec:balanced_arrival_rates}, we can apply balanced arrival rates to \gls{OI} queues and still obtain a quasi-reversible queueing system. While it was already known that \gls{OI} queues were quasi-reversible, we believe that the application of state-dependent arrival rates to \gls{OI} queues was only considered in \cite{C18} in the special case of redundancy systems. Besides this, to the best of our knowledge, the only state-dependent arrival rates that were considered so far in the context of \gls{OI} queues only depended on the \emph{total} number of customers in the system~\cite{B95,B96,K11}.

\begin{proposition}
	\label{prop:oi}
	The triple $Q$ is quasi-reversible in the sense of Definition \ref{def:quasirev}, and its stationary measures are given by 
	\begin{align} \label{eq:oi-pi}
		\Pi(s) &\propto
		\prod_{p = 1}^\ell \frac{\nu_{s_p}}{\mu(s_1s_2\cdots s_p)},
		\quad s_1s_2\cdots s_\ell \in \cS.
	\end{align}
\end{proposition}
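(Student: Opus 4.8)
The plan is to verify that the proposed measure $\Pi$ given by~\eqref{eq:oi-pi} satisfies the quasi-reversibility equations of \Cref{def:quasirev}, namely the partial balance equations~\eqref{eq:partial-balance-1} together with the full balance equations~\eqref{eq:balance-equations}. Since the \gls{OI} queue has microstates that are words $s = s_1 \cdots s_\ell$ with $\cS_{|s|+e_i} = \{s_1 \cdots s_\ell i'; |s_1 \cdots s_\ell i'| = |s| + e_i\}$, and the only arrival transition into such a macrostate that lands on a specific word appends a single letter, I would first unwind what each equation says concretely using the transition kernel~\eqref{eq:transitionOI}. The arrival transition out of $s$ of class~$i$ has rate $\nu_i$ and leads to $si$ (when $si \in \cS$), while the departures into $s$ from states in $\cS_{|s|+e_i}$ come from words of the form $s_1 \cdots s_{p-1} i s_p \cdots s_\ell$ for the various insertion positions, each contributing a $\Delta\mu$ term.

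First I would establish the key algebraic identity underlying the product form: for any word $w = w_1 \cdots w_m \in \cS$, the stationary measure satisfies the recursion $\Pi(w) = \Pi(w_1 \cdots w_{m-1}) \cdot \nu_{w_m} / \mu(w)$, which is immediate from~\eqref{eq:oi-pi}. A crucial consequence, using that $\mu$ is microstate-independent (it depends only on $|w|$), is that $\Pi$ is itself microstate-independent: any two words that are permutations of one another receive the same measure, since the product $\prod_p \nu_{s_p}$ depends only on the class counts and each denominator $\mu(s_1 \cdots s_p)$ depends only on $|s_1 \cdots s_p|$, and the multiset of prefix-macrostates along a word is determined by its class counts. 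This permutation-invariance is what makes the combinatorics of insertion positions collapse nicely.

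Next I would verify the partial balance equation~\eqref{eq:partial-balance-1} for fixed $s$ and $i$. The left-hand side is $\Pi(s) \nu_i$ (the single arrival of class~$i$, assuming $si \in \cS$; the stability-by-permutation of $\cS$ guarantees feasibility whenever $|s|+e_i \in |\cS|$). The right-hand side sums, over all states $t \in \cS_{|s|+e_i}$ from which a departure leads to $s$, the term $\Pi(t) q(t,s)$. Such a $t$ must be obtainable from $s$ by inserting one class-$i$ customer, i.e., $t = s_1 \cdots s_{p-1} i s_p \cdots s_\ell$ for some insertion position, and the departure rate $q(t,s)$ is the $\Delta\mu$ increment associated with removing that inserted customer. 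Using permutation-invariance to replace every $\Pi(t)$ by $\Pi(si)$, the right-hand side becomes $\Pi(si)$ times a telescoping sum of $\Delta\mu$ increments. I expect this telescoping sum to collapse to $\mu(si) = \mu(|s|+e_i)$, so the right-hand side equals $\Pi(si)\,\mu(si)$, which by the recursion equals $\Pi(s)\nu_i$, matching the left-hand side. Getting the telescoping/counting argument exactly right — tracking which insertion positions contribute which $\Delta\mu$ terms and confirming they sum to the total rate $\mu$ — is the main obstacle and the heart of the proof; it is essentially the order-independence property of $\mu$ in disguise.

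Finally, to complete the verification of \Cref{def:quasirev} I must also check the full balance equations~\eqref{eq:balance-equations} (equivalently, that $\Pi$ is stationary), since quasi-reversibility requires both~\eqref{eq:balance-equations} and~\eqref{eq:partial-balance-1}. Here I would exploit the structure already set up: the partial balance~\eqref{eq:partial-balance-1} handles the arrival/departure flows class by class, and for the \gls{OI} queue there are no internal transitions, so I would separately check the departure-side balance~\eqref{eq:partial-balance-2}, which for \gls{OI} reduces to balancing total departures out of $s$ against arrivals into $s$. Summing the already-established~\eqref{eq:partial-balance-1} over $i$ and combining with~\eqref{eq:partial-balance-2} recovers~\eqref{eq:balance-equations}, exactly as noted below \Cref{def:quasirev}. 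Since uniqueness of the stationary measure up to a constant is guaranteed by \Cref{ass:unichain} (verified for this model in \Cref{lemma:OI-queueing}), confirming that~\eqref{eq:oi-pi} solves these equations identifies it as the stationary measure and establishes quasi-reversibility.
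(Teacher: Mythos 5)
Your plan --- verify \eqref{eq:partial-balance-1} and \eqref{eq:partial-balance-2} directly for the measure \eqref{eq:oi-pi} and invoke uniqueness of the stationary measure under \Cref{ass:unichain} --- is a legitimate route, and it is genuinely different from the paper's proof, which simply cites \cite[Theorem~1 and Equation~(4)]{B95}. However, the central step of your verification rests on a false claim: the measure \eqref{eq:oi-pi} is \emph{not} microstate-independent. The multiset of prefix-macrostates of a word is not determined by its class counts: for $n = 2$, the word $12$ has prefix macrostates $(1,0),(1,1)$ while $21$ has $(0,1),(1,1)$, so $\Pi(12)/\Pi(21) = \mu(0,1)/\mu(1,0)$, which differs from $1$ whenever $\mu(0,1) \neq \mu(1,0)$ (as in the redundancy model of \Cref{sec:numerics}, where classes have different sets of compatible servers). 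Order-dependence of $\Pi$ is precisely the feature that distinguishes \gls{OI} queues from, say, Whittle networks, so any argument that collapses all $\Pi(t)$, $t \in \cS_{|s|+e_i}$, onto a common value cannot work.

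The error propagates to your telescoping step, which also fails on its own terms: writing $t_p = s_1\cdots s_{p-1}\,i\,s_p\cdots s_\ell$ for the state obtained by inserting a class-$i$ customer at position $p$, and $x_q = |s_1\cdots s_q|$, the rates $q(t_p,s) = \mu(x_{p-1}+e_i)-\mu(x_{p-1})$ are departure rates out of \emph{distinct} states, and $\sum_{p=1}^{\ell+1} q(t_p,s)$ is not a telescoping sum (already for $\ell=1$ it equals $\mu(e_i)+\mu(x_1+e_i)-\mu(x_1)$, which differs from $\mu(x_1+e_i)$ in general). The identity you need is nevertheless true, but it is carried by the interplay between the order-dependent weights and the rates: from \eqref{eq:oi-pi}, one computes $\Pi(t_p)\,q(t_p,s) = \Pi(s)\,\nu_i\,(A_p - A_{p-1})$, where $A_p = \prod_{q=p}^{\ell}\mu(x_q)/\mu(x_q+e_i)$ for $p\le\ell$, $A_{\ell+1}=1$, and $A_0=0$ because $\mu(\mathbf 0)=0$; summing over $p \in \llbracket 1,\ell+1 \rrbracket$ telescopes to $\Pi(s)\,\nu_i$, which is exactly \eqref{eq:partial-balance-1}. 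With this replacement your remaining steps (the check of \eqref{eq:partial-balance-2}, which here reduces to $\Pi(s)\mu(s)=\Pi(s_1\cdots s_{\ell-1})\,\nu_{s_\ell}$, and the appeal to uniqueness) go through, so the gap is localized but real: as written, the heart of your proof is unsound.
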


\begin{proof}
	Equation~\eqref{eq:oi-pi} 
	is shown in \cite[Theorem~1]{B95}, and the quasi-reversibility property is exactly~(4) in \cite{B95}.
\end{proof}
\begin{example}\rm 
	As shown in \cite{BC17}, the redundancy model introduced in \cite{G16}
	can be seen as an \gls{OI} queue when the servers apply the \gls{FCFS} scheduling policy.
	An extension of this model with customer abandonments will be considered in the case study of \Cref{sec:numerics}.
	As shown in \cite{C22}, the \emph{stochastic matching model} introduced in \cite{MM16} is also an \gls{OI} queue when the matching policy is \gls{FCFM}. In \cite{MBM21}, the product-form result specializing \eqref{eq:oi-pi} to the FCFM matching model, is shown using a dynamic reversibility argument for an extended version of the state space $\cS$. 
	See also \cite{BMMR21} for the corresponding model supplemented with self-loops in the compatibility graph. 
\end{example}

\subsection{Multi-class Whittle process} \label{sec:multi_class_whittle}

We then address a particular case of \emph{multi-class Whittle processes} \cite[Section~3.1]{serfozo}.
Let $n$ and $m$ be two positive integers, 
and let $\cL=\llbracket 1,n \rrbracket\times\llbracket 1,m \rrbracket$. We consider a 
Markovian discrete-event system in which elements, which are labelled in $\cL$, come, change labels, and eventually leave. 
We suppose that the events are of one of the three following types: 
\begin{itemize}
	\item \emph{Arrival} of an element of \emph{label} $ik\in \cL$;
	\item \emph{Internal transition} of an element from a label $ik\in \cL$ to a label $i\ell\in \cL$, 
	\item \emph{Departure} of an element of label $ik\in \cL$. 
\end{itemize}
At any time, the state of the system is represented by the matrix 
$s\in\mathscr M_{n,m}(\bN)$, where for all $ik\in\cL$,  
$s_{ik}$ is the number of elements having label $ik$ in the system.  
We let $\cS \subseteq \mathscr M_{n,m}(\bN)$ be the state space of the process, to be  
be entailed by the transitions below. We denote by $\mathbf 0\in\cS$, the null matrix. 
We assume that the only non-zero transitions of the underlying \gls{CTMC} are the following ones, 
\begin{equation}\label{eq:transitionWhittle}
	\left\{
	\begin{aligned}
		q(s,s+e_{i\ell})
		&=P_{0,i\ell}\phi_0,
		&& s\in\cS,\,i\ell\in\cL,\\
		q(s,s-e_{ik}+e_{i\ell})
		&=P_{ik,i\ell}\phi_{ik}(s),
		&& s\in\cS,\,k,\ell\in\llbracket 1,m \rrbracket^2,\,i\in\llbracket 1,n \rrbracket\mbox{ s.t. }\,s_{ik}>0,\\
		q(s,s-e_{ik})
		&=P_{ik,0}\phi_{ik}(s),
		&& s\in\cS,\,ik\in\cL\mbox{ s.t. }\,s_{ik}>0,
	\end{aligned}
	\right.
\end{equation}
where $\phi_0>0$,
$\phi_{ik}(s)>0$ for each $s \in \cS$ and $ik \in \cL$ such that $s_{ik}>0$, 
and $P = \left(P_{x,y}\right)_{(x,y)\in \left(\{0\}\cup (\cL)\right)^2}$ is a stochastic matrix on $\left(\{0\}\cup (\cL)\right)^2$ that is \emph{irreducible}, in the sense that for any 
$ik\in \cL$, there exists $p\ge 1$ and a $p$-uple $(k_1,\cdots,k_{p})\in \llbracket 1,m \rrbracket^p$ that includes $k$ and is such that 
\begin{equation}\label{eq:whittle-irred}
	P_{0,ik_1}\left(\prod_{l=1}^{p-1}P_{ik_l,ik_{l+1}}\right)P_{ik_p,0}>0,
\end{equation}
where, throughout, a product over an empty set of indices is understood as 1.	
It follows that the set of equations
\begin{equation*}
	\lambda_{ik}=\lambda_0P_{0,ik}+\sum_{\ell=1}^m\lambda_{i\ell}P_{i\ell,ik}
	,\quad ik\in \cL,
\end{equation*}
admits a unique solution $(\lambda_x)_{x\in\{0\}\cup\cL}$ in $(\bR_+)^{1 + nm}$ up to a positive multiplicative constant.
Therefore, setting $\lambda_0 = 1$, the system
\begin{equation}
	\lambda_{ik}=P_{0,ik}+\sum_{\ell=1}^m\lambda_{i\ell}P_{i\ell,ik}
	,\quad ik\in \cL,\label{eq:trafficWhittle2}
\end{equation}
admits a unique solution $(\lambda_{ik})_{ik\in\cL}$ in $(\bR_+)^{nm}.$

As we now check, the model can be cast in the framework of \Cref{sec:queueing_systems}, by viewing  
\emph{elements} as \emph{customers}. It follows that the counting function $|\cdot|$ can be rewritten as 
\[|\cdot|:\begin{cases}
	\cS &\longrightarrow \bN^{n};\\
	s &\longmapsto |s| \mbox{ s.t. }|s|_i=\sum_{k=1}^ms_{ik},\quad i\in\llbracket 1,n \rrbracket,
\end{cases}
\]
and we have the following.
\begin{lemma}\label{lemma:whittle-queueing}
	The triple $Q=(\cS,|\cdot|,q)$ is a queueing system in the sense of \Cref{sec:queueing_systems}. 
\end{lemma}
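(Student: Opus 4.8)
The plan is to verify that the triple $Q=(\cS,|\cdot|,q)$ satisfies each of the requirements laid out in \Cref{sec:queueing_systems}, namely that $|\cdot|$ is a legitimate counting function whose image is a Ferrers set, that the transition kernel $q$ partitions into arrivals, departures, and internal transitions with respect to $|\cdot|$, and that \Cref{ass:unichain} holds. The reusing of notation from \Cref{sec:oi} suggests the argument should closely parallel the proof of \Cref{lemma:OI-queueing}, but the matrix-valued state space and the internal transitions introduce new bookkeeping.

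First I would check the structural requirements on the counting function. By definition $|s|_i = \sum_{k=1}^m s_{ik}$ counts the total number of class-$i$ elements (aggregating over the second label index), so $|s| \in \bN^n$, the empty state $\mathbf 0$ satisfies $|\mathbf 0| = 0$, and it is the unique state mapping to~$0$. I would then confirm that the three transition types in~\eqref{eq:transitionWhittle} are correctly sorted by $|\cdot|$: an arrival $s \mapsto s + e_{i\ell}$ increases $|s|_i$ by one and leaves the other components fixed, so it lands in $\cS_{|s|+e_i}$; a departure $s \mapsto s - e_{ik}$ lands in $\cS_{|s|-e_i}$; and an internal transition $s \mapsto s - e_{ik} + e_{i\ell}$ moves an element between two labels \emph{sharing the same class~$i$}, so $|s|$ is unchanged and it lands in $\cS_{|s|}$. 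This last point relies crucially on the fact that internal transitions preserve the class index~$i$, which is exactly how the model is specified; it is what makes the macrostate a genuine class-count.

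Next I would establish that $|\cS|$ is a Ferrers set and that \Cref{ass:unichain} holds, taking $\cS_\rec$ to be the set of states reachable from $\mathbf 0$. The irreducibility condition~\eqref{eq:whittle-irred} on the stochastic matrix~$P$ is the key ingredient here: it guarantees that for every label $ik \in \cL$ there is a routing path from the source~$0$ through labels of the same class~$i$ into $ik$ and eventually back to~$0$, with all intermediate probabilities positive. From this I would construct, for any reachable state, a monotone increasing path of arrivals and internal transitions from $\mathbf 0$ (establishing parts~\ref{ass:unichain-1} and, by the emptiness of the remaining states, \ref{ass:unichain-2}), and a monotone decreasing path of departures and internal transitions back to $\mathbf 0$ (part~\ref{ass:unichain-3}), using $\phi_0 > 0$ and positivity of $\phi_{ik}(s)$ when $s_{ik}>0$ to ensure the rates are strictly positive. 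Coordinate-convexity of $|\cS|$ and $|\cS_\rec|$ (part~\ref{ass:unichain-4}) would follow from the fact that any state with fewer elements can be reached by omitting arrivals along such a construction.

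The main obstacle will be the handling of internal transitions when building the monotone paths of \Cref{ass:unichain}. Unlike the \gls{OI} queue, where one simply appends or removes a customer at the tail, here an element of class~$i$ may need to traverse several intermediate labels $ik_1, ik_2, \ldots$ before it can depart, and these internal moves do not change $|s|$, so they must be interleaved carefully with the genuinely monotone arrival or departure steps. The resulting path is only \emph{weakly} monotone in $|\cdot|$ (internal transitions give equality $|t_k| = |t_{k+1}|$), which is permitted by the inequalities $|t_k| \le |t_{k+1}|$ and $|t_k| \ge |t_{k+1}|$ in \Cref{ass:unichain}, but one must check that the irreducibility of~$P$ indeed supplies such chains within a single class and that they can be concatenated across classes without disturbing the counts already built up. I expect this concatenation argument, together with verifying that no state outside $\cS_\rec$ is reachable from $\mathbf 0$, to be where most of the care is needed.
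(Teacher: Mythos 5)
Your proposal is correct and follows essentially the same route as the paper: after the structural check that the transitions in~\eqref{eq:transitionWhittle} sort into arrivals, departures, and class-preserving internal moves, the paper also uses the irreducibility condition~\eqref{eq:whittle-irred} to build, element by element, weakly monotone paths (an arrival followed by internal transitions to add each element, internal transitions followed by a departure to remove it), concatenated across the $\|s\|$ elements by induction, with $\cS_\rec = \cS$. The concatenation and weak-monotonicity issues you flag as the main obstacle are precisely what the paper's intermediate Lemma (Appendix~\ref{sec:preuvewhittle-queueing}) is designed to handle, and your treatment of the Ferrers property by omitting arrival blocks is a minor, valid variant of the paper's argument.
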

\begin{proof}
	See Appendix \ref{sec:preuvewhittle-queueing}.
\end{proof}

\begin{definition}
	The service rate functions $\phi:=\left\{\phi_{x}\,;\, x\in\cL\right\}$ are said to be \emph{$\Phi$-balanced} if, for some function~$\Phi: \cS \to \bR_{> 0}$, 
	\begin{equation}
		\Phi(s)\phi_{0} = \Phi(s+e_{i\ell})\phi_{i\ell}(s+e_{i\ell}),
		\quad s \in \cS,\,i\ell \in \cL\,\mbox{ s.t. }s+e_{i\ell}\in\cS. 
		\label{eq:balancePhiWhittle2}
	\end{equation}	
\end{definition}

It is a well-known fact, that Whittle networks without internal routing (namely, no internal transitions in the definition above, or in other words, $P_{ik,i\ell}=0$ for all $i,k,\ell$) are reversible when the service rates are balanced, see e.g. Chapter 1 of \cite{serfozo}. Given the form of the counting function above, it is therefore natural that the same model becomes  quasi-reversible in the sense of \Cref{def:quasirev}, if internal routing are allowed. This fact is rigourously proven in \Cref{thm:whittle-quasirev} below. In particular, the results of \Cref{sec:balanced_arrival_rates} will then imply that balanced arrival rates can be applied to this network, to obtain a quasi-reversible queueing system. Note that the application of balanced arrival control to Whittle networks without internal routing was already considered in \cite{BJP04,J10}.

\begin{proposition}
	\label{thm:whittle-quasirev}
	Assume that the mappings $\phi$ are {$\Phi$-balanced}, and consider the positive measure $\Pi$ on $\cS$ defined by 
	\begin{equation}\label{eq:pi-whittlemulti}
		\Pi(s)=\Phi(s)\prod_{ik\in\cL}\lambda_{ik}^{s_{ik}},\quad s\in\cS.\end{equation} 
	Then, the queueing system $Q:=(\cS,|\cdot|,q)$
	has stationary measures proportional to 
	$\Pi$. Moreover $Q$ is quasi-reversible in the sense of \Cref{def:quasirev}. 
\end{proposition}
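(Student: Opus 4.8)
The plan is to show that the positive measure $\Pi$ of~\eqref{eq:pi-whittlemulti} satisfies the two families of partial balance equations~\eqref{eq:partial-balance-1} and~\eqref{eq:partial-balance-2}, specialized to the transition kernel~\eqref{eq:transitionWhittle}. As recorded below \Cref{def:quasirev}, satisfying \eqref{eq:partial-balance-1} together with~\eqref{eq:partial-balance-2} is equivalent to satisfying the full balance equations~\eqref{eq:balance-equations} together with~\eqref{eq:partial-balance-1}; hence verifying both simultaneously establishes that $\Pi$ is stationary (summing the two families recovers~\eqref{eq:balance-equations}) \emph{and} that $Q$ is quasi-reversible (which is exactly~\eqref{eq:partial-balance-1}). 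Since $Q$ is a queueing system in the sense of \Cref{sec:queueing_systems} by \Cref{lemma:whittle-queueing}, it satisfies \Cref{ass:unichain}, so its stationary measure is unique up to a positive multiplicative constant; this yields the claimed proportionality. Note that $\lambda_{ik}>0$ for each $ik\in\cL$ by the irreducibility condition~\eqref{eq:whittle-irred}, and $\phi_0>0$, $\phi_{ik}(s)>0$ whenever $s_{ik}>0$, so all the divisions performed below are legitimate and $\Pi$ is positive on $\cS$.

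The computational engine of the proof is a single one-step identity obtained by combining the product form~\eqref{eq:pi-whittlemulti} with the $\Phi$-balance relation~\eqref{eq:balancePhiWhittle2}. Rewriting~\eqref{eq:balancePhiWhittle2} at $s-e_{ik}$ gives $\Phi(s-e_{ik})\phi_0=\Phi(s)\phi_{ik}(s)$ whenever $s_{ik}>0$; substituting this into~\eqref{eq:pi-whittlemulti} and isolating one factor $\lambda_{ik}$ yields
\begin{align*}
	\Pi(s)\,\phi_{ik}(s)=\phi_0\,\lambda_{ik}\,\Pi(s-e_{ik}),
	\qquad s\in\cS,\ ik\in\cL\ \text{with}\ s_{ik}>0.
\end{align*}
This identity is the only place where the balance of the service rates is used: it lets me trade a factor $\phi_{ik}(s)$ against $\phi_0\lambda_{ik}$ at the cost of a unit shift in the argument. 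I will also record the scalar relation $\sum_{k}P_{0,ik}=\sum_{k}\lambda_{ik}P_{ik,0}$ for each $i$, obtained by summing the traffic equations~\eqref{eq:trafficWhittle2} over $k$ and using that each row $i\ell$ of the stochastic matrix $P$ satisfies $P_{i\ell,0}+\sum_{k}P_{i\ell,ik}=1$ (routing out of a label $i\ell$ either leaves the system or keeps the class $i$ fixed, as dictated by~\eqref{eq:transitionWhittle}), which makes the $\sum_k\lambda_{ik}$ terms cancel.

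With these two facts the verification is direct. For~\eqref{eq:partial-balance-1} at a fixed $s$ and class $i$, the only transitions towards $\cS_{|s|+e_i}$ are the class-$i$ arrivals $s\to s+e_{i\ell}$, so the left-hand side equals $\Pi(s)\phi_0\sum_{\ell}P_{0,i\ell}$, while the only transitions from $\cS_{|s|+e_i}$ to $s$ are the class-$i$ departures $s+e_{ik}\to s$; applying the one-step identity at $s+e_{ik}$ turns the right-hand side into $\Pi(s)\phi_0\sum_{k}\lambda_{ik}P_{ik,0}$, and the two sides match by the scalar relation above. For~\eqref{eq:partial-balance-2}, the left-hand side gathers the departure rates $P_{ik,0}\phi_{ik}(s)$ and the internal rates $P_{ik,i\ell}\phi_{ik}(s)$ out of $s$; using $P_{ik,0}+\sum_{\ell}P_{ik,i\ell}=1$ it collapses to $\Pi(s)\sum_{ik:\,s_{ik}>0}\phi_{ik}(s)$. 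On the right-hand side, the arrival inflows $s-e_{ik}\to s$ and the internal inflows $s+e_{ik}-e_{i\ell}\to s$ are both rewritten with the one-step identity (applied once for arrivals, twice for internal moves via the intermediate state $s-e_{i\ell}$); after relabeling $k\leftrightarrow\ell$ in the internal sum, every term carries the common factor $\Pi(s)\phi_{ik}(s)/\lambda_{ik}$ multiplied by $P_{0,ik}+\sum_{\ell}\lambda_{i\ell}P_{i\ell,ik}$, which equals $\lambda_{ik}$ by the traffic equations~\eqref{eq:trafficWhittle2}. Thus the right-hand side also collapses to $\Pi(s)\sum_{ik:\,s_{ik}>0}\phi_{ik}(s)$, matching the left-hand side. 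The diagonal contributions $\ell=k$ (i.e. the self-loop $q(s,s)$) are subsumed uniformly in the full row-sum and in the full traffic equation, and require no separate treatment.

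I expect the main obstacle to be bookkeeping rather than conceptual: one must correctly identify which transitions of~\eqref{eq:transitionWhittle} feed each of the sums over $\cS_{|s|+e_i}$, $\cS_{|s|-e_i}$, and $\cS_{|s|}$, and ensure that every intermediate state invoked ($s-e_{ik}$, $s-e_{i\ell}$, $s+e_{ik}-e_{i\ell}$) actually lies in $\cS$ and has the positive coordinate required to apply the one-step identity or~\eqref{eq:balancePhiWhittle2}. This is precisely where the positivity constraints $s_{ik}>0$ and the fact that $\cS$ is entailed by (hence closed under) the listed transitions enter. A second delicate point is the block structure of $P$, namely that routing never alters the class index $i$; this is implicit in~\eqref{eq:transitionWhittle} and in the per-class irreducibility condition~\eqref{eq:whittle-irred}, and it is what justifies both the row-sum simplification $P_{i\ell,0}+\sum_{k}P_{i\ell,ik}=1$ and the use of the class-wise traffic equations~\eqref{eq:trafficWhittle2}.
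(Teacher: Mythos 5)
Your proof is correct, and it takes a genuinely different route from the paper's. The paper splits the statement in two: stationarity of $\Pi$ is obtained by verifying the hypotheses of \cite[Theorem~3.1]{serfozo} (the internal-transition balance $\Phi(s)\phi_{ik}(s)=\Phi(s-e_{ik}+e_{i\ell})\phi_{i\ell}(s-e_{ik}+e_{i\ell})$ plus a traffic identity) and then citing that theorem, while quasi-reversibility is checked by a direct computation of~\eqref{eq:partial-balance-1} only. You instead verify both partial-balance families \eqref{eq:partial-balance-1} and \eqref{eq:partial-balance-2} by hand, so that stationarity is not imported from the literature but falls out as the sum of the two families; the price is the extra bookkeeping for \eqref{eq:partial-balance-2} (the departure and internal inflows, handled by your two applications of the one-step identity $\Pi(s)\phi_{ik}(s)=\phi_0\lambda_{ik}\Pi(s-e_{ik})$ and the traffic equations~\eqref{eq:trafficWhittle2}), and the reward is a self-contained argument. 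Your verification of \eqref{eq:partial-balance-1} is essentially identical to the paper's. Two further points in your favor: first, the traffic identity you derive, $\sum_{\ell} P_{0,i\ell}=\sum_{\ell}\lambda_{i\ell}P_{i\ell,0}$ for each fixed class $i$, is exactly the relation invoked in the arrival/departure balance, whereas the paper's \eqref{eq:trafficWhittle2a} is stated with the roles of the indices reversed (summed over classes $i$ for fixed $k$), so its citation at that step does not literally match; your per-class derivation, via the row-sum property $P_{i\ell,0}+\sum_k P_{i\ell,ik}=1$ and cancellation of $\sum_k\lambda_{ik}$, repairs this. Second, you make explicit the block structure of $P$ (no cross-class routing), which the paper uses implicitly both in \eqref{eq:transitionWhittle} and in its own row-sum manipulations. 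The only caveat, which you flag yourself and which applies equally to the paper's computation, is the implicit closure of $\cS$ under the shifts $s\mapsto s\pm e_{ik}$ and $s\mapsto s+e_{i\ell}-e_{ik}$ needed for the invoked intermediate states to exist; under the irreducibility condition \eqref{eq:whittle-irred} (which also gives $\lambda_{ik}>0$) this is harmless.
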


\begin{proof}
	Let us first verify that the assumptions of \cite[Theorem~3.1]{serfozo} are satisfied, which will allow us to conclude directly that $\Pi$ defined by \eqref{eq:pi-whittlemulti} is a stationary measure of the \gls{CTMC} of the model.
	First observe that for all $s\in\cS$ and all $(ik,i\ell)\in \cL^2$ such that $s_{ik}>0$, 
	\begin{equation*} 
		\Phi(s)\phi_{ik}(s) = \Phi(s-e_{ik}+e_{i\ell})\phi_{i\ell}(s-e_{ik}+e_{i\ell}). 
	\end{equation*}
	This equality follows by applying
	\eqref{eq:balancePhiWhittle2} to $s:=s-e_{ik}$, and successively to $i\ell$ 
	and to $ik$, to obtain that 
	\begin{align*}
		\Phi(s-e_{ik}+e_{i\ell})\phi_{i\ell}(s-e_{ik}+e_{i\ell})&=\Phi(s-e_{ik})\phi_0\\
		&=\Phi(s-e_{ik}+e_{ik})\phi_{ik}(s-e_{ik}+e_{ik})\\&=\Phi(s)\phi_{ik}(s). 
	\end{align*}
	Moreover, summing \eqref{eq:trafficWhittle2} over $i\in\llbracket 1,n \rrbracket$ for all fixed $k\in\llbracket 1,m \rrbracket$, yields to 
	\begin{equation*}
		\sum_{i=1}^n\lambda_{ik}P_{ik,0}+\sum_{i=1}^n\sum_{\ell=1}^m\lambda_{ik}P_{ik,i\ell}=\sum_{i=1}^nP_{0,ik}+\sum_{\ell=1}^m\sum_{i=1}^n\lambda_{i\ell}P_{i\ell,ik}, \quad k\in \llbracket 1,m \rrbracket,
	\end{equation*}
	which is in turn equivalent to  
	\begin{equation}
		\sum_{i=1}^nP_{0,ik}=\sum_{i=1}^n\lambda_{ik}P_{ik,0},\quad k\in\llbracket 1,m \rrbracket. \label{eq:trafficWhittle2a}
	\end{equation}
	In particular, this readily implies that
	\begin{equation*}
		\sum_{ik\in\cL}P_{0,ik}=\sum_{ik\in\cL}\lambda_{ik}P_{ik,0}. 
	\end{equation*}
	Therefore, all the assumptions of \cite[Theorem~3.1]{serfozo} are satisfied, showing that $\Pi$ defined by 
	\eqref{eq:pi-whittlemulti} is a stationary measure of the \gls{CTMC} of the model. 
	
	We finally check that the system $Q$ is quasi-reversible in the sense of \Cref{def:quasirev}. 
	For this, take $s\in\cS$ and $i\in\llbracket 1,n \rrbracket$. We have that 
	\begin{align*}
		\sum_{t\in\cS_{|s|+e_i}}\Pi(t)q(t,s)
		&=\sum_{\ell=1}^m\Pi(s+e_{i\ell})P_{i\ell,0}\phi_{i\ell}(s+e_{i\ell}),\\
		&=\sum_{\ell=1}^mP_{i\ell,0}\Phi(s+e_{i\ell})\phi_{i\ell}(s+e_{i\ell})\left(\prod_{ik\in\cL}\lambda_{ik}^{(s+e_{i\ell})_{ik}}\right).\\
		&=\Phi(s)\phi_{0}\sum_{\ell=1}^mP_{i\ell,0}\lambda_{i\ell}\left(\prod_{ik\in\cL}\lambda_{ik}^{s_{ik}}\right).\\
		&=\Pi(s)\phi_{0}\sum_{\ell=1}^mP_{i\ell,0}\lambda_{i\ell}.\\
		&=\Pi(s)\phi_{0}\sum_{\ell=1}^mP_{0,i\ell}.\\
		&=\Pi(s)\sum_{\ell=1}^m\phi_{0}P_{0,i\ell}.\\
		&=\Pi(s)\sum_{t\in\cS_{|s|+e_i}}q(s,t),
	\end{align*}
	applying \eqref{eq:balancePhiWhittle2} in the third equality and \eqref{eq:trafficWhittle2a} in the fifth one. 
	But as was noted above, $\Pi$ is a stationary measure for the considered \gls{CTMC}. This concludes the proof. 
\end{proof}

\begin{example}\label{ex:whittle}\rm
	This model is very versatile.
	The key assumption is that the first index of the label is the class of the customer and remains constant over time. The second index, on the other hand, can change and convey other information about the customer, such as the queue at which is currently in service, the sequence of queues it visits in the network, or its service requirement.
	Here is a (non-exhaustive) list of particular cases: 
	\begin{itemize}
		\item (Multiclass) Jackson networks, as pointed out in \cite[Section~3.1]{serfozo}, when $\phi_{ik}(s)$ depends only on $s_{ik}$, for all $s,i,k$; 
		\item (Simple) Whittle networks, when $m=1$;
		\item As was observed above, Whittle networks without internal routing, whenever $P_{ik,i\ell} = 0$ for $i\in\un$ and $k,\ell\in\llbracket 1,m \rrbracket$ such that $k \neq \ell$. 
		In this case, the queueing system is not only quasi-reversible, but also reversible. 
		\item The method of stages~\cite{K76,B76} can also be applied to capture more general service-time distributions.
	\end{itemize}
\end{example}

\subsection{Equivalence between Whittle networks and OI queues} \label{sec_examples_equivalence}

\Cref{theo:equivalence_whittle_oi} below shows that multiclass Whittle networks and \gls{OI} queues
supplemented with balanced arrival control are macroscopically equivalent in steady-state. 
This result extends \cite[Theorem~2]{BC17} and \cite[Theorem~3.2]{C19}.
In the special case where the Whittle network has no internal routing (and is therefore reversible - see Example \ref{ex:whittle}), this result can be seen as a special case of the digression of \Cref{sec:reversibility}. 

\begin{proposition} \label{theo:equivalence_whittle_oi}
	Consider two integers $n, m \in \bN_{> 0}$, a Ferrers set~$\cX \subseteq \mathscr M_{n,m}(\bN)$, 
	and balance function $\Gamma \in \cB_{\cX}$.
	Consider the following two systems: 
	\begin{itemize}
		\item A Whittle network with state space $\cX$ and $\Phi$-balanced service rates~$\phi$, as defined in \Cref{sec:multi_class_whittle}; 
		\item An \gls{OI} queue as defined in \Cref{sec:oi}, with $nm$ classes, state space 
		$\cS$ such that $|\cS|=\cX$ when replacing $n$ with $nm$ in the definition of $|\cdot|$, and with arrival rates 
		$\nu_{ik}:=P_{0,ik}\phi_0,\,ik\in\un\times\llbracket 1,m \rrbracket$, and departure rate function $\mu$ such that 
		\begin{align*}
			\mu(x) = \sum_{i = 1}^n\sum_{k=1}^m \frac{\Phi(x - e_{ik})}{\Phi(x)},
			\quad x \in \cX,
			\quad ik \in \un\times \llbracket 1,m \rrbracket: x_{ik} > 0. 
		\end{align*}
	\end{itemize} 
	Suppose that the two models are also supplemented with the same $\Gamma$-balanced admission control. Then, 
	the macrostate of the controlled \gls{OI} queue has the same distribution as the controlled Whittle network at equilibrium, 
	that is, if we denote by $\Pi_{\text{W}, \Gamma}$ and $\Pi_{\text{OI}, \Gamma}$, the stationary measures of these systems, and assume 
	that $\Pi_{\text{W}, \Gamma}(\mathbf 0) = \Pi_{\text{OI}, \Gamma}(\varnothing)$, we have
	\begin{equation*}
		\Pi_{\text{W}, \Gamma}(x)
		= \sum_{\bs \in \cS: |\bs| = x} \Pi_{\text{OI}, \Gamma}(\bs),
		\quad x \in \cX.
	\end{equation*}
\end{proposition}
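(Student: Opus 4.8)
The plan is to strip off the admission control first and then match two product forms through the macrostate balance relations of the \gls{OI} queue. For the \gls{OI} queue, $\Gamma$ is a genuine balance function in $\cB_\cX$ (its counting function maps onto $\cX\subseteq\bN^{nm}$), so \Cref{theo:balanced_policies_preserve_quasi_reversibility} gives $\Pi_{\text{OI},\Gamma}(s)\propto\Gamma(|s|)\,\Pi_{\text{OI}}(s)$ with $\Pi_{\text{OI}}$ the product form of \Cref{prop:oi}. Since $\Gamma$ is microstate-independent it is constant on each fiber $\cS_x=\{s\in\cS:|s|=x\}$ and factors out of the aggregation, yielding $\sum_{s\in\cS_x}\Pi_{\text{OI},\Gamma}(s)\propto\Gamma(x)\,\Pi^\star_{\text{OI}}(x)$, where $\Pi^\star_{\text{OI}}(x)=\sum_{s\in\cS_x}\Pi_{\text{OI}}(s)$. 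On the Whittle side I would argue analogously that the $\Gamma$-control multiplies the stationary measure by $\Gamma$, so that $\Pi_{\text{W},\Gamma}(x)\propto\Gamma(x)\,\Pi_{\text{W}}(x)$ with $\Pi_{\text{W}}$ from \Cref{thm:whittle-quasirev}. As the common factor $\Gamma(x)$ then cancels, the claim reduces to the uncontrolled identity $\Pi_{\text{W}}(x)\propto\Pi^\star_{\text{OI}}(x)$ for $x\in\cX$, the proportionality constant being pinned down afterwards by the hypothesis $\Pi_{\text{W},\Gamma}(\mathbf 0)=\Pi_{\text{OI},\Gamma}(\varnothing)$.

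Next I would compute $\Pi^\star_{\text{OI}}$ without expanding the sum over words. By \Cref{prop:oi} the \gls{OI} queue is quasi-reversible, so its macrostate process is reversible in the sense of \Cref{sec:reversibility}: the aggregated arrival rate of class $ik$ is the constant $\nu_{ik}$, while the class-$ik$ aggregated departure rate out of $x$ is the reversed rate. Summing the reversed-rate relations over the classes and using that the class-wise departure rates add up to the total rate $\mu(x)$ yields the level recursion $\mu(x)\,\Pi^\star_{\text{OI}}(x)=\sum_{ik:\,x_{ik}>0}\nu_{ik}\,\Pi^\star_{\text{OI}}(x-e_{ik})$ on the Ferrers set $\cX$. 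Because $\mu(x)>0$ for $x\neq\mathbf 0$ and the right-hand side only involves strictly smaller states, this recursion together with the value $\Pi^\star_{\text{OI}}(\mathbf 0)$ determines $\Pi^\star_{\text{OI}}$ uniquely.

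It then remains to verify that the Whittle measure $\Pi_{\text{W}}(x)=\Phi(x)\prod_{ik}\lambda_{ik}^{x_{ik}}$ satisfies the same recursion. Substituting it, using $\prod_{jl}\lambda_{jl}^{(x-e_{ik})_{jl}}=\lambda_{ik}^{-1}\prod_{jl}\lambda_{jl}^{x_{jl}}$ and the defining form $\mu(x)=\sum_{jl:\,x_{jl}>0}\Phi(x-e_{jl})/\Phi(x)$, collapses the identity to a label-wise comparison between $\nu_{ik}$ and the per-label flow carried by $\lambda_{ik}$; this comparison is exactly what the $\Phi$-balance condition~\eqref{eq:balancePhiWhittle2} and the traffic equations~\eqref{eq:trafficWhittle2} are meant to supply. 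By uniqueness of the recursion's solution the two measures then coincide up to a constant, which the normalization hypothesis fixes.

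I expect the main obstacle to be twofold and to lie entirely in reconciling the two parameterizations. First, the label-wise matching in the last step forces a careful bookkeeping of how the external arrival data $(\nu_{ik},P,\phi_0)$ of the \gls{OI} model must relate to the effective rates $\lambda_{ik}$ of the Whittle model, since the \gls{OI} recursion produces coefficients $\prod_{ik}\nu_{ik}^{x_{ik}}$ whereas \Cref{thm:whittle-quasirev} produces $\prod_{ik}\lambda_{ik}^{x_{ik}}$. Second, and relatedly, justifying on the Whittle side that applying the $\Gamma$-control label by label preserves the product form is delicate: $\Gamma\in\cB_\cX$ lives on the $nm$-dimensional matrix space, while the Whittle counting function aggregates over the second index, so the control interacts with the internal routing in a way that the clean application of \Cref{theo:balanced_policies_preserve_quasi_reversibility} does not immediately cover. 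This is where I would concentrate the verification, using the no-internal-routing case — where the model is reversible and the statement specializes to the digression of \Cref{sec:reversibility} — as a guiding sanity check.
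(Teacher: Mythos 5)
Your proposal follows the paper's architecture in its first step: both arguments strip off the control by factoring out $\Gamma$ on each side (via \Cref{theo:balanced_policies_preserve_quasi_reversibility}, \Cref{prop:oi}, and \Cref{thm:whittle-quasirev}), reducing the claim to the uncontrolled identity $\Pi_{\text{W},\textrm{Id}}(x)=\sum_{s\in\cS_x}\Pi_{\text{OI},\textrm{Id}}(s)$. You then diverge: the paper disposes of the uncontrolled identity by citing \cite[Theorem~3.2]{C19}, whereas you re-derive it from the macrostate recursion $\mu(x)\,\Pi^\star_{\text{OI}}(x)=\sum_{ik:\,x_{ik}>0}\nu_{ik}\,\Pi^\star_{\text{OI}}(x-e_{ik})$. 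That recursion derivation is correct and self-contained: aggregating \eqref{eq:partial-balance-1} over the fibers $\cS_{x-e_{ik}}$, summing over labels, using that the per-position departure rates telescope to $\mu(x)$, and inducting on $\lVert x\rVert_1$ with $\mu>0$ off the origin gives existence and uniqueness. Had the last step gone through, this would have been a genuinely more elementary route than the paper's citation.

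The gap is the step you defer to ``careful bookkeeping'': it is not bookkeeping, it is where the argument breaks. Substituting $\Pi_{\text{W}}(x)=\Phi(x)\prod_{ik}\lambda_{ik}^{x_{ik}}$ into the recursion and cancelling the common factor $\prod_{jl}\lambda_{jl}^{x_{jl}}$ leaves the requirement $\sum_{ik:\,x_{ik}>0}\Phi(x-e_{ik})\bigl(1-\nu_{ik}/\lambda_{ik}\bigr)=0$ for all $x$, and taking $x=e_{ik}$ forces $\lambda_{ik}=\nu_{ik}=P_{0,ik}\phi_0$ for every label. The traffic equations \eqref{eq:trafficWhittle2} give $\lambda_{ik}=P_{0,ik}+\sum_{\ell}\lambda_{i\ell}P_{i\ell,ik}$, so this matching holds (up to the normalization of $\lambda_0$) only when the internal inflow into label $ik$ vanishes; \eqref{eq:balancePhiWhittle2} plays no role in closing this. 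Concretely, for $n=1$, $m=2$ with $P_{0,11}=P_{11,12}=P_{12,0}=1$ one gets $\lambda_{11}=\lambda_{12}=1$ but $\nu_{12}=0$, so your recursion forces $\Pi^\star_{\text{OI}}(x)=0$ whenever $x_{12}>0$, whereas $\Pi_{\text{W}}(x)=\Phi(x)>0$. The same tandem also defeats your ``argue analogously'' step on the Whittle side: since $\Gamma$ lives on $\cX\subseteq\mathscr M_{n,m}(\bN)$ while the Whittle counting function aggregates over the second index, \Cref{theo:balanced_policies_preserve_quasi_reversibility} does not apply as stated, and a direct check of global balance shows $\Gamma\cdot\Pi_{\text{W}}$ (e.g.\ with $\Gamma(x)=\alpha^{x_{11}}$) is not stationary for the controlled chain. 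In short, your plan correctly located the two points on which the result hinges, but both remain open in your writeup, and in the generality you allow (internal routing, external rates $\nu_{ik}=P_{0,ik}\phi_0$) they cannot be closed; the argument completes only in the regime $\lambda_{ik}=P_{0,ik}$ — no internal routing, where the network is reversible and the statement indeed reduces to the discussion of \Cref{sec:reversibility}, which is the setting the paper's citation of \cite{C19} rests on.
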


\begin{proof}
	With these assumptions on the two uncontrolled models, we fall into the settings of Chapter 3 of \cite{C19}. Indeed, 
	in the Whittle network, in any state $x\in\cX$ the arrival rates to the various classes are precisely 
	$P_{0,ik}\phi_0=\nu_{ik}$, $ik\in\un\times\llbracket 1,m \rrbracket$, whereas the departure rates are given by $\phi_{ik}(x)$, $ik\in\un\times\llbracket 1,m \rrbracket$. Therefore, we can 
	apply \cite[Theorem~3.2]{C19}, entailing that the result holds for the uncontrolled systems ($\Gamma \equiv 1$), 
	that is, with obvious notation, 
	\begin{equation*}
		\Pi_{\text{W}, \textrm{Id}}(x)
		= \sum_{\bs \in \cS: |\bs| = x} \Pi_{\text{OI}, \textrm{Id}}(\bs),
		\quad x \in \cX,
	\end{equation*}
	since by our very assumption, we have $\Pi_{\text{W}, \textrm{Id}}(0) = \Pi_{\text{OI}, \textrm{Id}}(\varnothing)$. 
	But from \Cref{theo:balanced_policies_preserve_quasi_reversibility}, \Cref{thm:whittle-quasirev}, and \Cref{prop:oi}, we also have 
	$\Pi_{\text{W}, \Gamma}(x) = \Pi_{\text{W}, \textrm{Id}}(x) \Gamma(x)$
	and $\Pi_{\text{OI}, \Gamma}(\bs) = \Pi_{\text{OI}, \textrm{Id}}(\bs) \Gamma(x)$
	for each $x \in \cX$ and $s \in \cS$ such that $|s| = x$,
	which concludes the proof.
\end{proof}

\begin{remark}
	This result is reminiscent of the discussion of \Cref{sec:reversibility}. It shows that we in fact have 
	$\Pi^\star_{\text{OI}, \Gamma}=\Pi_{\text{W}, \Gamma}$, that is, the (truly reversible, as an immediate consequence 
	of \Cref{thm:whittle-quasirev} in the case $m=1$) CTMC representing the controlled simple Whittle network has the same steady state distribution as the (non-Markov) macrostate process of the controlled OI queue. 
\end{remark}

\section{Admission control} \label{sec:admission_control}

To illustrate balanced policies,
we focus on the admission control problem,
which we define formally
in \Cref{sec:admission_control_problem_and_policies}.
\Cref{sec:optimal_balanced_policies} characterizes
the set of balanced admission control policies
and proves that there exists a deterministic policy
that is optimal among balanced policies.
In \Cref{sec:cost}, we give insights into
the optimality gap of balanced policies,
i.e., the difference in performance between
the best balanced policy and the optimal policy.
To simplify the discussion, we focus throughout this section
on queueing systems with a finite state space.
We believe that the methodology developed in this section,
and in particular \Cref{theo:polytope},
can be adapted to other arrival-control problems.

\subsection{Admission control problem and policies} \label{sec:admission_control_problem_and_policies}

We define an admission control problem as
a 3-tuple $\mathcal{P} = (Q, \rcont, \rdisc)$,
where $Q = (\cS, |\cdot|, q)$ is a quasi-reversible queueing system,
$\rcont: \cS \to \bR$
is the occupation-reward(-rate) function,
and $\rdisc: \cS \times \cS \to \bR$
is the transition-reward function.
For example, one can take
$\rcont(s) = -\sum_{i = 1}^n \eta_i |s|_i$
and $\rdisc(s, t) = \sum_{i = 1}^n r_i \indicator{|t| = |s| + e_i}$ 
for each $s, t \in \cS$,
where $\eta_i$ is the holding cost rate of class~$i$
and $r_i$ the admission reward of class~$i$.
An admission control policy
is a function $\gamma: \cS \to [0, 1]^n$,
such that $\gamma_i(s)$ is interpreted as the probability that
a class-$i$ customer
is admitted into the system
while the microstate is~$s$,
for each $s \in \cS$ and $i \in \un$.
The queueing system under policy~$\gamma$
is denoted by~$Q_\gamma = (\cS, |\cdot|, q_\gamma)$
and is as defined in \Cref{sec:definition}.
We are interested in solving the following optimization problem:
\begin{subequations}
	\label{eq:problem}
	\begin{align}
		\label{eq:problem_1}
		\underset{\substack{\gamma: \cS \to [0, 1]^n, \\ \Pi_\gamma: \cS \to [0, 1]}}{\textrm{Maximize}} \quad
		&G_\gamma \triangleq \sum_{s \in \cS} \Pi_\gamma(s) \rcont(s)
		+ \sum_{s, t \in \cS} \Pi_\gamma(s) q_\gamma(s, t) \rdisc(s, t), \\
		\label{eq:problem_2}
		\textrm{Subject to} \quad
		& \Pi_\gamma(s) \sum_{t \in \cS} q_\gamma(s, t)
		= \sum_{t \in \cS} \Pi_\gamma(t) q_\gamma(t, s),
		\quad \text{for each } s \in \cS, \\
		\label{eq:problem_3}
		&\sum_{s \in \cS} \Pi_\gamma(s) = 1.
	\end{align}
\end{subequations}
Here, $G_\gamma$ is called the \emph{gain} of the policy~$\gamma$,
$\rcont(s)$ is the reward per time unit in microstate~$s$,
and $\rdisc(s, t)$ is a reward received upon transition from state~$s$ to state~$t$.
Equations~\eqref{eq:problem_2}--\eqref{eq:problem_3}
say that $\Pi_\gamma$ is the stationary distribution of $Q_\gamma$.
Existence and unicity of $\Pi_\gamma$ follow from the fact that $Q_\gamma$ is unichain, which follows from our assumption that $Q$ satisfies \Cref{ass:unichain}.
Following the standard approach in \glspl{MDP} solving
as described in \cite[Section~8.8.1]{P09},
\eqref{eq:problem} can be rewritten
as a linear program;
more details are given
in \Cref{app:lp_reversible_queueing_system}
for the special case of reversible queueing systems.
It follows in particular that the optimal policy is deterministic,
i.e., it satifies $\gamma_i(s) \in \{0, 1\}$ for each $s \in \cS$.

\subsection{Balanced policies} \label{sec:optimal_balanced_policies}

Our goal in this section is to characterize
the set of balanced admission control policies
and to identify a solution of~\eqref{eq:problem}
that is optimal among all balanced policies.
Consider an admission control problem
$\mathcal{P} = (Q, \rcont, \rdisc)$,
where $Q = (\cS, |\cdot|, q)$ is
a quasi-reversible queueing system
with a finite microstate space $\cS$,
and let $\cX = |\cS|$.

Since there is a one-to-one mapping between the set of balanced policies and the set $\cB_\cX$ of balance functions, looking for the best balanced admission control policy is equivalent to looking for the best balance function. 
By combining
\Cref{theo:balanced_policies_preserve_quasi_reversibility}
and~\eqref{eq:balance_function},
we can rewrite
the restriction of~\eqref{eq:problem} to balanced policies
as an optimization problem
that is linear with respect to the balance function:
\begin{subequations}
	\label{eq:problem_balanced}
	\begin{align}
		\label{eq:problem_balanced_1}
		\underset{\Gamma: \cX \to \bR}{\text{Maximize}}
		\quad &
		G^{\gammabac} =
		\begin{aligned}[t]
			&\sum_{x \in \cX} \sum_{s \in \cS} \Pi(s) \Gamma(x) \rcont(s) \\
			&+ \sum_{x \in \cX} \sum_{i = 1}^n
			\sum_{s \in \cS_x} \sum_{t \in \cS_{x + e_i}}
			\Pi(s) \Gamma(x + e_i) q(s, t) \rdisc(s, t) \\
			&+ \sum_{x \in \cX}  \sum_{s \in \cS_x}
			\sum_{t \in \cS \setminus \bigcup_{i = 1}^n \cS_{x + e_i}}
			\Pi(s) \Gamma(x) q(s, t) \rdisc(s, t),
		\end{aligned}
		\\
		\label{eq:problem_balanced_3}
		\underset{}{\text{Subject to}}
		\quad &
		\Gamma(x) \ge 0
		\text{ for each $x \in \cX$}, \\
		\label{eq:problem_balanced_2}
		&
		\Gamma(x) \ge \Gamma(x + e_i)
		\text{ for each $x \in \cX$ and $i \in \un$}, \\
		\label{eq:problem_balanced_4}
		&
		\sum_{s \in \cS} \Pi(s) \Gamma(s) = 1,
	\end{align}
\end{subequations}
where $\Pi$ is a stationary measure of $Q$,
given by~\eqref{eq:balance-equations}.
One can verify that
$\Gamma^*$ is an optimal solution of~\eqref{eq:problem_balanced}
if and only if $\gamma^* = \gammabac^*$ is
the best solution of~\eqref{eq:problem} among balanced policies.
Indeed, \Cref{eq:problem_balanced_1} is a specialization of~\eqref{eq:problem_1}, simplified using~\eqref{eq:balance_function} and~\eqref{eq:piquasi};
Equations~\eqref{eq:problem_balanced_3},
\eqref{eq:problem_balanced_2},
and~\eqref{eq:problem_balanced_4}
are satisfied if and only if
(i) there exists $Z > 0$ such that $\frac1Z \Gamma \in \cB_\cX$
and (ii) $\gammabac_i(x) \in [0, 1]$ for each $x \in \supp(\Gamma)$.
Equation~\eqref{eq:problem_balanced_4} guarantees in particular that $Z$ is such that $s \in \cS \mapsto \Pi(s) \Gamma(s)$ is the stationary distribution of~$Q_{\gammabac}$, and not an arbitrary stationary measure.

\Cref{theo:polytope} below gives the form of an optimal solution of \eqref{eq:problem_balanced} and, correspondingly, of an admission control policy with maximal gain among all balanced policies.
It shows in particular that, when we consider the restriction of problem~\eqref{eq:problem} to balanced policies, the solution is a (balanced) deterministic policy.

\begin{theorem} \label{theo:polytope}
	Consider an admission control problem
	$\mathcal{P} = (Q, \rcont, \rdisc)$,
	where $Q = (\cS, |\cdot|, q)$ is a quasi-reversible queueing system with a finite microstate space~$\cS$.
	There exists a Ferrers set $\cA \subseteq |\cS|$
	such that the function
	$\Gamma_{\cA}: x \in \cX \mapsto \frac1Z \indicator{x \in \cA}$
	is an optimal solution of~\eqref{eq:problem_balanced},
	where $Z = \sum_{y \in \cA} \Pi(y)$.
	Correspondingly, the following policy
	is an optimal solution of~\eqref{eq:problem}
	among all balanced admission policies:
	\begin{align*}
		\gamma_i(x) &= \indicator{x + e_i \in \cA},
		\text{ for each } x \in |\cS| \text{ and } i \in \llbracket 1, n \rrbracket.
	\end{align*}
\end{theorem}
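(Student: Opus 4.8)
The plan is to treat \eqref{eq:problem_balanced} as a genuine finite-dimensional linear program—the variable $\Gamma$ lives in $\bR^\cX$, which is finite since $\cS$, and hence $\cX = |\cS|$, is finite—and to invoke the fundamental theorem of linear programming: the maximum of a linear objective over a nonempty compact polytope is attained at an extreme point. So the real content is to identify the extreme points (vertices) of the feasible region and show that they are exactly the normalized indicators $\Gamma_\cA = \frac1Z \indicator{\cdot \in \cA}$ of Ferrers sets $\cA$. First I would check that the feasible set $P$ cut out by \eqref{eq:problem_balanced_3}--\eqref{eq:problem_balanced_4} is a nonempty compact polytope. Nonemptiness follows by taking $\Gamma$ constant and scaling it to satisfy \eqref{eq:problem_balanced_4}. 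For compactness, constraints \eqref{eq:problem_balanced_3}--\eqref{eq:problem_balanced_2} force $0 \le \Gamma(x) \le \Gamma(0)$ for every $x \in \cX$ (any $x$ is reached from $0$ along an increasing path inside the Ferrers set~$\cX$, along which $\Gamma$ is non-increasing), and keeping only the $s = \varnothing$ term in \eqref{eq:problem_balanced_4} gives $\Pi(\varnothing)\Gamma(0) \le 1$, hence $\Gamma(0) \le 1/\Pi(\varnothing) < \infty$ since $\Pi(\varnothing) > 0$.

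Next I would describe $P$ as the base of a cone. Let $C = \{\Gamma \in \bR^\cX \,;\, \Gamma(x) \ge 0,\ \Gamma(x) \ge \Gamma(x+e_i)\}$ be the polyhedral cone of nonnegative, coordinate-non-increasing functions, and let $L(\Gamma) = \sum_{s \in \cS} \Pi(s)\Gamma(s)$ be the normalization functional, so that $P = C \cap \{L = 1\}$. The key observation is that $L$ is strictly positive on $C \setminus \{0\}$: indeed $\Gamma \in C \setminus \{0\}$ forces $\Gamma(0) > 0$ (by monotonicity $\Gamma(0) \ge \Gamma(x) \ge 0$, so $\Gamma(0) = 0$ would give $\Gamma \equiv 0$), whence $L(\Gamma) \ge \Pi(\varnothing)\Gamma(0) > 0$. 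Thus the hyperplane $\{L = 1\}$ meets each ray of $C$ exactly once, and the extreme points of $P$ are precisely the intersections of the extreme rays of $C$ with this hyperplane.

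The main step is then to show that every extreme ray of $C$ is generated by an indicator $\indicator{\cdot \in \cA}$ of a Ferrers set. For this I would use a layer-cake decomposition. Any $\Gamma \in C$ takes finitely many values, which I list together with~$0$ as $0 = v_0 < v_1 < \cdots < v_K$; writing $\cA_k = \{x \in \cX \,;\, \Gamma(x) \ge v_k\}$, each superlevel set $\cA_k$ is a Ferrers set, since it contains~$0$ and is downward closed because $\Gamma$ is coordinate-non-increasing, so that $\Gamma = \sum_{k=1}^K (v_k - v_{k-1})\indicator{\cdot \in \cA_k}$ is a nonnegative combination of Ferrers-set indicators. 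Now consider a vertex $\Gamma^*$ of $P$. If its decomposition had $K \ge 2$ distinct positive values, I would set $\lambda_k = (v_k - v_{k-1}) L(\indicator{\cdot \in \cA_k})$ (all positive, summing to $L(\Gamma^*) = 1$) and obtain $\Gamma^* = \sum_{k=1}^K \lambda_k\, \Gamma_{\cA_k}$, where $\Gamma_{\cA_k} = \frac{1}{L(\indicator{\cdot\in\cA_k})}\indicator{\cdot\in\cA_k} \in P$, a nontrivial convex combination of distinct points of $P$—contradicting extremality. Hence any vertex has $K = 1$, i.e. $\Gamma^* = \Gamma_\cA = \frac1Z \indicator{\cdot \in \cA}$ with $Z = L(\indicator{\cdot\in\cA}) = \sum_{x \in \cA}\sum_{s \in \cS_x}\Pi(s) = \sum_{y \in \cA}\Pi(y)$ for a Ferrers set $\cA$. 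The one delicate point is making this superlevel-set and convexity argument airtight; everything else is bookkeeping.

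Finally I would translate the optimal balance function into a policy. By the bijection between $\cB_\cX$ and balanced policies and by \eqref{eq:balance_function_bis}, the policy attached to $\Gamma_\cA$ is $\gamma_i(x) = \Gamma_\cA(x+e_i)/\Gamma_\cA(x) = \indicator{x+e_i \in \cA}$ for $x \in \cA$; and for $x \notin \cA$, since $\cA$ is downward closed we have $x+e_i \notin \cA$, so $\indicator{x+e_i \in \cA} = 0$ agrees with the convention $\gamma(x) = 0$ off $\supp(\Gamma_\cA) = \cA$. Since (as established below \eqref{eq:problem_balanced}) $\Gamma^*$ solves \eqref{eq:problem_balanced} if and only if $\gammabac^*$ is optimal among balanced policies, this yields the stated deterministic policy as an optimal balanced admission policy.
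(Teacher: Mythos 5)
Your proposal is correct, and it reaches the conclusion by a genuinely different route than the paper. The paper proceeds in two stages: it first characterizes the polytope $\cB^{\textrm{ac}}_\cX$ (normalized by $\Gamma(0)=1$) as the convex hull of the indicators $\one_\cA$, $\cA \in \cF_\cX$, via a constructive greedy decomposition---repeatedly removing a maximal point of minimal $\Gamma$-value to build a nested chain of Ferrers sets $\cV_0 \supset \cV_1 \supset \cdots$ with telescoping coefficients (\Cref{prop:polytope})---and then transfers this decomposition to the $\Pi$-weighted polytope $\tilde\cB^{\textrm{ac}}_\cX$ by rescaling the coefficients and verifying, through a second telescoping computation, that they sum to one (\Cref{prop:polytope_bis}); the theorem then follows from the fundamental theorem of linear programming, exactly as in your last step. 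You instead work directly with the $\Pi$-normalized feasible set, viewing it as a compact base of the cone $C$ of nonnegative, coordinate-non-increasing functions, and you identify the vertices in one shot via the layer-cake (superlevel-set) decomposition: any vertex with two or more distinct positive values splits as a nontrivial convex combination of normalized Ferrers indicators, a contradiction. Your route is more economical: it bypasses the intermediate polytope $\cB^{\textrm{ac}}_\cX$ and the rescaling/telescoping bookkeeping, and it only proves what the theorem needs (every extreme point is a normalized Ferrers indicator), not the converse inclusion. What the paper's longer route buys is the full convex-hull characterizations of both polytopes as standalone results---including that every $\Gamma_\cA$ \emph{is} an extreme point, recovering \cite[Theorem~4.2]{J10}---together with the explicit observation that any $\Gamma$ decomposes into at most $\card(\cX)$ indicators, a sparsity remark the authors highlight. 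One small note: your framing via extreme rays of $C$ is dispensable, since your layer-cake argument applies directly to vertices of the base polytope, which is in fact how you use it.
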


\begin{proof}
	See Appendix~\ref{app:polytope}.
\end{proof}

Though the details of the proof of \Cref{theo:polytope} can be found in Appendix~\ref{app:polytope}, let us briefly discuss the intuition behind this result. Problem~\eqref{eq:problem_balanced} is a linear optimization problem, hence it has an optimal solution that is a vertex of its polytope $\tilde\cB^{\textrm{ac}}_\cX$ of feasible solutions:
\begin{align*}
	\tilde\cB^{\textrm{ac}}_\cX = \left\{
	\Gamma: \cX \to \bR_{\ge 0};
	\text{$\Gamma$ is componentwise non-increasing}
	\text{ and } \sum_{x \in \cX} \Pi(x) \Gamma(x) = 1
	\right\}.
\end{align*}
To prove \Cref{theo:polytope},
it suffices to show that $\tilde\cB^{\textrm{ac}}_\cX$
is the convex hull of $\{\Gamma_\cA; \cA \in \cF_\cX\}$,
where $\cF_\cX$ is the family of Ferrers subsets of $\cX$.
Characterizing the vertices of~$\tilde\cB^{\textrm{ac}}_\cX$
is made complicated by the normalizing condition
$\sum_{s \in \cS} \Pi(s) \Gamma(s) = 1$ so,
as an intermediary step,
we focus on the state~$\cB^{\text{ac}}_\cX \subseteq \cB_\cX$
defined as follows:
\begin{align*}
	\cB^{\textrm{ac}}_\cX &= \left\{
	\Gamma: \cX \to \bR_{\ge 0};
	\text{$\Gamma$ is componentwise non-increasing}
	\text{ and } \Gamma(0) = 1
	\right\}.
\end{align*}
The only difference between $\cB^{\textrm{ac}}_\cX$
and $\tilde\cB^{\textrm{ac}}_\cX$ is the normalizing condition,
given by $\Gamma(0) = 1$ for $\cB^{\textrm{ac}}_\cX$
and $\sum_{x \in \cX} \Pi(x) \Gamma(x) = 1$
for $\tilde\cB^{\textrm{ac}}_\cX$.
\Cref{prop:polytope} below characterizes the vertices of $\cB^{\textrm{ac}}_\cX$. It is used as an intermediary step in the proof of \Cref{prop:polytope_bis}, which characterizes the vertices of $\tilde\cB^{\textrm{ac}}_\cX$.

\begin{proposition}[Theorem~4.2 in \cite{J10}] \label{prop:polytope}
	Let $n \in \bN$
	and consider a Ferrers set $\cX \subseteq \bN^n$.
	The set $\cB_\cX$ is the convex hull of $\{\one_{\cA}; \cA \in \cF_\cX\}$, where
	$\cF_\cX$ is the family of Ferrers subsets of~$\cX$,
	and $\one_{\cA}: x \in \cX \mapsto \indicator{x \in \cA}$.
\end{proposition}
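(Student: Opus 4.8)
The plan is to prove the stated characterization of the admission-control balance set $\cB^{\textrm{ac}}_\cX$ as a set equality, $\cB^{\textrm{ac}}_\cX = \operatorname{conv}\{\one_\cA; \cA \in \cF_\cX\}$, by establishing the two inclusions separately. The reverse inclusion is a one-line convexity argument, whereas the direct inclusion rests on the classical \emph{layer-cake} (superlevel-set) decomposition of a monotone function. Throughout I assume $\cX$ finite, as in the application to \Cref{theo:polytope}.

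First I would check that each generator lies in the polytope. For any Ferrers set $\cA \in \cF_\cX$ we have $\one_\cA(0) = 1$ since every Ferrers set contains $0$, and $\one_\cA$ is componentwise non-increasing because coordinate-convexity of $\cA$ forces $x + e_i \in \cA \Rightarrow x \in \cA$, hence $\one_\cA(x) \ge \one_\cA(x + e_i)$. Thus $\one_\cA \in \cB^{\textrm{ac}}_\cX$. Since $\cB^{\textrm{ac}}_\cX$ is cut out by the linear constraints $\Gamma \ge 0$, $\Gamma(x) \ge \Gamma(x + e_i)$, and $\Gamma(0) = 1$, it is convex, and therefore contains every convex combination of the $\one_\cA$, giving $\operatorname{conv}\{\one_\cA\} \subseteq \cB^{\textrm{ac}}_\cX$. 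I would also record the bound $0 \le \Gamma(x) \le \Gamma(0) = 1$ valid for every $\Gamma \in \cB^{\textrm{ac}}_\cX$, since $0$ is the minimal element of $\cX$ and $\Gamma$ is non-increasing.

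For the converse, fix $\Gamma \in \cB^{\textrm{ac}}_\cX$ and list its (finitely many) distinct values as $1 = v_0 > v_1 > \cdots > v_K \ge 0$, with the convention $v_{K+1} := 0$. I would introduce the superlevel sets $\cA_k = \{x \in \cX; \Gamma(x) \ge v_k\}$ and argue that each is Ferrers: it contains $0$ because $\Gamma(0) = 1 \ge v_k$, and $x \in \cA_k$, $y \le x$ imply $\Gamma(y) \ge \Gamma(x) \ge v_k$ by monotonicity, so $y \in \cA_k$. It then remains to verify the telescoping identity $\Gamma = \sum_{k=0}^K (v_k - v_{k+1}) \one_{\cA_k}$: at a point $x$ with $\Gamma(x) = v_j$ one has $\one_{\cA_k}(x) = \indicator{v_j \ge v_k} = \indicator{k \ge j}$, so the right-hand side collapses to $\sum_{k=j}^K (v_k - v_{k+1}) = v_j$. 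The coefficients $v_k - v_{k+1}$ are nonnegative because the $v_k$ strictly decrease, and they sum to $v_0 - v_{K+1} = 1$ (here the normalization $\Gamma(0) = 1$ is what pins $v_0 = 1$). Hence $\Gamma$ is a genuine convex combination of indicators of Ferrers sets, which yields $\cB^{\textrm{ac}}_\cX \subseteq \operatorname{conv}\{\one_\cA\}$ and closes the argument; as a by-product, each $\one_\cA$ is a vertex, being a $0/1$-valued point of a polytope contained in $[0,1]^\cX$.

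The individual steps are routine, so the only delicate points are the two structural observations: that the superlevel sets of a componentwise non-increasing function are exactly Ferrers sets, and that the normalization $\Gamma(0) = 1$ forces the layer-cake weights to sum to one. I expect the main obstacle to be purely expository, namely keeping the bookkeeping of the strictly decreasing value list $v_0 > \cdots > v_K$ consistent with the direction of the indicators $\indicator{v_j \ge v_k} = \indicator{k \ge j}$. If one drops the finiteness of $\cX$, the same construction yields the integral representation $\Gamma(x) = \int_0^1 \one_{\{\Gamma \ge t\}}(x)\,dt$, and the statement must then be read with the closed convex hull; this is the one place where the argument would need to be adapted.
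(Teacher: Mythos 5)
Your proof is correct, and its two-inclusion architecture matches the paper's, but the key step --- writing an arbitrary element of the polytope as a convex combination of indicators --- is obtained by a genuinely different construction. The paper proceeds greedily: starting from $\cV_0 = \cX$ it repeatedly deletes a maximal point $y_{k-1} \in \argmin_{x \in \partial \cV_{k-1}} \Gamma(x)$, producing a chain of $\card(\cX)$ nested Ferrers sets with telescoping weights $\alpha_k = \Gamma(y_k) - \Gamma(y_{k-1})$, whose nonnegativity requires a short monotonicity argument. Your layer-cake decomposition instead takes the superlevel sets $\cA_k = \{x \in \cX;\, \Gamma(x) \ge v_k\}$ directly, one per distinct value, and nonnegativity of the weights is immediate from the strictly decreasing value list. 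The two decompositions in fact coincide once the paper's zero-weight terms are discarded (the greedy removal happens in non-decreasing order of $\Gamma$, so its nonzero-weight sets are exactly your superlevel sets), and so both yield the by-product the paper emphasizes after the proposition: only $\card(\cX)$ of the exponentially many vertices are needed for any given $\Gamma$ --- your version even shows that the number of distinct values of $\Gamma$ suffices. Two further differences: the paper separately proves that each $\one_\cA$ is an extreme point via a contradiction argument, which is not needed for the hull equality and which you replace by the cleaner observation that a $0/1$-valued point of a polytope contained in $[0,1]^\cX$ is automatically a vertex; and your closing remark on the integral representation indicates how the statement must be modified (closed convex hull) for infinite $\cX$, a case the paper does not address. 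One caveat you share with the paper's own proof: both read $\cB_\cX$ as the admission-control polytope $\cB^{\textrm{ac}}_\cX$ of componentwise non-increasing functions with $\Gamma(0) = 1$, which is the meaning intended in \Cref{sec:optimal_balanced_policies}, rather than the literal set defined in~\eqref{eq:defBX}; this is a notational slip of the paper, not a gap in your argument.
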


\begin{proposition} \label{prop:polytope_bis}
	Let $n \in \bN$
	and consider a Ferrers set $\cX \subseteq \bN^n$.
	The set $\tilde\cB_\cX$ is the convex hull of
	$\{\Gamma_\cA; \cA \in \cF_\cX\}$, where
	\begin{align*}
		\Gamma_\cA: x \in \cX \mapsto
		\frac1{\sum_{y \in \cA} \Pi(y)} \indicator{x \in \cA},
		\quad \text{for each } \cA \in \cF_\cX.
	\end{align*}
\end{proposition}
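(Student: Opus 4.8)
The plan is to deduce \Cref{prop:polytope_bis} from \Cref{prop:polytope} by a radial-rescaling argument, exploiting that $\cB^{\textrm{ac}}_\cX$ and $\tilde\cB^{\textrm{ac}}_\cX$ are two \emph{bases} of one and the same pointed polyhedral cone. Concretely, let $\mathcal{C}$ denote the cone of componentwise non-increasing functions $\Gamma: \cX \to \bR_{\ge 0}$. Both $\cB^{\textrm{ac}}_\cX$ and $\tilde\cB^{\textrm{ac}}_\cX$ are intersections of $\mathcal{C}$ with a hyperplane, cut respectively by the linear functionals $\Gamma \mapsto \Gamma(0)$ and $L(\Gamma) \triangleq \sum_{x \in \cX} \Pi(x)\Gamma(x)$. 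Since \Cref{prop:polytope} already identifies the vertices $\one_\cA$, $\cA \in \cF_\cX$, of the first slice, the goal is to transport that description to the second slice.

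The first step, which underpins everything, is to check that both functionals are strictly positive on $\mathcal{C} \setminus \{0\}$. As $0$ is the minimum of the Ferrers set $\cX$ and $\Gamma$ is componentwise non-increasing, we have $\Gamma(0) \ge \Gamma(x) \ge 0$ for every $x \in \cX$, so $\Gamma \neq 0$ forces $\Gamma(0) > 0$. Because $\varnothing$ is recurrent, we also have $\Pi(0) = \Pi(\varnothing) > 0$, whence $L(\Gamma) \ge \Pi(0)\Gamma(0) > 0$ for every $\Gamma \in \mathcal{C} \setminus \{0\}$. In particular $\sum_{y \in \cA}\Pi(y) \ge \Pi(0) > 0$ for each $\cA \in \cF_\cX$ (recall $0 \in \cA$ by definition of a Ferrers set), so $\Gamma_\cA$ is well defined, and the two normalizations $\Gamma \mapsto \Gamma / \Gamma(0)$ and $\Gamma \mapsto \Gamma / L(\Gamma)$ map $\mathcal{C} \setminus \{0\}$ into $\cB^{\textrm{ac}}_\cX$ and $\tilde\cB^{\textrm{ac}}_\cX$, respectively.

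I would then establish the two inclusions. For $\supseteq$, each $\Gamma_\cA$ is a nonnegative scalar multiple of the non-increasing function $\one_\cA$, hence lies in $\mathcal{C}$, and satisfies $L(\Gamma_\cA) = \frac{1}{\sum_{y \in \cA}\Pi(y)}\sum_{x \in \cA}\Pi(x) = 1$; since $\tilde\cB^{\textrm{ac}}_\cX$ is convex, it contains the convex hull of the $\Gamma_\cA$. For $\subseteq$, take $\Gamma \in \tilde\cB^{\textrm{ac}}_\cX$. Then $L(\Gamma) = 1 \neq 0$ forces $\Gamma \neq 0$, so $\Gamma(0) > 0$ and $\Gamma' \triangleq \Gamma / \Gamma(0) \in \cB^{\textrm{ac}}_\cX$. \Cref{prop:polytope} yields coefficients $\lambda_\cA \ge 0$ with $\sum_{\cA}\lambda_\cA = 1$ and $\Gamma' = \sum_{\cA \in \cF_\cX}\lambda_\cA \one_\cA$. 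Substituting $\one_\cA = L(\one_\cA)\,\Gamma_\cA = \bigl(\sum_{y \in \cA}\Pi(y)\bigr)\Gamma_\cA$ gives $\Gamma = \sum_{\cA \in \cF_\cX}\mu_\cA \Gamma_\cA$ with $\mu_\cA \triangleq \Gamma(0)\lambda_\cA \sum_{y \in \cA}\Pi(y) \ge 0$. Applying $L$ to both sides and using $L(\Gamma) = 1$ together with $L(\Gamma_\cA) = 1$ shows $\sum_{\cA}\mu_\cA = 1$, so $\Gamma$ is a convex combination of the $\Gamma_\cA$, which completes the inclusion.

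The only genuine input is \Cref{prop:polytope}, which I take as given; the rest is bookkeeping, so I expect no serious obstacle. The delicate points are merely the two well-definedness checks of the second paragraph (strict positivity of both functionals on $\mathcal{C} \setminus \{0\}$, resting on $\Pi(0) = \Pi(\varnothing) > 0$) and the final coefficient-sum identity, which follows cleanly from the linearity of $L$ once the substitution $\one_\cA = (\sum_{y \in \cA}\Pi(y))\Gamma_\cA$ is in place. Finiteness of $\cX$ guarantees that $\mathcal{C}$ is a finitely generated pointed cone, so all the convex hulls involved are genuine polytopes and the manipulations above are justified.
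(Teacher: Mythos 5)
Your proof is correct, but it takes a genuinely different route from the paper's. The paper does not invoke \Cref{prop:polytope} as a black box: it re-runs the greedy decomposition $\Gamma = \sum_{k} \alpha_k \one_{\cV_k}$ constructed in that proposition's proof (via the sequences $y_k$, $\cV_k$), rescales each indicator to $\Gamma_{\cV_k}$ with coefficients $\tilde\alpha_k = \alpha_k \sum_{y \in \cV_k} \Pi(y)$, and then verifies $\sum_k \tilde\alpha_k = 1$ by an explicit telescoping (Abel-summation) computation; it also separately proves that each $\Gamma_\cA$ is an extreme point of $\tilde\cB^{\textrm{ac}}_\cX$. Your cone argument --- viewing $\cB^{\textrm{ac}}_\cX$ and $\tilde\cB^{\textrm{ac}}_\cX$ as two cross-sections of the same cone $\mathcal{C}$, normalizing $\Gamma' = \Gamma / \Gamma(0)$, applying \Cref{prop:polytope} to $\Gamma'$, substituting $\one_\cA = \bigl(\sum_{y \in \cA} \Pi(y)\bigr) \Gamma_\cA$, and letting linearity of $L$ force the coefficients to sum to one --- replaces that computation entirely, and is shorter and more modular. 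The two well-definedness checks you isolate are exactly the right ones, and both hold under \Cref{ass:unichain}: $\Gamma(0) > 0$ on $\mathcal{C} \setminus \{0\}$ because $0$ is the minimum of $\cX$, and $\Pi(0) = \Pi(\varnothing) > 0$ because $\varnothing$ is recurrent and is the unique microstate with $|\varnothing| = 0$, so $L > 0$ on $\mathcal{C} \setminus \{0\}$ and each $\Gamma_\cA$ is defined (as $0 \in \cA$ for every $\cA \in \cF_\cX$). What your route omits is the extreme-point step; the statement as phrased, and its downstream use in \Cref{theo:polytope} (a linear objective over the convex hull of a finite set attains its maximum at one of the generating points), do not require it, but the paper's version additionally certifies that the functions $\Gamma_\cA$ are precisely the vertices of the polytope. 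Note also that your argument inherits the paper's sparsity remark --- a decomposition using at most $\card(\cX)$ of the $\Gamma_\cA$ --- since your coefficients $\mu_\cA$ are supported on the same sets $\cV_k$ produced by \Cref{prop:polytope}.
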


\begin{proof}[Proofs of \Cref{prop:polytope,prop:polytope_bis}]
	See Appendix~\ref{app:polytope}.
\end{proof}

Observe that \Cref{prop:polytope} is
a rephrasing of \cite[Theorem~4.2]{J10},
which was stated in the special case of Whittle networks without internal transitions.
We propose an alternative proof of this result
that is more constructive.
Our proof shows in particular that,
even if $\cB_\cX$ has as many vertices as
there are Ferrers subsets of~$\cX$,
a particular $\Gamma \in \cB_\cX$
can be written as a convex combination
of only $\textrm{card}(\cX)$ of these functions.

\subsection{The cost of balance} \label{sec:cost}

\Cref{sec:numerics} will illustrate
one benefit of balanced policies
when it comes to optimizing performance
in systems with an infinite state space.
Before that, we make a digression to give insight into
the performance loss of balanced policies
with respect to an optimal policy,
still focusing on admission control
and systems with a finite state space for simplicity.
We combine insights from the literature on (quasi-)reversibility
with numerical results in toy admission control examples
solved by linear programming.

\subsubsection{Dimension reduction} \label{sec:cost_dimension}

Consider an admission control problem
$\cP = (Q, \rcont, \rdisc)$,
where $Q = (\cS, |\cdot|, q)$ has a finite state space~$\cS$,
and let $\cX = |\cS|$.
An obvious but insightful observation is that
the balance condition reduces the degree of freedom
in the choice of the admission probabilities,
as the admission policy~$\gamma = \delta \Gamma$ is fully specified
once the balance function~$\Gamma$ is given.
The number of values necessary to specify these, 
is therefore reduced from $\Theta(n m)$ to $\Theta(m)$,
where $n$ is the number of classes
and $m = \card(\cX)$ is the number of states.

\paragraph*{Monotonicity}

One way of illustrating this reduction of dimensionality
consists of describing how it restricts
the set of feasible admission probabilities.
This is the point of \Cref{prop:monotonicity} below,
which is a direct consequence of the balance condition.
\begin{proposition} \label{prop:monotonicity}
	Let $n \in \bN_{> 0}$ and $i, j \in \un$.
	Consider $\gamma = \delta \Gamma$,
	where $\Gamma \in \cB_\cX$
	and $\cX \subseteq \bN^n$ is a Ferrers set.
	For each $\diamond \in \{<, \le, \ge, >\}$
	and $x \in \cX$ such that 
	$x + e_i, x + e_j \in \supp(\Gamma)$ and $x + e_i + e_j \in |\cS|$,
	we have $\gamma_i(x) \diamond \gamma_i(x + e_j)$
	if and only if $\gamma_j(x) \diamond \gamma_j(x + e_i)$.
	In particular, the class-$i$ admission probability~$\gamma_i$ is
	increasing (resp.\ non-decreasing, non-increasing, decreasing)
	with respect to the number of class~$j$ customers
	if and only if the class-$j$ admission probability~$\gamma_j$ is
	increasing (resp.\ non-decreasing, non-increasing, decreasing)
	with respect to the number of class-$i$ customers.
\end{proposition}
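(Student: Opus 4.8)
The plan is to derive the whole statement from the balance condition~\eqref{eq:balance_condition}, which $\gamma$ satisfies precisely because it is $\Gamma$-balanced. The only point requiring genuine care is to ensure that, throughout the argument, we multiply and divide inequalities \emph{only} by quantities known to be strictly positive; everything else is a short chain of equivalences.

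First I would settle the relevant positivity. Since $x + e_i \in \supp(\Gamma)$ and $\supp(\Gamma)$ is a Ferrers set (because $\Gamma \in \cB_\cX$, see~\eqref{eq:defBX}), coordinate convexity yields $x \in \supp(\Gamma)$, hence $\Gamma(x) > 0$. Combining this with $\Gamma(x + e_i), \Gamma(x + e_j) > 0$ and the defining relation~\eqref{eq:balance_function_bis} gives $\gamma_i(x) = \Gamma(x + e_i)/\Gamma(x) > 0$ and $\gamma_j(x) = \Gamma(x + e_j)/\Gamma(x) > 0$. By contrast, $\gamma_i(x + e_j)$ and $\gamma_j(x + e_i)$ need not be positive: they vanish exactly when $x + e_i + e_j \notin \supp(\Gamma)$. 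This is the subtle part of the proof, and it is the reason the argument is organized so as to never divide by these two (possibly zero) quantities.

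Next I would run the core equivalence. Fix $\diamond \in \{<, \le, \ge, >\}$. The hypotheses on $x, i, j$ are exactly those under which~\eqref{eq:balance_condition} applies, so $\gamma_i(x)\gamma_j(x + e_i) = \gamma_i(x + e_j)\gamma_j(x)$. Starting from $\gamma_j(x) \diamond \gamma_j(x + e_i)$, multiply both sides by $\gamma_i(x) > 0$ (which preserves $\diamond$) to get $\gamma_i(x)\gamma_j(x) \diamond \gamma_i(x)\gamma_j(x + e_i)$; substitute the balance identity in the right-hand side to obtain $\gamma_i(x)\gamma_j(x) \diamond \gamma_i(x + e_j)\gamma_j(x)$; finally divide by $\gamma_j(x) > 0$ to reach $\gamma_i(x) \diamond \gamma_i(x + e_j)$. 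Every step is reversible, so this establishes $\gamma_i(x) \diamond \gamma_i(x + e_j)$ if and only if $\gamma_j(x) \diamond \gamma_j(x + e_i)$, as claimed. The ``in particular'' clause is then immediate: instantiating $\diamond$ at each of $<, \le, \ge, >$ and quantifying over all admissible $x$ turns these pointwise equivalences into the four stated monotonicity equivalences between $\gamma_i$ (as a function of the number of class-$j$ customers) and $\gamma_j$ (as a function of the number of class-$i$ customers).
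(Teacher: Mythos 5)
Your proof is correct and follows essentially the same route as the paper's own argument: both rest on the balance condition~\eqref{eq:balance_condition} together with the positivity of $\gamma_i(x)$ and $\gamma_j(x)$ deduced from $x+e_i, x+e_j \in \supp(\Gamma)$ (the paper phrases it as the ratio identity $\gamma_i(x+e_j)/\gamma_i(x) = \gamma_j(x+e_i)/\gamma_j(x)$, which is just your multiply-then-divide chain written in one step). Your extra care in spelling out why $\Gamma(x) > 0$ via the Ferrers property, and in noting that $\gamma_i(x+e_j)$, $\gamma_j(x+e_i)$ may vanish, is a sound elaboration of what the paper leaves implicit.
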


\begin{proof}
	Let $\diamond \in \{\le, <, \ge, >\}$
	and $x \in \cX$ such that
	$x + e_i, x + e_j \in \supp(\Gamma)$ and $x + e_i + e_j \in |\cS|$.
	The equivalence of $\gamma_i(x) \diamond \gamma_i(x + e_j)$ and $\gamma_j(x) \diamond \gamma_j(x + e_i)$
	follows by rewriting~\eqref{eq:balance_condition} as
	\begin{align*}
		\frac{\gamma_i(x + e_j)}{\gamma_i(x)}
		&= \frac{\gamma_j(x + e_i)}{\gamma_j(x)},
	\end{align*}
	after recalling that $x + e_i, x + e_j \in \supp(\Gamma)$
	implies that $\gamma_i(x) > 0$ and $\gamma_j(x) > 0$.
	Alternately, this equivalence follows by observing that,
	according to~\eqref{eq:balance_function},
	$\gamma_i(x) \diamond \gamma_i(x + e_j)$ and $\gamma_j(x) \diamond \gamma_j(x + e_i)$
	are both equivalent to
	$\Gamma(x + e_i) \Gamma(x + e_j) \diamond \Gamma(x + e_i + e_j) \Gamma(x)$.
\end{proof}

\paragraph*{Alternative parameterization}

\begin{figure}[b]
	\centering
	\begin{tikzpicture}[xscale=2.5, yscale=2]
		\foreach \i in {0, ..., 4} {
			\foreach \j in {0, ..., 2} {
				\node (\i\j) at (\i, \j)
				{\small $\Gamma(\i, \j)$};
			}
		}
		
		\foreach \i/\k in {0/1, 1/2, 2/3, 3/4} {
			\draw[->, thick] (\i0)
			-- node[midway, below]
			{\small $\gamma_1(\i, 0)$}
			(\k0);
		}
		
		\foreach \i\k in {0/1, 1/2, 2/3, 3/4} {
			\foreach \j in {1, 2} {
				\draw[->, densely dashed] (\i\j) -- (\k\j);
			}
		}
		
		\foreach \i in {0, ...,4} {
			\foreach \j/\k in {0/1, 1/2} {
				\draw[->, thick] (\i\j)
				-- node[midway, rotate=90, above]
				{\small $\gamma_2(\i, \j)$}
				(\i\k);
			}
		}
		
	\end{tikzpicture}
	\caption{%
		A complete and consistent definition
		of the balanced admission probabilities
		in a system $n = 2$ classes and
		macrostate space $\cX = \llbracket 0, 4 \rrbracket \times \llbracket 0, 2 \rrbracket$.
		The admission probabilities along the dashed arrows are fixed
		once we fix the admission probabilities along the solid edges.
	}
	\label{fig:balance_conditiond-admission-probabilities}
\end{figure}

To understand more concretely how the balance condition
restricts the choice of the admission probabilities,
let us consider an alternative way
of specifying the admission probabilities.
Consider a Ferrers set $\cX \subseteq \bN^n$,
a balance function $\Gamma \in \cB^{\text{ac}}_\cX$,
and a $\Gamma$-balanced admission policy~$\gamma$.
By replacing $\cX$ with $\supp(\Gamma)$ if necessary,
we can assume that $\Gamma$ is positive.
We can show that,
under the balance condition~\eqref{eq:balance_function}--\eqref{eq:balance_condition},
\emph{all} admission probabilities under~$\gamma$ are specified
once we set the values of the following admission probabilities:
$\gamma_i(x_1, x_2, \ldots, x_i, 0, \ldots, 0)$,
for each $i \in \un$
and $(x_1, x_2, \ldots, x_{i-1}, x_i + 1, 0, \ldots, 0) \in \cX$.
Indeed, unfolding recursion~\eqref{eq:balance_function} shows that, for each $x \in \cX$,
$\Gamma(x)$ can be computed by taking the product
of admission probabilities along the increasing path
going from the origin to~$x$
by going parallel first to the $x_1$-axis,
then to the $x_2$-axis, and so on:
\begin{align*}
	\Gamma(x)
	= \prod_{i = 1}^n \prod_{y = 0}^{x_i - 1}
	\gamma_i(x_1, \ldots, x_{i-1}, y, 0, \ldots, 0),
	\quad \text{for each } x \in \cX.
\end{align*}
An example is shown
in \Cref{fig:balance_conditiond-admission-probabilities}
for a queueing system with $n = 2$ classes
and macrostate space $\cX = \llbracket 0, 4 \rrbracket \times \llbracket 0, 2 \rrbracket$.
We only need to specify
$\gamma_1(x_1, 0)$ for $x_1 \in \llbracket 0, 4 \llbracket$
and $\gamma_2(x_1, x_2)$ for $x_1 \in \llbracket 0, 4 \rrbracket$ and $x_2 \in \llbracket 0, 2 \llbracket$.
The value of $\gamma_1(x_1, x_2)$
for $x_1 \in \llbracket 0, 4 \llbracket$ and $x_2 \in \rrbracket 0, 2 \rrbracket$
follows by writing
\begin{align*}
	\gamma_1(x_1, x_2)
	&= \frac{\Gamma(x_1 + 1, x_2)}{\Gamma(x_1, x_2)}
	= \gamma_1(x_1, 0)
	\prod_{j = 0}^{x_2 - 1} \frac{\gamma_2(x_1 + 1, j)}{\gamma_2(x_1, j)}\cdot
\end{align*}
Intuitively, the class-$1$ admission probability
in state $(x_1, x_2)$ is ``small''
if the class-$1$ admission probability
is already ``small'' in the absence of class~$2$
and/or if the class-$2$ admission probability
is ``significantly reduced''
by adding a class-$1$ customer
in a state $(x_1, y)$ for  $y \in \llbracket 0, x_2 \llbracket$.

\paragraph*{Toy example}

Let us illustrate this concept on a toy example.
Consider a 2-class processor-sharing queue.
Customers of each class arrive according to
an independent Poisson process with rate~$0.1$.
The queue can contain at most~5 customers of each class,
so that an incoming customer is rejected
if the queue already contains 5 customers of the corresponding class.
The service requirement of each customer is exponentially distributed with mean~1,
and the service capacity of the server is also~1.
In this way, we consider a queueing system $Q_\gamma = (\cS, \textrm{Id}, q_\gamma)$,
where $\cS = \llbracket 0, 5 \rrbracket^2$,
\begin{align*}
	\left.
	\begin{aligned}
		q_\gamma(s, s + e_i) &= 0.1 \gamma_i(s) \indicator{s_i < 5}, \\
		q_\gamma(s, s - e_i) &= \frac{s_i}{s_1 + s_2},
	\end{aligned}
	\right\}
	\quad \text{for each } s \in \cS
	\text{ and } i \in \llbracket 1, 2 \rrbracket,
\end{align*}
and other entries of~$q_\gamma$ are zero.
The reward functions are given by
$\rcont(s) = 0$ and
$\rdisc(s, t) = \indicator{s \in \mathcal{L}, t \in \mathcal{L}, \text{ and } |s| \le |t|}$, for each $s, t \in \cS$, where
\begin{align*}
\mathcal{L} = \Bigl\{(0, 0), (1, 0), (2, 0), (2, 1), (3, 1), (3, 2), (3, 3), (3, 4), (4, 4), (4, 5), (5, 5)\Bigl\}. 
\end{align*}
In other words, a reward is received each time an admission transition occurs along the path~$\mathcal{L}$, and no reward is received otherwise.
Note that this reward structure is chosen, not to be realistic,
but to illustrate the impact of the dimension reduction in balanced policies.
Recall that, by \Cref{sec:admission_control_problem_and_policies,sec:optimal_balanced_policies},
the optimal policy and the best balanced policy
can both be chosen to be deterministic.

\begin{figure}[htb]
	\subfloat[Optimal policy\label{fig:cost_dimension_policies_a}]{%
		\begin{tikzpicture}[
			state/.style={circle, draw, inner sep=1, outer sep=1, font=\tiny, text=BurntOrange, minimum size=20},
			action/.style={->, thick, SkyBlue},
			]
			\draw[->] (5.45, 0) -- node[pos=1, above] {$x_1$} (5.75, 0);
			\draw[->] (0, 5.45) -- node[pos=1, right] {$x_2$} (0, 5.75);
			
			\node[state, fill=SkyBlue!100] (00) at (0, 0) {-0.05};
			\node[state, fill=SkyBlue!71.76] (01) at (0, 1) {-3.57};
			\node[state, fill=SkyBlue!51.89] (02) at (0, 2) {-6.06};
			\node[state, fill=SkyBlue!41.53] (03) at (0, 3) {-7.35};
			\node[state, fill=SkyBlue!31.21] (04) at (0, 4) {-8.64};
			\node[state, fill=SkyBlue!10.4] (05) at (0, 5) {-11.24};
			\node[state, fill=SkyBlue!91.99] (10) at (1, 0) {-1.05};
			\node[state, fill=SkyBlue!74.15] (11) at (1, 1) {-3.27};
			\node[state, fill=SkyBlue!55.53] (12) at (1, 2) {-5.6};
			\node[state, fill=SkyBlue!46.17] (13) at (1, 3) {-6.77};
			\node[state, fill=SkyBlue!36.79] (14) at (1, 4) {-7.94};
			\node[state, fill=SkyBlue!16.63] (15) at (1, 5) {-10.46};
			\node[state, fill=SkyBlue!83.77] (20) at (2, 0) {-2.07};
			\node[state, fill=SkyBlue!75.88] (21) at (2, 1) {-3.06};
			\node[state, fill=SkyBlue!57.76] (22) at (2, 2) {-5.32};
			\node[state, fill=SkyBlue!49.16] (23) at (2, 3) {-6.4};
			\node[state, fill=SkyBlue!40.6] (24) at (2, 4) {-7.47};
			\node[state, fill=SkyBlue!20.98] (25) at (2, 5) {-9.92};
			\node[state, fill=SkyBlue!62.86] (30) at (3, 0) {-4.69};
			\node[state, fill=SkyBlue!67.68] (31) at (3, 1) {-4.08};
			\node[state, fill=SkyBlue!59.7] (32) at (3, 2) {-5.08};
			\node[state, fill=SkyBlue!51.71] (33) at (3, 3) {-6.08};
			\node[state, fill=SkyBlue!43.54] (34) at (3, 4) {-7.1};
			\node[state, fill=SkyBlue!24.39] (35) at (3, 5) {-9.49};
			\node[state, fill=SkyBlue!20.65] (40) at (4, 0) {-9.96};
			\node[state, fill=SkyBlue!26.24] (41) at (4, 1) {-9.26};
			\node[state, fill=SkyBlue!30.05] (42) at (4, 2) {-8.78};
			\node[state, fill=SkyBlue!32.98] (43) at (4, 3) {-8.42};
			\node[state, fill=SkyBlue!35.39] (44) at (4, 4) {-8.12};
			\node[state, fill=SkyBlue!27.21] (45) at (4, 5) {-9.14};
			\node[state, fill=SkyBlue!0] (50) at (5, 0) {-12.54};
			\node[state, fill=SkyBlue!6.23] (51) at (5, 1) {-11.76};
			\node[state, fill=SkyBlue!10.58] (52) at (5, 2) {-11.22};
			\node[state, fill=SkyBlue!13.99] (53) at (5, 3) {-10.79};
			\node[state, fill=SkyBlue!16.81] (54) at (5, 4) {-10.44};
			\node[state, fill=SkyBlue!19.22] (55) at (5, 5) {-10.14};
			
			\foreach \i/\j/\k in {0/0/1, 1/0/2, 1/1/2, 2/1/3, 2/2/3, 2/3/3, 3/4/4, 4/5/5} {
				\draw[action] (\i\j) -- (\k\j);
			}
			
			\foreach \i/\j/\k in {2/0/1, 3/1/2, 3/2/3, 3/3/4, 4/4/5} {
				\draw[action] (\i\j) -- (\i\k);
			}
		\end{tikzpicture}
	}
	\hfill
	\subfloat[Best balanced policy\label{fig:cost_dimension_policies_b}]{%
		\begin{tikzpicture}[
			emptystate/.style={circle, draw, inner sep=1, outer sep=1, font=\tiny, text=BurntOrange},
			state/.style={emptystate, minimum size=20},
			action/.style={->, thick, SkyBlue},
			]
			\draw[->] (5.2, 0) -- node[pos=1, above] {$x_1$} (5.75, 0);
			\draw[->] (0, 5.2) -- node[pos=1, right] {$x_2$} (0, 5.75);
			
			\node[state, fill=SkyBlue!100] (00) at (0, 0) {-0.05};
			\node[emptystate] (01) at (0, 1) {};
			\node[emptystate] (02) at (0, 2) {};
			\node[emptystate] (03) at (0, 3) {};
			\node[emptystate] (04) at (0, 4) {};
			\node[emptystate] (05) at (0, 5) {};
			\node[state, fill=SkyBlue!50] (10) at (1, 0) {-1.05};
			\node[emptystate] (11) at (1, 1) {};
			\node[emptystate] (12) at (1, 2) {};
			\node[emptystate] (13) at (1, 3) {};
			\node[emptystate] (14) at (1, 4) {};
			\node[emptystate] (15) at (1, 5) {};
			\node[state, fill=SkyBlue!0] (20) at (2, 0) {-2.05};
			\node[emptystate] (21) at (2, 1) {};
			\node[emptystate] (22) at (2, 2) {};
			\node[emptystate] (23) at (2, 3) {};
			\node[emptystate] (24) at (2, 4) {};
			\node[emptystate] (25) at (2, 5) {};
			\node[emptystate] (30) at (3, 0) {};
			\node[emptystate] (31) at (3, 1) {};
			\node[emptystate] (32) at (3, 2) {};
			\node[emptystate] (33) at (3, 3) {};
			\node[emptystate] (34) at (3, 4) {};
			\node[emptystate] (35) at (3, 5) {};
			\node[emptystate] (40) at (4, 0) {};
			\node[emptystate] (41) at (4, 1) {};
			\node[emptystate] (42) at (4, 2) {};
			\node[emptystate] (43) at (4, 3) {};
			\node[emptystate] (44) at (4, 4) {};
			\node[emptystate] (45) at (4, 5) {};
			\node[emptystate] (50) at (5, 0) {};
			\node[emptystate] (51) at (5, 1) {};
			\node[emptystate] (52) at (5, 2) {};
			\node[emptystate] (53) at (5, 3) {};
			\node[emptystate] (54) at (5, 4) {};
			\node[emptystate] (55) at (5, 5) {};
			
			\foreach \i/\k in {0/1, 1/2} {
				\foreach \j in {0} {
					\draw[action] (\i\j) -- (\k\j);
				}
			}
			
		\end{tikzpicture}
	}
	\caption{Comparison of the optimal policy and the best balanced policy in the toy example of \Cref{sec:cost_dimension}. Blue edges represent admissions that are authorized (with probability 1) by the policy. The value inside each state gives the logarithm in base 10 of the value of the stationary distribution in this state under the depicted policy.}
	\label{fig:cost_dimension_policies}
\end{figure}

\begin{figure}[htb]
	\includegraphics[width=.6\textwidth]{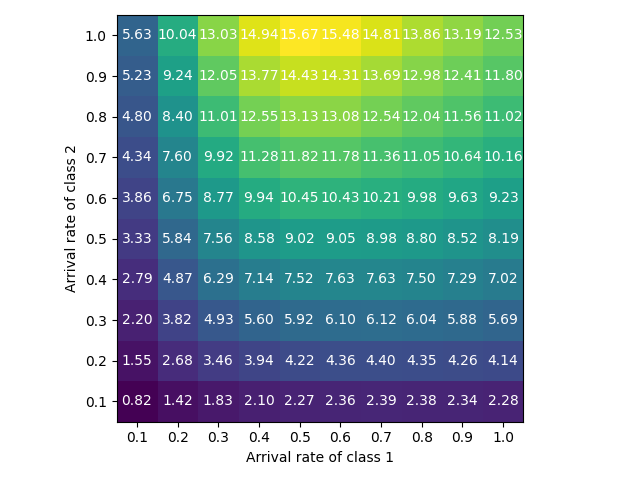}
	\caption{Percentage loss of the best balanced policy with respect to the optimal policy, as a function of the per-class arrival rates in the toy example of \Cref{sec:cost_dimension}.}
	\label{fig:cost_dimension_percentage_loss}
\end{figure}

\Cref{fig:cost_dimension_policies} shows
the optimal policy and the best balanced policy.
States are depicted as circles on a grid.
Thick blue edges represent admissions that are authorized by the corresponding policy,
and the value inside each state shows the logarithm of the stationary probability of being in this state under the corresponding policy.
In \Cref{fig:cost_dimension_policies_a}, we see that the optimal policy
allows transitions along the path~$\cP$,
plus a few additional transitions.
On the contrary, the best balanced policy shown in \Cref{fig:cost_dimension_policies_b}
admits only arrivals of class~1, and only in states $x = (0, 0)$ and $x = (0, 1)$.
With our choice of arrival rates,
such a significant difference in policy definition
has a small impact on the policy's gain,
as large states have a small stationary probability anyway.
More precisely, the suboptimality of the best balanced policy in this toy example is given by
\begin{align*}
	\textrm{Loss} &= \frac{G^{\gamma^*} - G^{\delta\Gamma*}}{G^{\gamma^*} - G^{\gamma_*}}
	\simeq 0.82\%,
\end{align*}
where $\gamma^*$ is an optimal policy, $\gamma_*$ a worst policy, and $\gammabac^*$ the best balanced policy. In this particular example, $\gamma_*$ is the reject-all policy and $G^{\gamma_*} = 0$.

\Cref{fig:cost_dimension_percentage_loss} shows the evolution of the loss of the best balanced policy with respect to the optimal policy, as a function of the arrival rates of the two classes. The loss of approximately 0.82\% mentioned above appears on the lower left square, corresponding to both classes having an arrival rate of 0.1. In general, the structure of the optimal and best balanced policies remain similar to \Cref{fig:cost_dimension_policies}, but the loss increases when larger states become more likely. The suboptimality of balanced policies is maximal when the arrival rate of class~1 is 0.5 and that of class~2 is 1. Detailed results (not shown here) reveal that the optimal policy tends to accept class-1 customers in more states than in \Cref{fig:cost_dimension_policies_a}, while the best balanced policy is the same as in \Cref{fig:cost_dimension_policies_b}.

\subsubsection{No net circulation in state space} \label{sec:cost_circulation}

Classical works on reversibility
also help shed light on the balance condition.
According to~\cite{W75},
``a reversible Markov process shows no net circulation in state-space''.
This means that,
in a reversible stationary \gls{CTMC},
a cycle in the transition diagram
is traversed equally frequently,
whether we orient this cycle one way or the other,
as formalized by Kolmogorov's criterion \cite[Section~1.5]{K11}.
We observed in \Cref{sec:reversibility}
that the macrostate-process of a quasi-reversible queueing system
has the same stationary distribution and transition rates
as a reversible \gls{CTMC}.
Therefore, \Cref{theo:balanced_policies_preserve_quasi_reversibility} implies that
a quasi-reversible queueing system subject to a balanced policy
also shows no net circulation in state space,
when looked at the macrostate level.

\paragraph*{Toy example}

Let us illustrate this concept with 
the same toy example as in \Cref{sec:cost_dimension},
namely, a processor-sharing queue with two customer classes,
each arriving as a Poisson process with rate $0.1$,
and with state space $\cS = \llbracket 0, 5 \rrbracket^2$.
Compared to \Cref{sec:cost_dimension},
we change the reward functions, which are now given by
$\rcont(s) = \indicator{s_1 = 5, s_2 = 0}$
and $\rdisc(s, t) = 0$,
for each $s, t \in \cS$.
In other words, the goal is to maximize the probability of being in the state
where the system is full of class-1 customers and empty of class-2 customers.
Again, recall that this example has been built
to provide insights on the impact of the balance condition,
and not to be realistic.

\begin{figure}[htb]
	\subfloat[Optimal policy\label{fig:cost_circulation_policies_a}]{%
		\begin{tikzpicture}[
			state/.style={circle, draw, inner sep=1, outer sep=1, font=\tiny, text=BurntOrange, minimum size=20},
			action/.style={->, thick, SkyBlue},
			]
			\draw[->] (5.45, 0) -- node[pos=1, above] {$x_1$} (5.75, 0);
			\draw[->] (0, 5.45) -- node[pos=1, right] {$x_2$} (0, 5.75);
			
			\node[state, fill=SkyBlue!99.04] (00) at (0, 0) {-0.10};
			\node[state, fill=SkyBlue!89.02] (01) at (0, 1) {-1.10};
			\node[state, fill=SkyBlue!78.99] (02) at (0, 2) {-2.10};
			\node[state, fill=SkyBlue!68.94] (03) at (0, 3) {-3.11};
			\node[state, fill=SkyBlue!58.88] (04) at (0, 4) {-4.11};
			\node[state, fill=SkyBlue!48.81] (05) at (0, 5) {-5.12};
			\node[state, fill=SkyBlue!89.05] (10) at (1, 0) {-1.09};
			\node[state, fill=SkyBlue!81.9 ] (11) at (1, 1) {-1.81};
			\node[state, fill=SkyBlue!73.47] (12) at (1, 2) {-2.65};
			\node[state, fill=SkyBlue!64.51] (13) at (1, 3) {-3.55};
			\node[state, fill=SkyBlue!55.27] (14) at (1, 4) {-4.47};
			\node[state, fill=SkyBlue!45.84] (15) at (1, 5) {-5.42};
			\node[state, fill=SkyBlue!79.36] (20) at (2, 0) {-2.06};
			\node[state, fill=SkyBlue!72.12] (21) at (2, 1) {-2.79};
			\node[state, fill=SkyBlue!63.64] (22) at (2, 2) {-3.64};
			\node[state, fill=SkyBlue!54.65] (23) at (2, 3) {-4.54};
			\node[state, fill=SkyBlue!45.35] (24) at (2, 4) {-5.47};
			\node[state, fill=SkyBlue!35.57] (25) at (2, 5) {-6.44};
			\node[state, fill=SkyBlue!69.59] (30) at (3, 0) {-3.04};
			\node[state, fill=SkyBlue!62.29] (31) at (3, 1) {-3.77};
			\node[state, fill=SkyBlue!53.78] (32) at (3, 2) {-4.62};
			\node[state, fill=SkyBlue!44.75] (33) at (3, 3) {-5.52};
			\node[state, fill=SkyBlue!35.4 ] (34) at (3, 4) {-6.46};
			\node[state, fill=SkyBlue!25.33] (35) at (3, 5) {-7.47};
			\node[state, fill=SkyBlue!59.77] (40) at (4, 0) {-4.02};
			\node[state, fill=SkyBlue!52.44] (41) at (4, 1) {-4.76};
			\node[state, fill=SkyBlue!43.9 ] (42) at (4, 2) {-5.61};
			\node[state, fill=SkyBlue!34.86] (43) at (4, 3) {-6.51};
			\node[state, fill=SkyBlue!25.44] (44) at (4, 4) {-7.46};
			\node[state, fill=SkyBlue!15.12] (45) at (4, 5) {-8.49};
			\node[state, fill=SkyBlue!49.9 ] (50) at (5, 0) {-5.01};
			\node[state, fill=SkyBlue!42.62] (51) at (5, 1) {-5.74};
			\node[state, fill=SkyBlue!34.11] (52) at (5, 2) {-6.59};
			\node[state, fill=SkyBlue!25.08] (53) at (5, 3) {-7.49};
			\node[state, fill=SkyBlue!15.64] (54) at (5, 4) {-8.44};
			\node[state, fill=SkyBlue!5.12] (55) at (5, 5) {-9.49};
			
			\foreach \i/\k in {0/1, 1/2, 2/3, 3/4, 4/5} {
				\foreach \j in {0, 1, ..., 5} {
					\draw[action] (\i\j) -- (\k\j);
				}
			}
			
			\foreach \i in {0, 1}  {
				\foreach \j/\k in {0/1, 1/2, 2/3, 3/4, 4/5} {
					\draw[action] (\i\j) -- (\i\k);
				}
			}
		\end{tikzpicture}
	}
	\hfill
	\subfloat[Best balanced policy\label{fig:cost_circulation_policies_b}]{%
		\begin{tikzpicture}[
			emptystate/.style={circle, draw, inner sep=1, outer sep=1, font=\tiny, text=BurntOrange},
			state/.style={emptystate, minimum size=20},
			action/.style={->, thick, SkyBlue},
			]
			\draw[->] (5.45, 0) -- node[pos=1, above] {$x_1$} (5.75, 0);
			\draw[->] (0, 5.2) -- node[pos=1, right] {$x_2$} (0, 5.75);
			
			\node[state, fill=SkyBlue!99.54] (00) at (0, 0) {-0.05};
			\node[emptystate] (01) at (0, 1) {};
			\node[emptystate] (02) at (0, 2) {};
			\node[emptystate] (03) at (0, 3) {};
			\node[emptystate] (04) at (0, 4) {};
			\node[emptystate] (05) at (0, 5) {};
			\node[state, fill=SkyBlue!89.54] (10) at (1, 0) {-1.05};
			\node[emptystate] (11) at (1, 1) {};
			\node[emptystate] (12) at (1, 2) {};
			\node[emptystate] (13) at (1, 3) {};
			\node[emptystate] (14) at (1, 4) {};
			\node[emptystate] (15) at (1, 5) {};
			\node[state, fill=SkyBlue!79.54] (20) at (2, 0) {-2.05};
			\node[emptystate] (21) at (2, 1) {};
			\node[emptystate] (22) at (2, 2) {};
			\node[emptystate] (23) at (2, 3) {};
			\node[emptystate] (24) at (2, 4) {};
			\node[emptystate] (25) at (2, 5) {};
			\node[state, fill=SkyBlue!69.54] (30) at (3, 0) {-3.05};
			\node[emptystate] (31) at (3, 1) {};
			\node[emptystate] (32) at (3, 2) {};
			\node[emptystate] (33) at (3, 3) {};
			\node[emptystate] (34) at (3, 4) {};
			\node[emptystate] (35) at (3, 5) {};
			\node[state, fill=SkyBlue!59.54] (40) at (4, 0) {-4.05};
			\node[emptystate] (41) at (4, 1) {};
			\node[emptystate] (42) at (4, 2) {};
			\node[emptystate] (43) at (4, 3) {};
			\node[emptystate] (44) at (4, 4) {};
			\node[emptystate] (45) at (4, 5) {};
			\node[state, fill=SkyBlue!49.54] (50) at (5, 0) {-5.05};
			\node[emptystate] (51) at (5, 1) {};
			\node[emptystate] (52) at (5, 2) {};
			\node[emptystate] (53) at (5, 3) {};
			\node[emptystate] (54) at (5, 4) {};
			\node[emptystate] (55) at (5, 5) {};
			
			\foreach \i/\k in {0/1, 1/2, 2/3, 3/4, 4/5} {
				\foreach \j in {0} {
					\draw[action] (\i\j) -- (\k\j);
				}
			}
		\end{tikzpicture}
	}
	\caption{Comparison of the optimal policy and the best balanced policy in the toy example of \Cref{sec:cost_circulation}.
		Blue edges represent admissions that are authorized (with probability 1) by the policy.
		The value inside each state gives the logarithm in base 10 of the value of the stationary distribution in this state under the depicted policy. We see that state $x = (5, 0)$ is more likely under the optimal policy
		($\log \Pi^{\gamma^*}(5, 0) = -5.01$) than under the best balanced policy
		($\log \Pi^{\delta \Gamma^*}(5, 0) = -5.05$).}
	\label{fig:cost_circulation_policies}
\end{figure}

\begin{figure}[htb]
	\includegraphics[width=.6\textwidth]{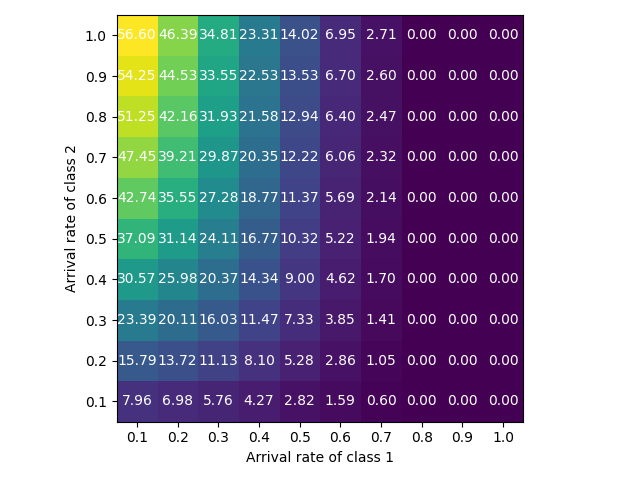}
	\caption{Percentage loss of the best balanced policy with respect to the optimal policy, as a function of the per-class arrival rates in the toy example of \Cref{sec:cost_circulation}.}
	\label{fig:cost_circulation_percentage_loss}
\end{figure}

The results are shown in
\Cref{fig:cost_circulation_policies}.
\Cref{fig:cost_circulation_policies_a} shows
the optimal policy,
and \Cref{fig:cost_circulation_policies_b} shows
the best balanced policy.
States are depicted as circles on a grid.
Thick edges represent admissions that are authorized
by the corresponding policy,
and the value inside each state shows
the logarithm of the stationary probability
of being in this state
under the corresponding policy.
The main observation is that,
compared to $\delta \Gamma^*$,
the optimal policy~$\gamma^*$
leads to a higher probability of being in state $(5, 0)$
by admitting class-2 customers in states with 0 or 1 class-1 customers. Intuitively, since the arrival rates are small compared to the service rates, it is advantageous to exploit circulation along cycles that visit states with a positive number of class-2 customers.
To give an order of magnitude of the suboptimality of balanced policies in this toy example, we have
\begin{align*}
	\text{Loss}
	= \frac
	{G^{\gamma^*} - G^{\delta \Gamma^*}}
	{G^{\gamma^*} - G^{\gamma_*}}
	\simeq
	7.96\%.
\end{align*}
\Cref{fig:cost_circulation_percentage_loss} shows the evolution of the loss as a function of the arrival rates of both classes.
In this example,
the loss of the best balanced policy
with respect to the optimal policy
decreases with the arrival rate of class~1
and increases with that of class~2.
Intuitively, as the arrival rate of class~1 increases,
the optimal policies shifts from the policy of \Cref{fig:cost_circulation_policies_a}
(leading to a maximal loss)
to the best balanced policy
shown in \Cref{fig:cost_circulation_policies_b}
(leading to a zero loss).

\subsubsection{A more realistic scenario} \label{sec:cost_realistic}

Let us consider one last variant of the toy example of \Cref{sec:cost_dimension},
corresponding to a more realistic application.
We still have a processor-sharing queue with two customer classes, each arriving as a Poisson process with rate 0.1, and with state space $\cS = \llbracket 0, 5 \rrbracket^2$. We consider the following cost structure: for each $s, t \in \cS$,
\begin{align*}
	\rcont(s) &= - s_1 - s_2,
	&
	\rdisc(s, t) &= 3 \indicator{|t| = |s| + e_1} + 6 \indicator{|t| = |s| + e_2}.
\end{align*}
In this way, $\rcont$ represents a holding cost of 1 per job per time unit, and $\rdisc$ represents an admission reward of 3 for class-1 jobs and 6 for class-2 jobs.

\begin{figure}[htb]
	\subfloat[Optimal policy\label{fig:cost_realistic_policies_a}]{%
		\begin{tikzpicture}[
			emptystate/.style={circle, draw, inner sep=1, outer sep=1, font=\tiny, text=BurntOrange},
			state/.style={emptystate, minimum size=20},
			action/.style={->, thick, SkyBlue},
			]
			\draw[->] (5.2, 0) -- node[pos=1, above] {$x_1$} (5.75, 0);
			\draw[->] (0, 5.45) -- node[pos=1, right] {$x_2$} (0, 5.75);
			
			\node[state, fill=SkyBlue!100] (00) at (0, 0) {-0.09};
			\node[state, fill=SkyBlue!80.33] (01) at (0, 1) {-1.09};
			\node[state, fill=SkyBlue!61.2] (02) at (0, 2) {-2.06};
			\node[state, fill=SkyBlue!41.89] (03) at (0, 3) {-3.04};
			\node[state, fill=SkyBlue!22.46] (04) at (0, 4) {-4.03};
			\node[state, fill=SkyBlue!2.74] (05) at (0, 5) {-5.03};
			\node[state, fill=SkyBlue!80.22] (10) at (1, 0) {-1.1};
			\node[state, fill=SkyBlue!66.46] (11) at (1, 1) {-1.8};
			\node[state, fill=SkyBlue!46.7] (12) at (1, 2) {-2.8};
			\node[state, fill=SkyBlue!26.96] (13) at (1, 3) {-3.8};
			\node[state, fill=SkyBlue!7.23] (14) at (1, 4) {-4.8};
			\node[emptystate] (15) at (1, 5) {};
			\node[state, fill=SkyBlue!59.94] (20) at (2, 0) {-2.13};
			\node[state, fill=SkyBlue!39.78] (21) at (2, 1) {-3.15};
			\node[state, fill=SkyBlue!19.72] (22) at (2, 2) {-4.17};
			\node[state, fill=SkyBlue!0.] (23) at (2, 3) {-5.17};
			\node[emptystate] (24) at (2, 4) {};
			\node[emptystate] (25) at (2, 5) {};
			\node[emptystate] (30) at (3, 0) {};
			\node[emptystate] (31) at (3, 1) {};
			\node[emptystate] (32) at (3, 2) {};
			\node[emptystate] (33) at (3, 3) {};
			\node[emptystate] (34) at (3, 4) {};
			\node[emptystate] (35) at (3, 5) {};
			\node[emptystate] (40) at (4, 0) {};
			\node[emptystate] (41) at (4, 1) {};
			\node[emptystate] (42) at (4, 2) {};
			\node[emptystate] (43) at (4, 3) {};
			\node[emptystate] (44) at (4, 4) {};
			\node[emptystate] (45) at (4, 5) {};
			\node[emptystate] (50) at (5, 0) {};
			\node[emptystate] (51) at (5, 1) {};
			\node[emptystate] (52) at (5, 2) {};
			\node[emptystate] (53) at (5, 3) {};
			\node[emptystate] (54) at (5, 4) {};
			\node[emptystate] (55) at (5, 5) {};
			
			\foreach \i/\j/\k in {0/0/1, 0/1/1, 1/0/2} {
				\draw[action] (\i\j) -- (\k\j);
			}
			
			\foreach \i/\j/\k in {0/0/1, 0/1/2, 0/2/3, 0/3/4, 0/4/5, 1/0/1, 1/1/2, 1/2/3, 1/3/4, 2/0/1, 2/1/2, 2/2/3} {
				\draw[action] (\i\j) -- (\i\k);
			}
		\end{tikzpicture}
	}
	\hfill
	\subfloat[Best balanced policy\label{fig:cost_realistic_policies_b}]{%
		\begin{tikzpicture}[
			emptystate/.style={circle, draw, inner sep=1, outer sep=1, font=\tiny, text=BurntOrange},
			state/.style={emptystate, minimum size=20},
			action/.style={->, thick, SkyBlue},
			]
			\draw[->] (5.2, 0) -- node[pos=1, above] {$x_1$} (5.75, 0);
			\draw[->] (0, 5.45) -- node[pos=1, right] {$x_2$} (0, 5.75);
			
			\node[state, fill=SkyBlue!100] (00) at (0, 0) {-0.1};
			\node[state, fill=SkyBlue!80] (01) at (0, 1) {-1.1};
			\node[state, fill=SkyBlue!60] (02) at (0, 2) {-2.1};
			\node[state, fill=SkyBlue!40] (03) at (0, 3) {-3.1};
			\node[state, fill=SkyBlue!20] (04) at (0, 4) {-4.1};
			\node[state, fill=SkyBlue!0] (05) at (0, 5) {-5.1};
			\node[state, fill=SkyBlue!80] (10) at (1, 0) {-1.1};
			\node[state, fill=SkyBlue!66.02] (11) at (1, 1) {-1.8};
			\node[state, fill=SkyBlue!49.54] (12) at (1, 2) {-2.62};
			\node[state, fill=SkyBlue!32.04] (13) at (1, 3) {-3.49};
			\node[emptystate] (14) at (1, 4) {};
			\node[emptystate] (15) at (1, 5) {};
			\node[state, fill=SkyBlue!60] (20) at (2, 0) {-2.1};
			\node[state, fill=SkyBlue!49.54] (21) at (2, 1) {-2.62};
			\node[emptystate] (22) at (2, 2) {};
			\node[emptystate] (23) at (2, 3) {};
			\node[emptystate] (24) at (2, 4) {};
			\node[emptystate] (25) at (2, 5) {};
			\node[emptystate] (30) at (3, 0) {};
			\node[emptystate] (31) at (3, 1) {};
			\node[emptystate] (32) at (3, 2) {};
			\node[emptystate] (33) at (3, 3) {};
			\node[emptystate] (34) at (3, 4) {};
			\node[emptystate] (35) at (3, 5) {};
			\node[emptystate] (40) at (4, 0) {};
			\node[emptystate] (41) at (4, 1) {};
			\node[emptystate] (42) at (4, 2) {};
			\node[emptystate] (43) at (4, 3) {};
			\node[emptystate] (44) at (4, 4) {};
			\node[emptystate] (45) at (4, 5) {};
			\node[emptystate] (50) at (5, 0) {};
			\node[emptystate] (51) at (5, 1) {};
			\node[emptystate] (52) at (5, 2) {};
			\node[emptystate] (53) at (5, 3) {};
			\node[emptystate] (54) at (5, 4) {};
			\node[emptystate] (55) at (5, 5) {};
			
			\foreach \i/\j/\k in {0/0/1, 1/0/2, 0/1/1, 1/1/2, 0/2/1, 0/3/1} {
				\draw[action] (\i\j) -- (\k\j);
			}
			
			\foreach \i/\j/\k in {0/0/1, 0/1/2, 0/2/3, 0/3/4, 0/4/5, 1/0/1, 1/1/2, 1/2/3, 2/0/1} {
				\draw[action] (\i\j) -- (\i\k);
			}
		\end{tikzpicture}
	}
	\caption{Comparison of the optimal policy and the best balanced policy in the toy example of \Cref{sec:cost_realistic}.
		Blue edges represent admissions that are authorized (with probability 1) by the policy.
		The value inside each state gives the logarithm in base 10 of the value of the stationary distribution in this state under the depicted policy.}
	\label{fig:cost_realistic_policies}
\end{figure}

\begin{figure}[htb]
	\includegraphics[width=.6\textwidth]{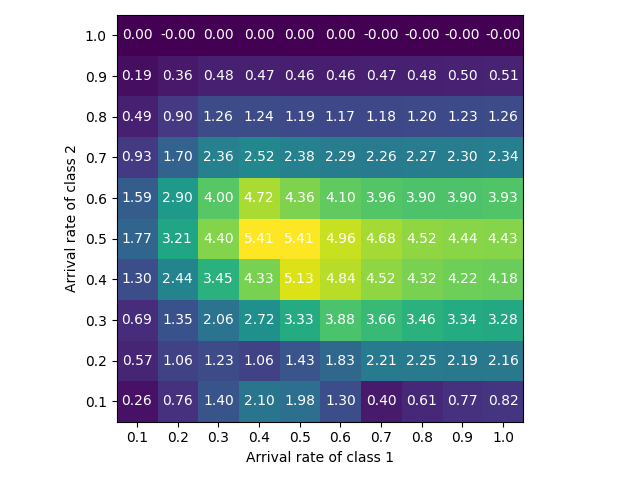}
	\caption{Percentage loss of the best balanced policy with respect to the optimal policy, as a function of the per-class arrival rates in the toy example of \Cref{sec:cost_realistic}.}
	\label{fig:cost_realistic_percentage_loss}
\end{figure}

The optimal policy and the best balanced policy are shown in \Cref{fig:cost_realistic_policies}.
The optimal policy and the best balanced policy have approximately the same set of recurrent states, but the admission probabilities inside this set are different. The optimal balanced policy is constrained by the admission space, i.e., there is a single deterministic balanced policy with a particuliar set of recurrent states. On the contrary, without the balance condition, there are many deterministic policies with a particular set of recurrent states. In this example, having approximately the same set of recurrent states seems to be enough to yield the good performance, as we have
\begin{align*}
	\textrm{Loss}
	&= \frac{G^{\gamma^*} - G^{\delta \Gamma^*}}{G^{\gamma^*} - G^{\gamma_*}}
	\simeq 0.26\%.
\end{align*}
\Cref{fig:cost_realistic_percentage_loss} shows the evolution of the loss when the arrival rates of the two classes vary. The loss is (approximately) zero when the arrival rate of class~2 is 1. Looking closer at the case where the arrival rate of class~1 is~0.5, we see that both the optimal policy and the best balanced policy always reject class~1 customers and only accept class~2. The admission thresholds of class~2 differ slightly: it is one value larger for the optimal policy than for the best balanced policy; we attribute this to rounding errors, as the difference in performance is of order $10^{-8}$. The worst case on \Cref{fig:cost_realistic_percentage_loss} is when both classes have arrival rate~0.5. In this case, the discussion is similar to that we had above: the optimal policy and the best balanced policy have approximately the same set of recurrent states, but within this set they sometimes make different decisions.

On a more general note, a systematic comparison of the performance of policies leading to (quasi-)reversible systems, to the optimal policy (obtained e.g. by solving an MDP), is an open research question. This problem appears to heavily depend on the model at hand, and on the optimality criterion. See for instance the discussion in \cite[Section~2]{Mattquesta22} (and references therein), in which a list of systems is mentioned for which both types of situations may arise: (i) balanced policies behave poorly with respect to the optimal policy, or (ii) their performance are close to the optimum.

\section{Case study} \label{sec:numerics}

To illustrate the results of this paper, we consider a
multi-class multi-server redundancy system with customer abandonment.
We will see that this queueing system can be described as an \gls{OI} queue
and is therefore quasi-reversible.
After describing the queueing system and the admission control problem,
we propose a gradient-based method to search numerically for the best policy
within a parametric family of balanced policies.
This method is inspired by~\cite{CJSS24}
and leverages the gradient result of \Cref{theo:gradient}.
The numerical results we obtain in this case study
illustrate two aspects of our work:
(i) we propose a stochastic gradient ascent algorithm that,
by exploiting the gradient expression of \Cref{theo:gradient},
converges faster than two classical \gls{RL} algorithms
in this example, and
(ii) we compare the performance of
different families of balance functions,
going from static to dynamic.

\subsection{Admission control problem} \label{sec:case_study_problem}

Consider a queueing system with $n$ customer classes and $m$ servers,
where $n, m \in \bN_{> 0}$.
For each $i \in \llbracket 1, n \rrbracket$,
class-$i$ customers enter the system
according to a Poisson process
with rate $\nu_i > 0$,
and each class-$i$ customer in the system
abandons at rate $\zeta_i > 0$.
Furthermore, for each $j \in \llbracket 1, m \rrbracket$,
the completion time of a customer at server~$j$
is exponentially distributed with rate $\mu_j > 0$.
We let $\nu = (\nu_1, \nu_2, \ldots, \nu_n)$,
$\zeta = (\zeta_1, \zeta_2, \ldots, \zeta_n)$,
and $\mu = (\mu_1, \mu_2, \ldots, \mu_m)$.
Compatibility between customers and servers
is described by a bipartite graph
with adjacency matrix
$B = (b_{i, j})_{i \in \llbracket 1, n \rrbracket \times j \in \llbracket 1, m \rrbracket}$,
such that $b_{i, j} = 1$ if class-$i$ customers can be processed by server~$j$,
and $b_{i, j} = 0$ otherwise.
An example compatiblity graph is shown in \Cref{fig:case_study_graph}.

\begin{figure}[ht]
	\centering
	\begin{tikzpicture}
		\foreach \i in {1, 2, 3} {
			\node (class\i) at (2.5*\i, 0) {Class~\i};
			\node (server\i) at (2.5*\i, -1.5) {Server~\i};
		}
		
		\foreach \i/\j in {1/1, 1/2, 2/2, 2/3, 3/3} {
			\draw[-] (class\i) -- (server\j);
		}
		
		\node at (11, -0.75) {%
			$\displaystyle
			B = \left[ \begin{matrix}
				1 & 1 & 0 \\
				0 & 1 & 1 \\
				0 & 0 & 1
			\end{matrix} \right]
			$
		};
	\end{tikzpicture}
	\caption{A compatiblity graph and its adjacency matrix.}
	\label{fig:case_study_graph}
\end{figure}

We assume that the scheduling discipline is redundancy cancel-on-complete,
and that each server processes its compatible jobs in first-come-first-served order.
In this way, at each time, each customer has a replica in service
on each server that is compatible with this customer class
but not with other customers that arrived earlier.
We assume that the service times of the replicas of a customer
are independent of each other
and of the service times of other customers' replicas.

Under the appropriate independence assumptions,
one can verify that this system is an order-independent queue,
as described in \Cref{sec:oi},
with rate function given by
\begin{align*}
	\mu(s_1, s_2, \ldots, s_n)
	&= \sum_{\substack{j = 1 \\ (|s|B)_j > 0}}^m \mu_j
	+ \sum_{i = 1}^n \zeta_i |s|_i,
	\quad \text{for each } s_1s_2\ldots s_\ell \in \cS,
\end{align*}
where $\cS = \llbracket 1, n \rrbracket^*$
and, for each $j \in \llbracket 1, m \rrbracket$,
$(|s|B)_j = \sum_{i = 1}^n |s|_i b_{i, j}$
is the number of customers in the system
that are compatible with server~$j$.
In particular, the stationary distribution
of the \gls{CTMC} defined by the system's microstate
is given by~\eqref{prop:oi}.
In the special case where $\zeta = 0$ (i.e., no abandonment),
we obtain the queueing system introduced in \cite{G16}
and described as an \gls{OI} queue in \cite{BC17}.
Under appropriate stability conditions,
closed-form expressions for the normalizing constant
as well as various performance metrics
have been derived in~\cite{G17-1,G17-2,BCM17}.
In the remainder, we focus instead on the case where $\zeta_i > 0$ for each $i \in \llbracket 1, n \rrbracket$.
Stability is guaranteed by the abandonments, irrespective of the relative values of $\lambda$ and $\mu$,
but deriving a closed-form expression for the normalizing constant is an open question that seems challenging.

Assume a reward $r_i > 0$ is received
every time a class-$i$ customer completes service,
for each $i \in \llbracket 1, n \rrbracket$.
If a customer is rejected or abandons before service completion,
no reward is received.
Our goal is to maximize the long-run average reward.
This problem can written as a special case of
the admission control problem~\eqref{eq:problem}
(albeit with an infinite state space) as follows.
For each $s = s_1s_2\cdots s_\ell \in \cS$
and $i \in \llbracket 1, n \rrbracket$ such that $|s|_i > 0$,
we define,
with $p = \argmin\{q \in \llbracket 1, \ell \rrbracket; s_q = i\}$,
and $t = s_1 s_2 \cdots s_{p-1} s_{p+1} \cdots s_\ell$,
\begin{align*}
	\rdisc(s, t) &= r_i \gamma, \quad \text{where }
	\gamma
	= \frac
	{\sum_{j \in \mathcal{J}} \mu_j}
	{\sum_{j \in \mathcal{J}} \mu_j + \zeta_i (d - p + 1)}, \\
	\mathcal{J}
	&= \{
	j \in \llbracket 1, m \rrbracket;
	b_{i, j} = 1
	\text{ and } b_{s_q, j} = 0 \text{ for each } q \in \llbracket 1, p-1 \rrbracket
	\}, \text{ and} \\
	d &= \max\left\{ q \in \llbracket p, \ell \rrbracket; c_r = i \text{ for each } r \in \llbracket p, q \rrbracket \right\}.
\end{align*}
Here, $\gamma$ is the probability that the departure of the class-$i$ customer
in the transition from state~$s$ to state~$t$ is due to a service completion,
$\mathcal{J}$ is the set of servers
that are processing a class-$i$ customer in state~$s$,
and $d$ is the latest position in the sequence of class-$i$ customers
that follow the oldest one.
Other rewards are zero.

Finding an optimal policy for this problem
is made complicated by the fact that the state space is infinite.
More specifically, computing the optimal policy is unfeasible
(e.g., using the linear programming approach
mentioned in \Cref{sec:admission_control_problem_and_policies},
or dynamic programming),
not only because the state space is infinite, but also
because we do not have a closed-form expression for long-run indicators
such as a the normalizing constant of the stationary distribution
and the state value function.

Also because the state space is infinite,
the results of \Cref{sec:optimal_balanced_policies}
cannot be applied directly,
though we conjecture that the best balanced policy
is again deterministic.
In the remainder of this section, we explain how to search
for the best balanced policy, by adapting
a \gls{PGRL} algorithm introduced in \cite{CJSS24}.
We will see that, while classical \gls{RL} algorithms without function approximation
again fall short in this problem,
the \gls{PGRL} algorithm we describe below exploits the form of the stationary distribution to provide an estimator of the gradient that works without function approximation, even when the state space is infinite.

\subsection{Reinforcement learning} \label{sec:case_study_rl}

Let us cast this admission control problem
into the framework of \gls{RL}.
Though this paragraph was written to be self-contained,
the interested reader can refer to
\cite{SB18} for a more complete introduction to \gls{RL}.
Since the only decision points are customer arrival times,
which are distributed as a Poisson process with rate $\sum_{i = 1}^n \nu_i$,
we can reformulate the problem as a (discrete-time)
\gls{MDP} by focusing on the state observed at arrival times:
\begin{itemize}
	\item State space $\cS \times \llbracket 1, n \rrbracket$:
	The \gls{MDP} is in state $(s, i) \in \cS \times \llbracket 1, n \rrbracket$
	if the state observed by the incoming customer is~$s$
	and if this customer belongs to class~$i$.
	\item Action space $\cA = \{0, 1\}$:
	Action~$0$ means that the incoming customer is rejected
	and action~$1$, that the customer is admitted.
	\item Reward space $\cR = \{\sum_{i = 1}^n x_i r_i\,;\, x \in \bN^n\}$:
	The reward is $\sum_{i = 1}^n x_i r_i$
	if, for each $j \in \llbracket 1, n \rrbracket$,
	$x_j$ is the number of class-$j$ customers that leave
	\emph{due to service completion}
	between the (class-$i$) arrival for which we make a decision
	and the next arrival.
\end{itemize}

According to the PASTA property~\cite{W82},
the stationary distribution of the \gls{DTMC}
of the system state observed at arrival times of customers (admitted or not)
is again given by~$\Pi_\theta$.
Therefore, the gain function can be rewritten as
\begin{align} \label{eq:pgrl_gain}
	G_\theta
	&= \sum_{s, i, a, r, s'} \Pi_\theta(s) \frac{\nu_i}{\sum_j \nu_j}
	\pi_\theta(s, i, a) p(r, s' | s, i, a) r,
	\quad \text{for each } \theta \in \bR^d,
\end{align}
where the sum is over $s, s' \in \cS$, $i \in \llbracket 1, n \rrbracket$,
$a \in \cA$, and $r \in \cR$.
Here, we let $\pi_\theta$ denote the policy in the sense of \gls{RL}, given by 
$\pi_\theta(s, i, a)$ is the probability
of taking action $a$ in state $(s, i)$, that is,
$$\pi_\theta(s, i, a)
= \gamma_{\theta, i}(s) \indicator{a = 1}
+ (1 - \gamma_{\theta, i}(s)) \indicator{a = 0},$$
where $\gamma_{\theta, i}(s)$ is the probability of admitting a class-$i$ customer in state~$s$, as before. Furthermove,
we let $p$ denote the transition kernel of the \gls{MDP}. 
We do not explicitate $p$ since its form will not play a role in the remainder
(we will only need to sample from this kernel, which is done by simulating the process).

We now describe three \gls{RL} algorithms that we will compare numerically in \Cref{sec:case_study_num}: \gls{SAGE}, \gls{AC}, and Q-learning. \gls{SAGE} exploits the product-form nature of the model, while \gls{AC} and Q-learning are classical model-free \gls{RL} algorithms.

\subsubsection{Score-aware gradient estimator (SAGE)} \label{sec:sage}

\Acrfullpl{SAGE} and the associated \gls{SAGE}-based policy-gradient algorithm (\gls{SAGE} for short) were introduced in \cite{CJSS24}. While the original paper focuses on stationary distributions that form an exponential family, here we consider an extension to accommodate stationary distributions of the form given in \Cref{theo:balanced_policies_preserve_quasi_reversibility}.

\begin{algorithm}
	\caption{\Acrfull{SAGE}}
	\label{algo:sage}
	\begin{algorithmic}[1]
		\State \textbf{Input:}
		\begin{minipage}[t]{.8\textwidth}
			Positive, differentiable
			policy parametrization~$\pi_\theta$
		\end{minipage}
		\vspace{0.25em}
		
		\State \textbf{Parameters:}
		\begin{minipage}[t]{.8\textwidth}
			$\bullet$
			Update times $0 \triangleq t_0 < t_1 < t_2 < \ldots$ \\
			$\bullet$
			Step size $\alpha_\Theta \in (0, 1)$ \\
			$\bullet$
			Initial policy parameter $\Theta_0 \in \Omega$ \\
			$\bullet$
			Balance function parametrization $\Gamma_\theta$
		\end{minipage}
		\vspace{0.25em}
		
		\State \textbf{Initialization:}
		\begin{minipage}[t]{.7\textwidth}
			State $(S_0, I_0) \in \cS \times \llbracket 1, n \rrbracket$
		\end{minipage}
		\vspace{0.25em}
		
		\State \textbf{Main loop:}
		\For{$m = 0, 1, 2, \ldots$}
		\For{$t = t_m, \ldots, t_{m+1} - 1$} \label{step:batch-start}
		\State Sample $A_t \sim \pi_{\Theta_m}(S_t, I_t, \cdot)$
		\State Take action $A_t$ and observe $R_{t+1}, S_{t+1}, I_{t+1}$
		\EndFor \label{step:batch-end}
		\State Update $\Theta_{m+1} \gets \Theta_m + \alpha_\Theta \textsc{GradientEstimate}(m)$ \label{step:sga}
		\EndFor
		\vspace{0.25em}
		
		\Procedure{GradientEstimate}{$m$}
		\State $N \gets t_{m+1} - t_m$
		\State $\overline{R} \gets \frac1N \sum_{t = t_m}^{t_{m+1} - 1} R_{t+1}$
		\State $\overline{C} \gets \frac1{N - 1} \sum_{t = t_m}^{t_{m+1} - 1} (R_{t+1} - \overline{R}) \nabla_\theta \log \Gamma_{\Theta_m}(S_t)$ \label{step:C}
		\State $\overline{E} \gets \frac1N \sum_{t = t_m}^{t_{m+1} - 1} R_{t+1} \nabla_\theta \log \pi_{\Theta_m}(S_t, I_t, A_t)$ \label{step:E}
		\EndProcedure
	\end{algorithmic}
\end{algorithm}

\Gls{SAGE} is a \gls{PGRL} algorithm~\cite[Chapter~13]{SB18}, which means that it optimizes the policy parameter~$\theta \in \bR^d$ via a stochastic gradient ascent, as shown on Line~\ref{step:sga} of \Cref{algo:sage}. Here, $\alpha_\Theta \in (0, 1)$ is the step size; the policy parameter is written in capitals ($\Theta$ instead of $\theta$) because it is a random variable, and an index~$m$ because it is updated over time. As the name suggests, the \textsc{GradientEstimate}($m$) procedure computes an estimate of the gradient, using the trajectory observed between times~$t_m$ and~$t_{m+1}$, sampled on Lines~\ref{step:batch-start}--\ref{step:batch-end}. Below, we justify the form of this procedure.

Consider a balance function parameterization $\theta \in \bR^d \mapsto \Gamma_\theta \in \cB^{\text{ac}}_{|\cS|}$
as described in \Cref{sec:gradient_of_parameterized_systems},
where $d \in \bN_{> 0}$.
\Cref{prop:pgrl_derivative} below
builds on \Cref{theo:gradient}
to derive an expression for the gradient $\nabla_\theta G_\theta$,
which is next used to justify the form
of the procedure \textsc{GradientEstimate} in \Cref{algo:sage}.
This result can be seen as an extension of \cite[Equation~(13) in Theorem~1]{CJSS24}.
For each $\theta \in \bR^d$, we let $(S_\theta, I_\theta, A_\theta, R_\theta)$
denote a stationary 4-tuple under policy~$\pi_\theta$, that is, having the following distribution:
for each $s \in \cS$, $i \in \llbracket 1, n \rrbracket$, $a \in \cA$, and $r \in \cR$,
\begin{align}
	\mathbb{P}_\theta(s,i,a,r) &=\mathbb{P}(S_\theta = s, I_\theta = i, A_\theta = a, R_\theta = r)\notag\\
	&= \Pi_\theta(s) \frac{\nu_i}{\sum_j \nu_j} \pi_\theta(s, i, a) \sum_{s' \in \cS} p(r, s' | s, i, a).\label{eq:distribsiar}
\end{align}
For each $k \in \llbracket 1, d \rrbracket$,
we let $\partial_k$ denote the partial derivative with respect to $\theta_k$.

\begin{proposition} \label{prop:pgrl_derivative}
	For each $\theta \in \bR^d$, we have
	\begin{align} \label{eq:pgrl_derivative}
		\partial_k G_\theta
		&= \mbox{\text{Cov}}(R_\theta, \partial_k \log \Gamma_\theta(S_\theta))
		+ \bE{R \, \partial_k \log \pi_\theta(S_\theta, I_\theta, A_\theta)}.
	\end{align}
\end{proposition}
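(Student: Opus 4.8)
The plan is to start from the fact that, by~\eqref{eq:pgrl_gain} and~\eqref{eq:distribsiar}, the gain is nothing but the stationary mean reward, $G_\theta = \bE{R_\theta}$, written explicitly as a sum over tuples $(s, i, a, r, s')$ in which the \emph{only} factors depending on~$\theta$ are the stationary distribution $\Pi_\theta(s)$ and the decision probability $\pi_\theta(s, i, a)$ (the kernel~$p$ and the ratio $\nu_i / \sum_j \nu_j$ being $\theta$-independent). First I would differentiate this sum with respect to $\theta_k$. Assuming the interchange of $\partial_k$ and the summation is licit, applying the product rule to $\Pi_\theta(s) \pi_\theta(s, i, a)$ together with the log-derivative identity $\partial_k f = f \, \partial_k \log f$ and regrouping against the distribution~\eqref{eq:distribsiar} yields
\begin{align*}
	\partial_k G_\theta
	&= \bE{R_\theta \, \partial_k \log \Pi_\theta(S_\theta)}
	+ \bE{R_\theta \, \partial_k \log \pi_\theta(S_\theta, I_\theta, A_\theta)}.
\end{align*}

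The second term already coincides with the second term of~\eqref{eq:pgrl_derivative}, so the remaining work is to rewrite the first one. Here I would invoke \Cref{theo:gradient}: its coordinate-wise form gives $\partial_k \log \Pi_\theta(s) = \partial_k \log \Gamma_\theta(s) - \bE{\partial_k \log \Gamma_\theta(S_\theta)}$ for each $s \in \cS_\rec$, the crucial point being that the subtracted quantity is a $\theta$-dependent \emph{constant} not depending on~$s$. Substituting this identity into the first term and using linearity, the constant factors out as $\bE{R_\theta} \bE{\partial_k \log \Gamma_\theta(S_\theta)}$, and one recognizes
\begin{align*}
	\bE{R_\theta \, \partial_k \log \Pi_\theta(S_\theta)}
	&= \bE{R_\theta \, \partial_k \log \Gamma_\theta(S_\theta)}
	- \bE{R_\theta} \bE{\partial_k \log \Gamma_\theta(S_\theta)}
	= \text{Cov}\!\left(R_\theta, \partial_k \log \Gamma_\theta(S_\theta)\right),
\end{align*}
which is exactly the first term of~\eqref{eq:pgrl_derivative}. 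Combining the two pieces completes the argument.

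The main obstacle I anticipate is justifying the interchange of the derivative $\partial_k$ with the (possibly infinite) sum over the state space, since $\cS$ is infinite in the case study of \Cref{sec:numerics}. This requires a domination argument showing that the partial sums and their $\theta$-derivatives are uniformly controlled on a neighborhood of~$\theta$, so that a standard theorem on differentiation under the summation sign applies; the integrability of $\nabla_\theta \log \Pi_\theta(S_\theta)$ assumed in \Cref{theo:gradient}, combined with the positive recurrence of $Q_\theta$ and suitable moment control on the per-epoch reward~$R_\theta$, should supply the needed bound. A minor secondary point is to confirm that restricting all sums to the recurrent class $\cS_\rec$ entails no loss, which holds because $\Pi_\theta$ vanishes off $\cS_\rec$ and~\eqref{eq:gradient} is valid precisely on $\cS_\rec$; I would also note that the identity is stated per coordinate~$k$ purely for notational convenience and aggregates into the vector form $\nabla_\theta G_\theta$ without change.
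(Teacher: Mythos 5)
Your proposal is correct and follows essentially the same route as the paper's proof: differentiate the gain~\eqref{eq:pgrl_gain} via the log-derivative identity applied to $\Pi_\theta(s)\pi_\theta(s,i,a)$, inject \Cref{theo:gradient} to replace $\partial_k \log \Pi_\theta(s)$ by $\partial_k \log \Gamma_\theta(s) - \bE{\partial_k \log \Gamma_\theta(S_\theta)}$, and recognize the covariance. Your additional remarks on justifying the interchange of $\partial_k$ with the infinite sum and on restricting to $\cS_\rec$ are sound points of rigor that the paper's own proof passes over silently.
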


\begin{proof}
	Let $k \in \llbracket 1, d \rrbracket$ and $\theta \in \bR^d$.
	Observe that we have for all $s \in \cS$, $i \in \llbracket 1, n \rrbracket$, $a \in \{0, 1\}$,  
	\begin{align*}
		\partial_k\left(\Pi_\theta(s)\pi_\theta(s, i, a)\right) &=\Pi_\theta(s)\pi_\theta(s, i, a) \partial_k\log
		\left(\Pi_\theta(s)\pi_\theta(s, i, a)\right),\\
		&=\Pi_\theta(s)\pi_\theta(s, i, a)\left(\partial_k \log \Pi_\theta(s)+\partial_k\log \pi_\theta(s, i, a)\right).
	\end{align*}
	Thus, 
	taking the partial derivative of~\eqref{eq:pgrl_gain} yields
	\begin{align*}
		\partial_k G_\theta
		&= \sum_{s, i, a, r, s'} \Pi_\theta(s) \frac{\nu_i}{\sum_j \nu_j}
		\pi_\theta(s, i, a) p(r, s' | s, i, a) r
		\left[\partial_k \log \Pi_\theta(s) + \partial_k \log \pi_\theta(s, i, a) \right].
	\end{align*}
	Therefore, by injecting \Cref{theo:gradient} and recalling \eqref{eq:distribsiar}, we obtain
	\begin{multline*}
		\partial_k G_\theta
		= \sum_{s, i, a, r}
		\mathbb{P}_\theta(s,i,a,r)
		r\left[ \partial_k \log \Gamma_\theta(s)
		- \bE{\partial_k \log \Gamma_\theta(S_\theta)}
		+ \partial_k \log \pi_\theta(s, i, a) \right],
	\end{multline*}
	which is equivalent to~\eqref{eq:pgrl_derivative}.
\end{proof}

The mapping between \Cref{algo:sage} and \Cref{prop:pgrl_derivative} is immediate:
the $k$-th component of $\overline{C}$ on Line~\ref{step:C} is an estimate of
the covariance $\mbox{\text{Cov}}(R_\theta, \partial_k \log \Gamma_\theta(S_\theta))$,
and the $k$-th component of $\overline{E}$ on Line~\ref{step:E} is an estimate of
$\bE{R \, \partial_k \log \pi_\theta(S_\theta, I_\theta, A_\theta)}$.
These estimates are biased in general because, even if the policy is fixed
between times~$t_m$ and~$t_{m+1}$,
the Markov chain has not (yet) reached its stationary distribution.

\subsubsection{Actor-critic (AC)}

\Acrfull{AC} is a classical \gls{RL} algorithm,
that also belongs to the family of \gls{PGRL} algorithms.
The pseudocode also appears in \Cref{algo:ac} for completeness,
but a more detailed description can be found in \cite[Section~13.6]{SB18}.
Like under \gls{SAGE}, the policy parameter is updated
via a stochastic gradient ascent step shown on Line~\ref{step:ac-sga}.
The two main differences with \gls{SAGE} are as follows.
First, the gradient is updated at every time step,
instead of being updated at pre-defined update times,
so that the step size with be changed accordingly.
Second, the \textsc{GradientEstimate} procedure is defined differently:
it is based on another estimator that does not
exploit the product-form nature of the stationary distribution,
and instead relies on the state value function~$v$,
which gives the long-run advantage of a particular state
compared to a typical state under the stationary distribution.

\begin{algorithm}
	\caption{Actor-critic}
	\label{algo:ac}
	\begin{algorithmic}[1]
		\State \textbf{Input:}
		\begin{minipage}[t]{.8\textwidth}
			Positive, $\theta$-differentiable
			policy parametrization~$\pi_\theta$
		\end{minipage}
		\vspace{0.25em}
		
		\State \textbf{Parameters:}
		\begin{minipage}[t]{.8\textwidth}
			$\bullet$
			Step sizes $\alpha_\Theta, \alpha_{\overline{R}}, \alpha_v \in (0, 1)$ \\
			$\bullet$
			Initial policy parameter $\Theta_0 \in \Omega$
		\end{minipage}
		\vspace{0.25em}
		
		\State \textbf{Initialization:}
		\begin{minipage}[t]{.7\textwidth}
			$\bullet$
			State $(S_0, I_0) \in \cS \times \llbracket 1, n \rrbracket$ \\
			$\bullet$
			Average reward $\overline{R} \gets 0$ \\
			$\bullet$
			Value function $v[s, i] \gets 0$ for each $s \in \cS, i \in \llbracket 1, n \rrbracket$
		\end{minipage}
		\vspace{0.25em}
		
		\State \textbf{Main loop:}
		\For{$t = 0, 1, 2, \ldots$}
		\State Sample $A_t \sim \pi_{\Theta_m}(S_t, I_t, \cdot)$
		\State Take action $A_t$ and observe $R_{t+1}, S_{t+1}, I_{t+1}$
		\State Update $\Theta_{t+1} \gets \Theta_t + \alpha_{\Theta} \textsc{GradientEstimate}(t)$ \label{step:ac-sga}
		\EndFor
		\vspace{0.25em}
		
		\Procedure{GradientEstimate}{$t$}
		\State $\delta \gets R_{t+1} - \overline{R} + v[S_{t+1}, I_{t+1}] - v[S_t, I_t]$
		\State Update $\overline{R} \gets \overline{R} + \alpha_{\overline{R}} \delta$
		\State Update $v[S_t, I_t] \gets v[S_t, I_t] + \alpha_v \delta$
		\State \textbf{return}
		$\delta \, \nabla \log \pi_{\Theta_t}(S_t, I_t, A_t)$
		\EndProcedure
	\end{algorithmic}
\end{algorithm}

\subsubsection{Q-learning}

Lastly, we consider the celebrated
Q-learning algorithm introduced in~\cite{W89}.
While \cite[Section~6.5]{SB18} provides
an intuitive introduction to Q-learning
for \glspl{MDP} with discounted rewards,
here we consider the average-reward variant
described in \Cref{algo:q} below.
The basic idea behind Q-learning is to learn
a so-called state-action value function,
denoted by~$q$ here (Line~\ref{step:q-value}),
that gives for each state-action pair,
the long-run advantage of taking this action in this state,
compared to the average reward under the best policy.
If we knew the $q$-function perfectly,
the optimal policy would consist in systematically
taking the action with the highest advantage.
Here, since we have uncertainty on the $q$-function,
we only follow the action given by the estimated $q$-function with some probability (Line~\ref{eq:q-case2}),
otherwise we sample an action uniformly at random (Line~\ref{eq:q-case1}),
which guarantees that we keep exploring even if we misestimate the $q$-function.

\begin{algorithm}
	\caption{Q-learning algorithm}
	\label{algo:q}
	
	\begin{algorithmic}[1]
		\State \textbf{Parameters:}
		\begin{minipage}[t]{.8\textwidth}
			$\bullet$
			Update times $0 \triangleq t_0 < t_1 < t_2 < \ldots$ \\
			$\bullet$
			Step sizes $\alpha_{\overline{R}}, \alpha_q \in (0, 1)$ \\
			$\bullet$
			Exploration rates $\epsilon_0 \ge \epsilon_1 \ge \epsilon_2 \ge \ldots$
		\end{minipage}
		\vspace{0.25em}
		\State \textbf{Initialization:}
		\begin{minipage}[t]{.7\textwidth}
			$\bullet$
			State $(S_0, I_0) \in \cS \times \llbracket 1, n \rrbracket$ \\
			$\bullet$
			Average reward $\overline{R} \gets 0$ \\
			$\bullet$
			Value function $q[s, i, a] \gets 0$ for $s \in \cS$, $i \in \llbracket 1, n \rrbracket$, $a \in \{0, 1\}$
		\end{minipage}
		\vspace{0.25em}
		
		\State \textbf{Main loop:}
		\For{$m = 0, 1, 2, \ldots$}
		\For{$t = t_m, \ldots, t_{m+1} - 1$}
		\State Choose action~$A_t$ as follows:
		\State $\bullet$
		With probability $\epsilon_m$, sample $A_t \in \{0, 1\}$ uniformly at random \label{eq:q-case1}
		\State $\bullet$
		Otherwise, choose $A_t \in \argmax_{a \in \{0, 1\}} q(S_t, a)$ \label{eq:q-case2}
		\State Take action $A_t$ and observe $R_{t+1}, S_{t+1}, I_{t+1}$
		\State $\delta \gets R_{t+1} - \bar{R} + \max_{a \in \{0, 1\}} q(S_t, I_t, a) - q(S_t, I_t, A_t)$
		\State Update $\bar{R} \gets \bar{R} + \alpha_{\overline{R}} \delta$
		\State Update $q(S_t, I_t, A_t) \gets q(S_t, I_t, A_t) + \alpha_q \delta$ \label{step:q-value}
		\EndFor
		\EndFor
	\end{algorithmic}
\end{algorithm}

\subsection{Numerical results} \label{sec:case_study_num}

We compare numerically
the performance of the \gls{SAGE}, \gls{AC}, and Q-learning algorithms
described in \Cref{sec:case_study_rl},
in the admission control example of \Cref{sec:case_study_problem}.

\subsubsection{Evaluation setup}

To compare \gls{SAGE}, \gls{AC}, and Q-learning,
we ran each algorithm 10 times independently, for $10^7$ steps.
In the plots showing trajectories, the x-axis is in logarithmic scale,
the solid curves show the mean,
and the shaded areas show the standard deviation around the mean.
Furthermore, to keep a reasonable memory occupation, starting from time $t = 10^3$,
the results are reported only at times multiple of $10^{i-2}$
between $10^i$ and $10^{i+1}$
(e.g., between $10^3$ and $10^4$, results are reported at times
1000, 1010, 1020, 1030, etc.).

We evaluate \gls{SAGE} and \gls{AC} with the following policy parametrizations:
\begin{itemize}
	\item \textit{Static}: We consider the (balanced) static policy
	described in \Cref{sec:examples_of_balanced_arrival_rates}.
	The balance function is given by
	$\Gamma_\theta(x) = \prod_{i = 1}^n \sigma(\theta_i)^{x_i}$,
	where $\sigma: \bR \to (0, 1)$ is the logistic function,
	given by $\sigma(\phi) = 1 / (1 + \textrm{e}^{-\phi})$
	for each $\phi \in \bR$,
	and $\theta \in \bR^n$.
	\item \textit{Semi-static}:
	We consider the (balanced) semi-static policy
	described in \Cref{sec:examples_of_balanced_arrival_rates}.
	The balance function is given by
	$\Gamma_\theta(x)
	= \prod_{i = 1}^n \sigma(\theta_i)^{x_i }
	\prod_{j = 1}^n \sigma(\theta_{i, j})^{x_i x_j}$,
	$\theta \in \bR^{n + n^2}$.
	\item \textit{Dynamic}:
	We consider the cum-prod balanced policy described in \Cref{sec:examples_of_balanced_arrival_rates},
	with the balance function $\Gamma_\theta(x) = \prod_{y \in |\cS|: y \le x} \sigma(\theta_y)$.
	In principle, such a parametrization requires an infinite-dimensional parameter~$\theta$,
	as there are as many components as there are macrostates in the system.
	In practice, we initialize the parameter on a subset of the state space
	around the empty state,
	and we expand it as needed.
	\item \textit{Imbalanced}: Only for \gls{AC}, we consider
	an additional parametrization that we call \textit{imbalanced},
	because it does not satisfy the balance condition (\Cref{def:balanced_policies}).
	We define the class-$i$ admission probability in microstate~$s$ as
	$\gamma_{\theta, i}(s) = \sigma(\theta_{s, i})$.
	Like for the dynamic parametrization,
	this leads to an infinite-dimensional parameter vector,
	but we tackle this issue by only defining the components
	as we need them, with a predefined initial value.
\end{itemize}
Thus, we will say for instance ``\gls{SAGE} static''
to refer to the \gls{SAGE} algorithm applied with the static parametrization.
The use of the logistic function (also call sigmoid or softmax)
to transform a real parameter
into a number between~$0$ and~$1$ is standard in \gls{RL}~\cite{SB18}.
Under all parameterizations,
deterministic policies can be obtained asymptotically,
when the components of the parameter $\theta$ tend to $\pm \infty$, which
is common in the context of \gls{PGRL},
see for instance \cite[Section~13.1]{SB18}.

The remaining parameters are as follows.
In \gls{SAGE}, we use update times multiples of $10^2$,
that is, $t_{m+1} - t_m = 10^2$ for each $m \in \bN$,
and step size~$\alpha_\Theta = 10^{-1}$.
In \gls{AC}, we use step sizes $\alpha_\Theta = 10^{-3}$
(because we update the parameter $10^2$ times more frequently than in \gls{SAGE})
and $\alpha_{\overline{R}} = \alpha_v = 10^{-2}$.
Lastly, in Q-learning, we again use update times multiples of $10^2$,
step sizes $\alpha_{\overline{R}} = \alpha_q = 10^{-2}$,
and exploration rates defined recursively by
$\epsilon_0 = 10^{-1}$
and $\epsilon_{m+1} = \max(10^{-4}, \epsilon_m - 2 \cdot 10^{-5})$
for each $m \in \bN$
(so that the sequence reaches its final value $10^{-4}$ at $m = 10^6$).
Lastly, the arrays $v$ (in \gls{AC}) and $q$ (in Q-learning) are initialized to zero on a subset of the state(-action) space and are augmented as needed.

\subsubsection{Adversarial scenario} \label{sec:adversarial}

We focus on
the compatibility graph of \Cref{fig:case_study_graph},
with arrival rate vector $\nu = (0.5, 0.5, 0.5)$,
abandonment rate vector $\zeta = (0.1, 0.2, 0.5)$,
service rate vector $\mu = (0.5, 0.5, 0.5)$,
and reward rate vector $r = (1, 2, 16)$.
We call this scenario \textit{adversarial},
because the class with the largest abandonment rate (class~3)
and the smallest (maximal) service rate
also brings the largest reward.
In \Cref{sec:nonadversarial},
we will also consider a non-adversarial scenario,
where the abandonment rates of classes~1 and~3 are exchanged.
The numerical results for the adversarial scenario
are shown in \Cref{fig:reward,fig:policy_per_agent,fig:policy_per_class}.
In the discussion below, we will draw two conclusions:
(i) by exploiting the product form of the stationary distribution,
\gls{SAGE} converges faster than \gls{AC} and Q-learning, and
(ii) higher-dimensional policy-parametrizations
yield a better performance eventually,
at the cost of a (much) slower convergence.

\paragraph*{Running average reward}

\begin{figure}[t]
	\centering
	\subfloat[Trajectory\label{fig:reward_time}]
	{\includegraphics[width=.75\textwidth]{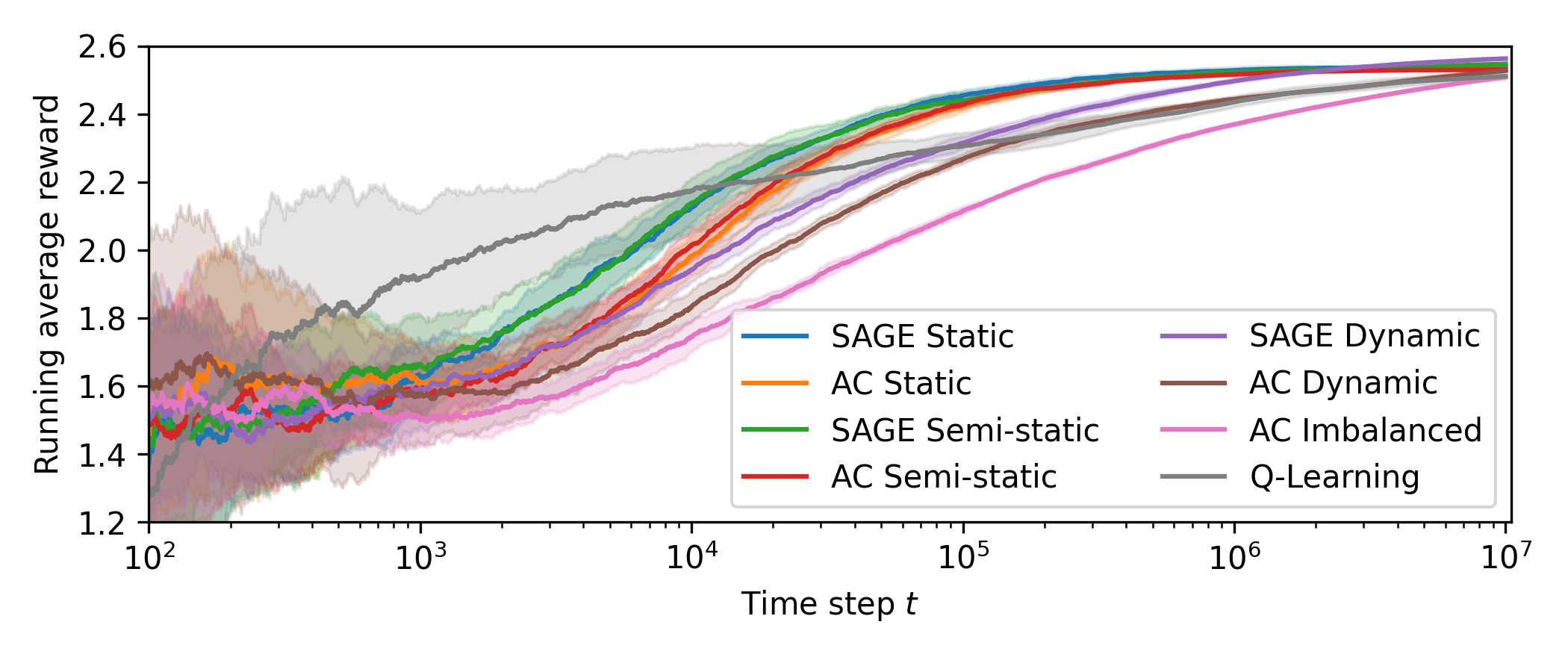}}
	\\
	\subfloat[Empirical distribution at time~$10^4$\label{fig:reward_time_10_power_4}]
	{\includegraphics[width=.48\textwidth]{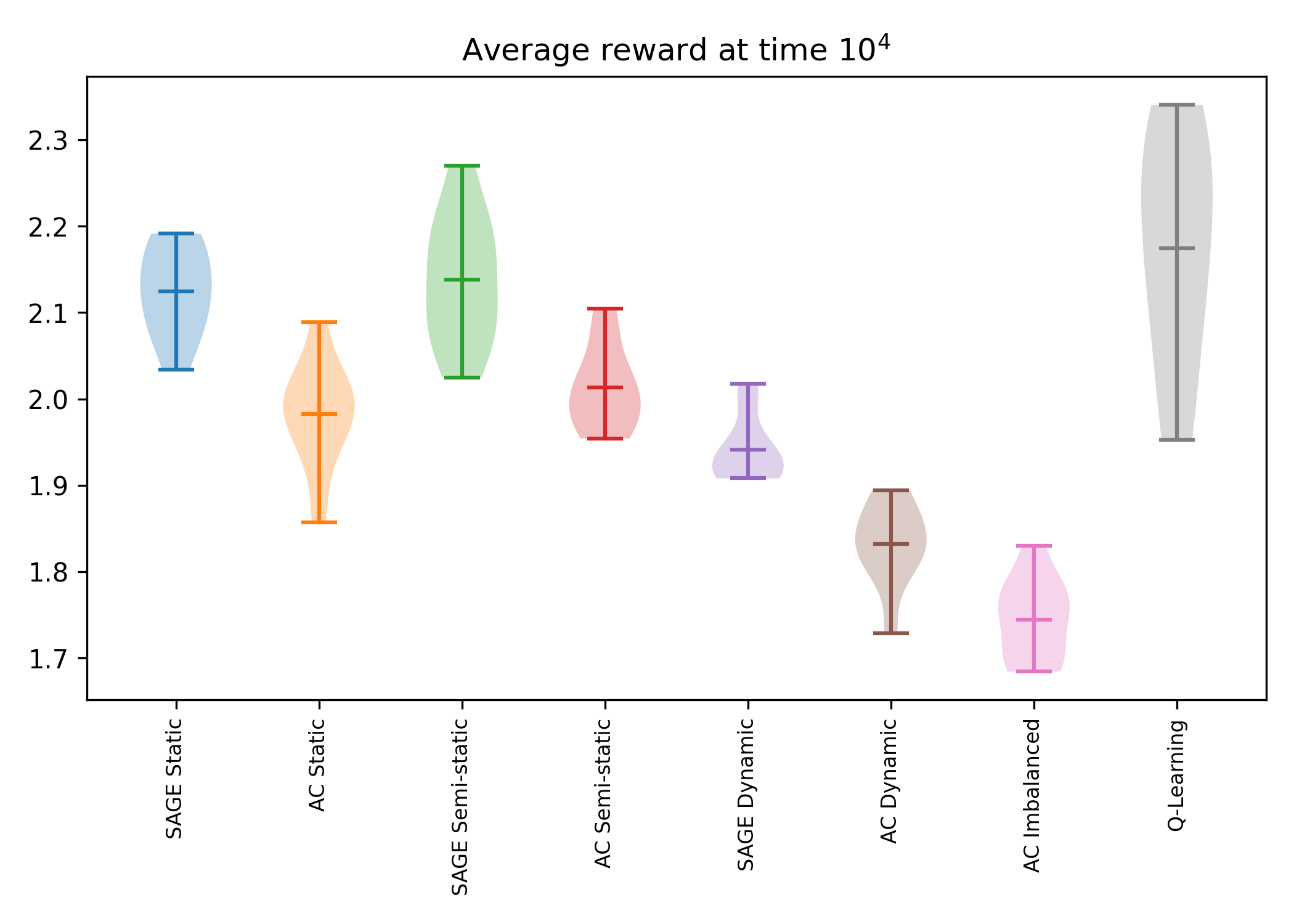}}
	\hfill
	\subfloat[Empirical distribution at time~$10^7$\label{fig:reward_time_10_power_7}]
	{\includegraphics[width=.48\textwidth]{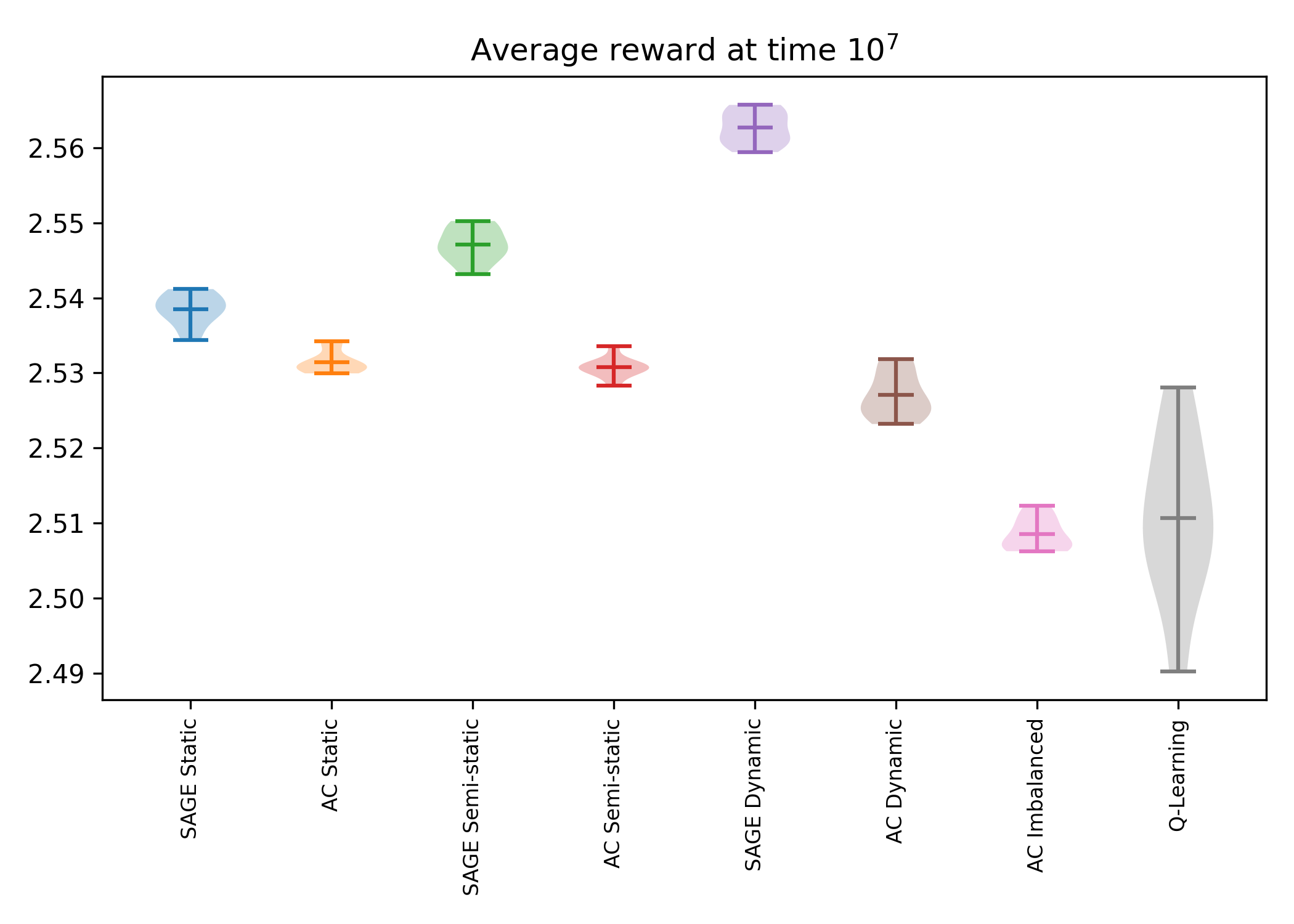}}
	\caption{Running average reward. In each violin plot, the three horizontal lines show the minimium, mean, and maximum value, respectively.}
	\label{fig:reward}
\end{figure}

Let us first focus on the running average reward.
\Cref{fig:reward_time} shows the trajectory over time, while \Cref{fig:reward_time_10_power_4,fig:reward_time_10_power_7}
give the empirical distribution at two particular times.
In \Cref{fig:reward_time}, we first observe that,
while Q-learning shows a rapid increase until roughly $10^4$ steps,
it is later caught up and overtaken by \gls{SAGE} and \gls{AC}.
This trend is confirmed by \Cref{fig:reward_time_10_power_4,fig:reward_time_10_power_7}.
Now focusing on \gls{SAGE} and \gls{AC}, we make several observations.
First, in \Cref{fig:reward_time}, we see that
\gls{SAGE} static and \gls{SAGE} semi-static have a similar evolution,
and so do \gls{AC} static and \gls{AC} semi-static.
\Cref{fig:reward_time_10_power_7} also reveals that,
at the end of the simulation,
\gls{SAGE} semi-static outperforms \gls{SAGE} static,
which outperforms \gls{AC} static,
which in turn outperforms \gls{AC} semi-static.
\gls{SAGE} dynamic, \gls{AC} dynamic, and \gls{AC} imbalanced
have a slower convergence than their static and semi-static counterparts,
but \gls{SAGE} dynamic eventually has the best performance.
\gls{AC} dynamic and \gls{AC} imbalanced have not yet caught up after~$10^7$ steps.

\paragraph*{Admission probability}

\begin{figure}[t]
	\includegraphics[width=\textwidth]{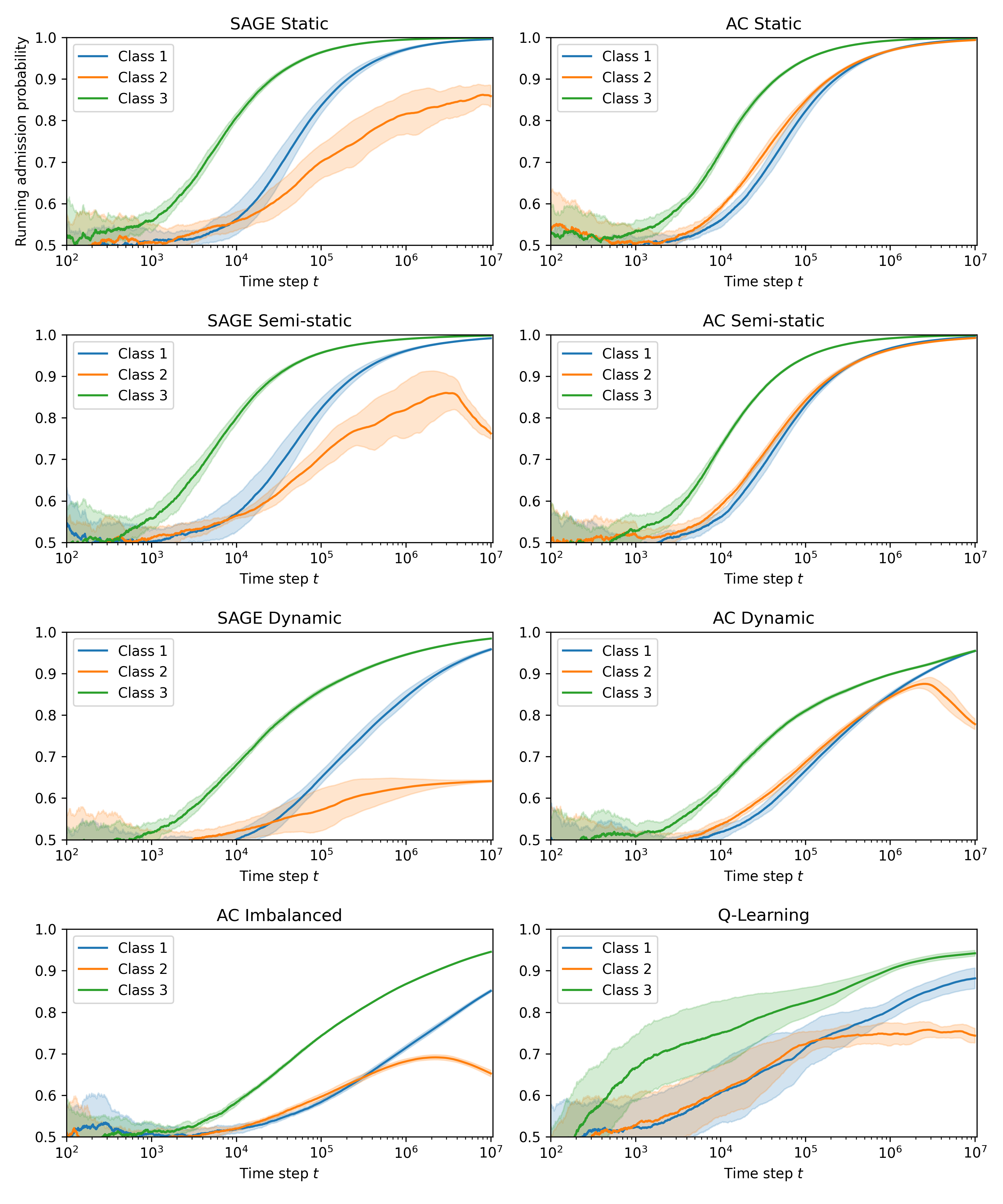}
	\caption{Evolution over time of the running average admission probability, per agent}
	\label{fig:policy_per_agent}
\end{figure}

\begin{figure}[t]
	\includegraphics[width=\textwidth]{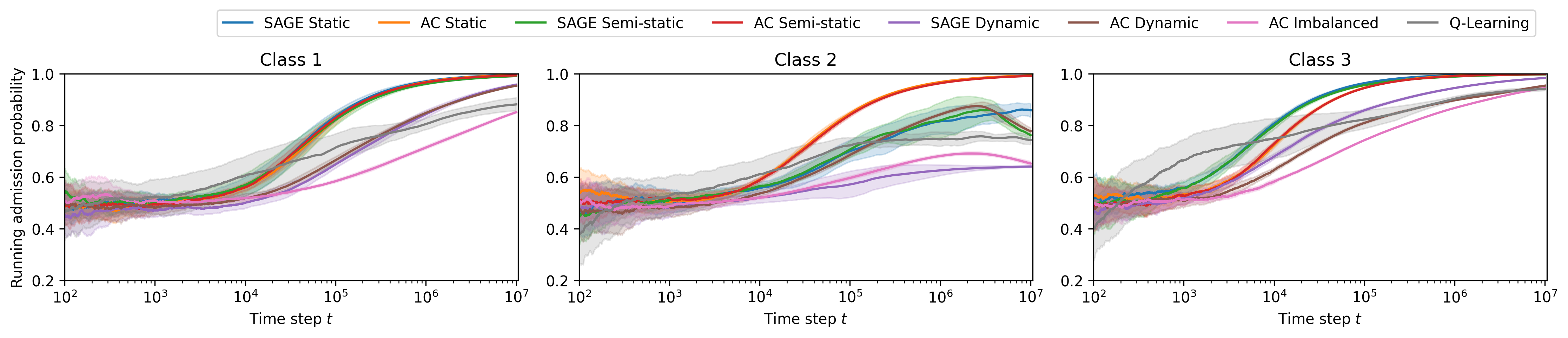}
	\caption{Evolution over time of the running average admission probability, per class}
	\label{fig:policy_per_class}
\end{figure}

\Cref{fig:policy_per_agent,fig:policy_per_class} both
show the evolution of the empirical admission probability,
one grouped by algorithm-parametrization pair, and the other by class.
First looking at \Cref{fig:policy_per_agent}, we observe that the dynamics differ significantly from one algorithm-parametrization pair to another. We observed earlier that \gls{SAGE} dynamic achieved the best performance at the end of the $10^7$ steps, and we see here that it does so by admitting almost all customers of classes~1 and~3, and by rejecting a positive fraction of class-2 customers. \gls{AC} static and \gls{AC} semi-static seem to converge towards a suboptimal policy (admit all customers, at least in states visited frequently), compared to \gls{SAGE} static and \gls{SAGE} semi-static. \gls{AC} dynamic and \gls{AC} imbalanced seem to converge towards a policy that is similar to \gls{SAGE} dynamic, but at a slower pace. Q-learning initially achieves good performance because it quickly increases the admission probability of class~3, but afterwards its convergence is slower.

\subsubsection{Non-adversarial scenario} \label{sec:nonadversarial}

For completeness, let us finally show numerical results
for a non-adversarial scenario.
The compatibility graph
is again that of \Cref{fig:case_study_graph},
with arrival rate vector $\nu = (0.5, 0.5, 0.5)$,
abandonment rate vector $\zeta = (0.5, 0.2, 0.1)$,
service rate vector $\mu = (0.5, 0.5, 0.5)$,
and reward rate vector $r = (1, 2, 16)$.
The abandonment rates of classes~1 and~3 are exchanged
compared to \Cref{sec:adversarial}.
Intuitively, we expect that this scenario is simpler,
because class~3, the high-reward class,
has a smaller abandonment rate.
The running average reward is shown in \Cref{fig:app_reward},
and the average admission probability
in \Cref{fig:app_policy_per_agent,fig:app_policy_per_class}.

\paragraph*{Running average reward}

\begin{figure}[htb]
	\centering
	\subfloat[Trajectory\label{fig:app_reward_time}]
	{\includegraphics[width=.75\textwidth]{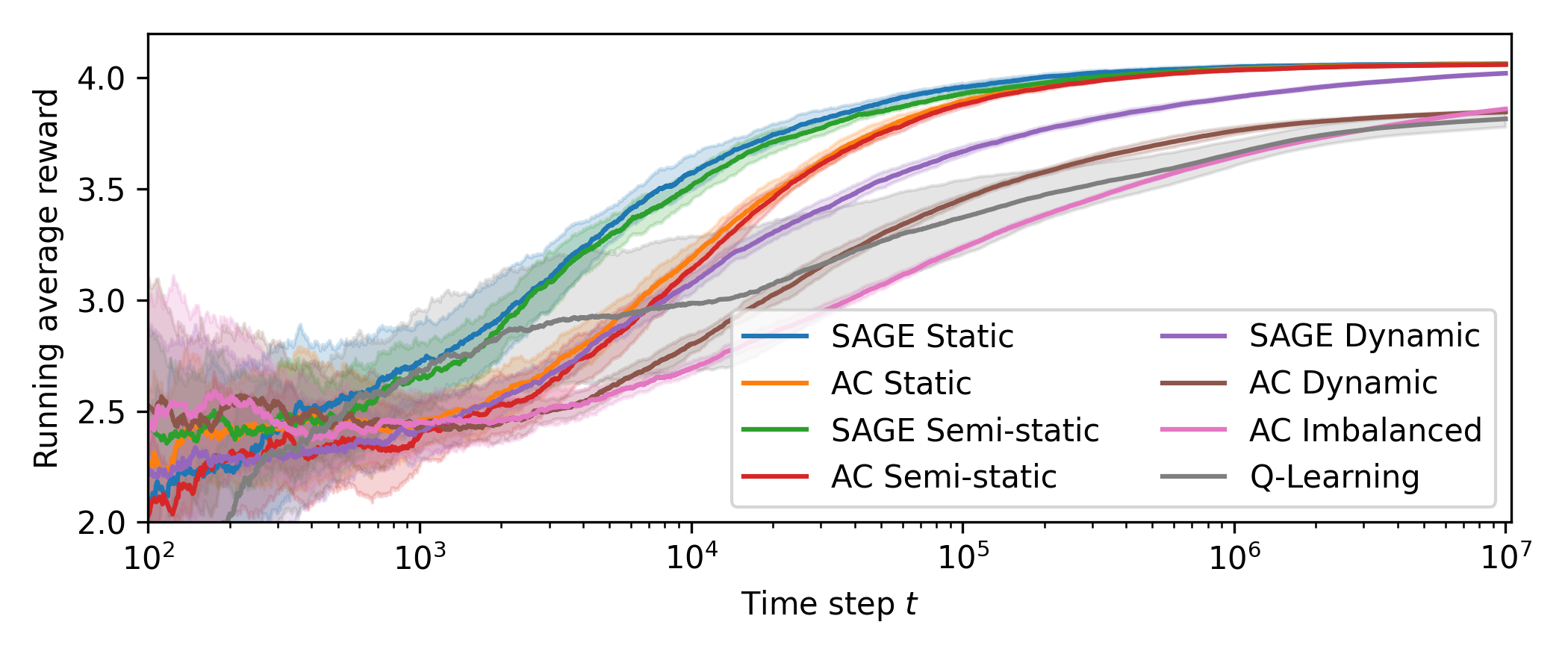}}
	\\
	\subfloat[Empirical distribution at time~$10^4$\label{fig:app_reward_time_10_power_4}]
	{\includegraphics[width=.48\textwidth]{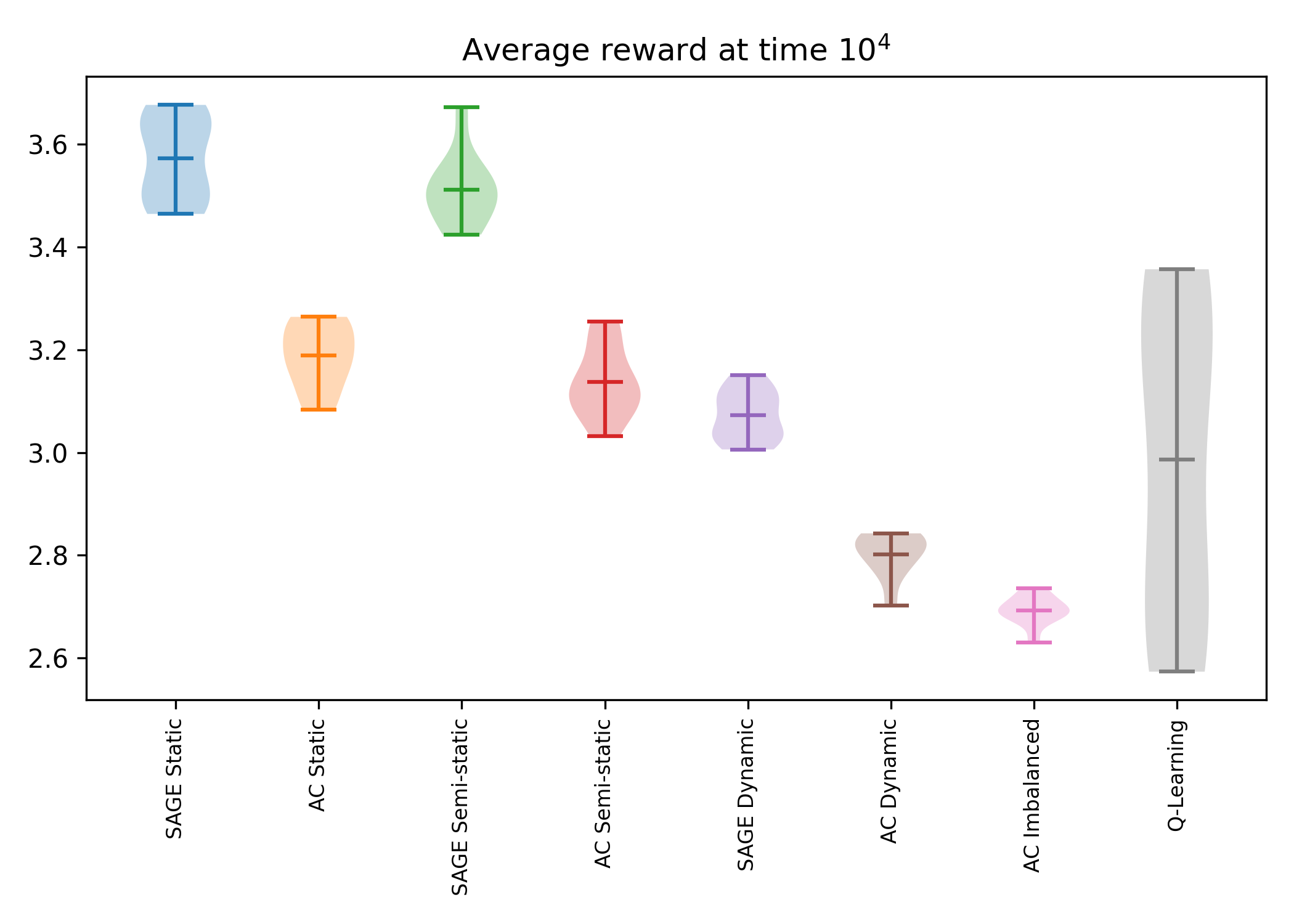}}
	\hfill
	\subfloat[Empirical distribution at time~$10^7$\label{fig:app_reward_time_10_power_7}]
	{\includegraphics[width=.48\textwidth]{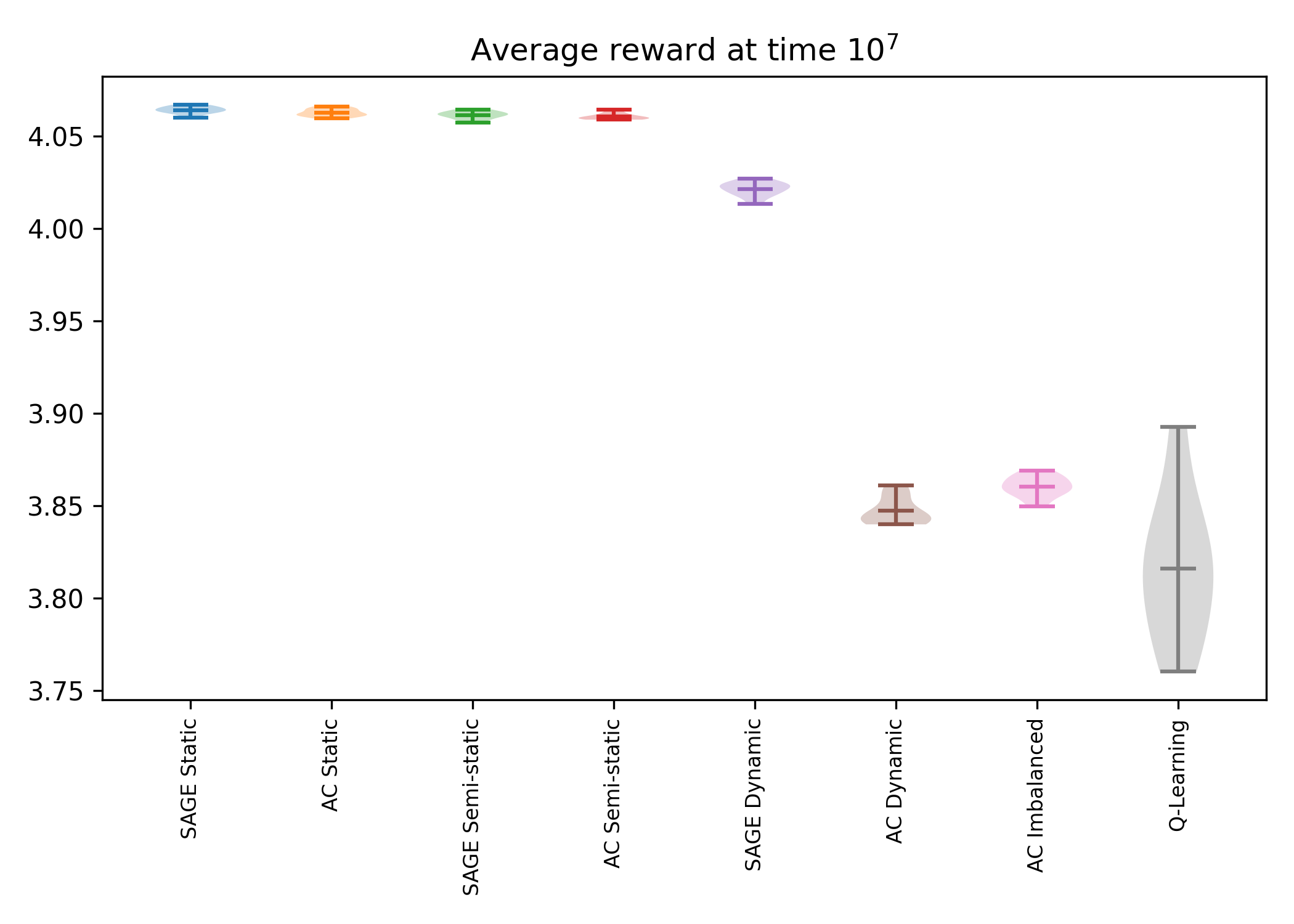}}
	\caption{Running average reward. In each violin plot, the three horizontal lines show the minimium, mean, and maximum value, respectively.}
	\label{fig:app_reward}
\end{figure}

Let us first focus on the running average reward.
\Cref{fig:app_reward_time} shows the trajectory over time,
while \Cref{fig:app_reward_time_10_power_4,fig:app_reward_time_10_power_7}
zoom in on two particular times.
Some observations of the adversarial scenario still hold.
In particular, \gls{SAGE} static and \gls{SAGE} semi-static
have a similar behavior,
\gls{AC} static and \gls{AC} semi-static also do,
and \gls{SAGE} static/semi-static converges faster than \gls{AC} static/semi-static.
Collectively, they converge much faster than
\gls{SAGE} dynamic,
which in turn converges much faster than
\gls{AC} dynamic, \gls{AC} imbalanced, and Q-learning.
The main difference with the adversarial scenario
is that \gls{AC} static/semi-static
approximately catches up with \gls{SAGE} static/semi-static
already at the end of the $10^7$ steps,
and Q-learning never overtakes the other algorithms, even in the early stage.

\paragraph*{Admission probability}

\begin{figure}[htb]
	\includegraphics[width=\textwidth]{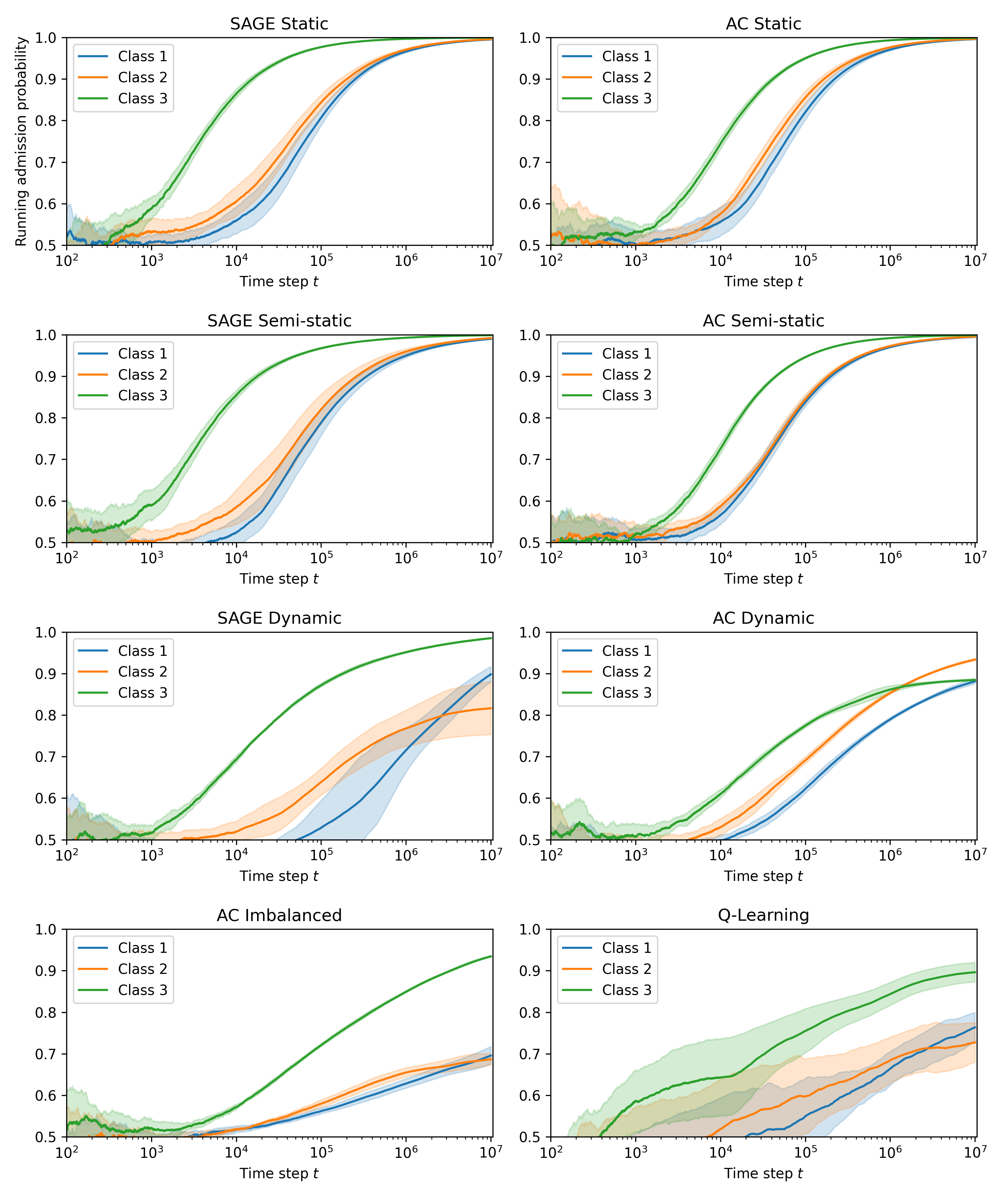}
	\caption{Evolution over time of the running average admission probability, per agent}
	\label{fig:app_policy_per_agent}
\end{figure}

\begin{figure}[htb]
	\includegraphics[width=\textwidth]{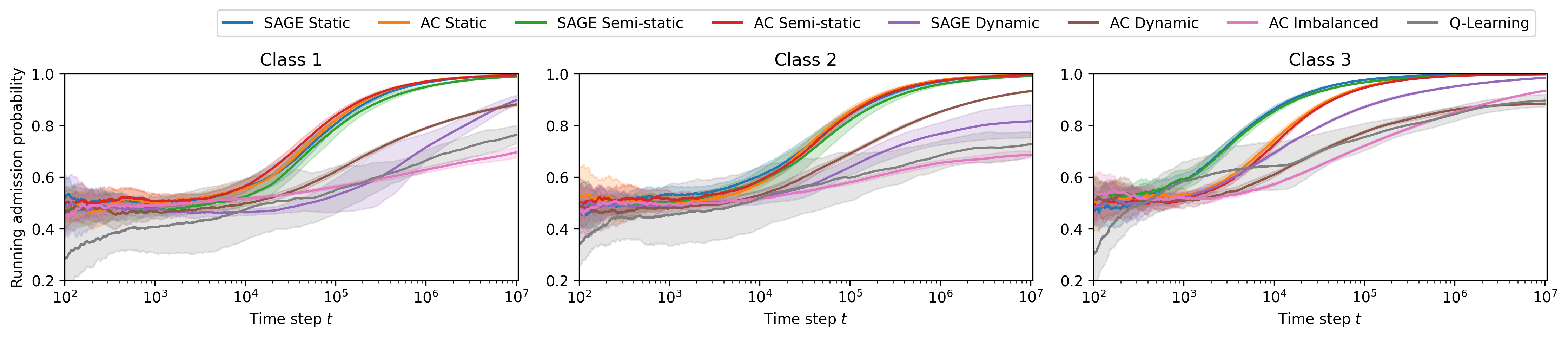}
	\caption{Evolution over time of the running average admission probability, per class}
	\label{fig:app_policy_per_class}
\end{figure}

Let us now consider the average admission probability
shown in \Cref{fig:app_policy_per_agent,fig:app_policy_per_class}.
The main difference with the adversarial scenario
is that \gls{SAGE} static/semi-static
also seems to converge towards the ``admit-everyone'' policy,
like \gls{AC} static/semi-static,
while in the adversarial scenario
\gls{SAGE} static/semi-static admitted
only a positive fraction of class-2 customers.%

\section{Conclusion}\label{sec:conclu}

In this paper, we have introduced a general definition of {\em quasi-reversibility} for queueing systems, that slightly differs from existing 
definitions in the literature. We have shown that the class of quasi-reversible systems embraces a wide range of 
systems of particular interest in the recent queueing literature: multi-class Whittle networks and Order-Independent queues, which in turn encompass redundancy systems and stochastic matching models, among others.

For this class of systems, we have introduced the notion of {\em balanced arrival control} policies, which generalizes the existing notion of balanced arrival rates for Whittle networks, to any quasi-reversible queueing system. We have shown that supplementing a quasi-reversible queueing system with a balanced arrival control policy, preserves the quasi-reversibility. This has allowed us, first, to obtain the stationary measures of the controlled system in an explicit form, and second, to write as a linear program the problem of finding the best balanced admission control in a quasi-reversible system. To illustrate the {\em cost of balance}, we have conducted  a comparison between the best balanced admission control and the optimal (unconstrained) control, on various simple examples. 

Last, we have shown the potential advantage of balanced admission control in quasi-reversible systems, to deploy a policy-gradient reinforcement learning (PGRL) algorithm in order to optimize the admission control. We have conducted a first, simple, case study, by implementing a PGRL algorithm to a redundancy model with compatibility constraints, and analyzed a set of numerical results on this model. 

This paper thus provides the theoretical basis for the optimization of quasi-reversible systems with RL algorithms, and consists of a first step in that direction. It naturally opens the way for future work. First, for a better understanding of the optimality gap of balanced policies (which are crucial in the present framework) and optimal, unconstrained, policies. This raises the following complementary open question: is there a broader class of policies (beyond balanced policies) that also preserve quasi-reversibility, and is still convenient to parameterize? 

Second, the last part of this work suggests that PGRL algorithms can be fruitfully applied to a wide range of multi-class Whittle networks and OI queues, among which, open (multi-class) Jackson networks, 
redundancy models, and matching queues. The optimization of admission control for the last two classes of models is the topic of our current investigations on this class of problems. 

Last, many {\em closed} multi-class systems also enjoy the quasi-reversibility property. A balanced control would then be interpreted as an {\em internal} load balancing rather than an admission control. While it appears that the procedure introduced in this work is also amenable to optimize this internal control, this is left for future work.



\appendix

\section{Linear programs for reversible queueing systems} \label{app:lp_reversible_queueing_system}

Consider a quasi-reversible queueing system $Q = (\cS, | \cdot |, q)$
with $| \cdot | = \textrm{Id}$,
so that $\cS = \cX$,
$\cS_x = \{x\}$ for each $x \in \cX$,
and $q$ is reversible (see \Cref{sec:reversibility}).
In this case, all policies trivially satisfy the microstate-independence
condition of \Cref{def:balanced_policies},
and the only difference between an arbitrary admission policy
and balanced policies is the balance condition.
Problem~\eqref{eq:lp_identity_1} below is a rewriting of~\eqref{eq:problem},
accounting for the fact that $\cS = \cX$:
\begin{subequations}
	\label{eq:lp_identity_1}
	\begin{align}
		\label{eq:lp_identity_1a}
		\underset{\substack{\gamma: \cX \to [0, 1]^n, \\ \Pi: \cX \to [0, 1]}}{\textrm{Maximize}} \quad
		&G_\gamma \triangleq \sum_{x \in \cX} \Pi(x) \rcont(x)
		+ \sum_{x, y \in \cS} \Pi(x) q_\gamma(x, y) \rdisc(x, y), \\
		\label{eq:lp_identity_1b}
		\textrm{Subject to} \quad
		& \Pi(x) \sum_{y \in \cS} q_\gamma(x, y)
		= \sum_{y \in \cS} \Pi(y) q_\gamma(y, x),
		\quad \text{for each } x \in \cX, \\
		\label{eq:lp_identity_1c}
		&\sum_{x \in \cS} \Pi(x) = 1.
	\end{align}
\end{subequations}
This, in turn, is equivalent to the following \gls{LP}:
\begin{subequations}
	\label{eq:lp_identity_2}
	\begin{align}
		\label{eq:lp_identity_2a}
		\underset{\substack{\eta: \cX \to [0, 1]^n, \\ \Pi: \cX \to [0, 1]}}{\textrm{Maximize}} \quad
		&G_\gamma \triangleq \sum_{x \in \cX} \Pi(x) \rcont(x)
		+ \sum_{x, y \in \cX} \eta(x, y) \rdisc(x, y), \\
		\label{eq:lp_identity_2b}
		\textrm{Subject to} \quad
		& \sum_{y \in \cX} \eta(x, y)
		= \sum_{y \in \cX} \eta(y, x),
		\quad \text{for each } x \in \cX, \\
		\label{eq:lp_identity_2c}
		&\sum_{x \in \cX} \Pi(x) = 1, \\
		\label{eq:lp_identity_2d}
		&\eta(x, x + e_i) \le \Pi(x) q(x, x + e_i),
		\quad \text{for each } x \in \cX \text{ and } i \in \un, \\
		\label{eq:lp_identity_2e}
		&\eta(x, y) = \Pi(x) q(x, y),
		\quad \text{for each } x \in \cX \text{ and } y \in \cX \setminus \{x + e_i; i \in \un\}.
	\end{align}
\end{subequations}
Equivalence between
Problems~\eqref{eq:lp_identity_1} and~\eqref{eq:lp_identity_2}
is established by injecting the definition~\eqref{eq:tilde-q} of $q_\gamma$,
and letting
\begin{align*}
	\eta(x, y) &= \begin{cases}
		\Pi(x) q(x, x + e_i) \gamma_i(x)
		&\text{if $y = x + e_i$ for some $i \in \un$}, \\
		\Pi(x) q(x, y)
		&\text{otherwise}.
	\end{cases}
\end{align*}
Problem~\eqref{eq:lp_identity_2} can be further simplified
by recalling that $q(x, y) = 0$
for each $x, y \in \cX$
such that $y \notin \{x - e_i; i \in \un\} \cup \{x\} \cup \{x + e_i; i \in \un\}$.
Lastly, one can verify that in this special case, restricting the problem to balanced policies boils down to replacing the balance equations~\eqref{eq:lp_identity_2b} with local balance equations, so as to obtain:
\begin{subequations}
	\label{eq:lp_identity_3}
	\begin{align}
		\label{eq:lp_identity_3a}
		\underset{\substack{\eta: \cX \to [0, 1]^n, \\ \Pi: \cX \to [0, 1]}}{\textrm{Maximize}} \quad
		&G_\gamma \triangleq \sum_{x \in \cX} \Pi(x) \rcont(x)
		+ \sum_{x, y \in \cX} \eta(x, y) \rdisc(x, y), \\
		\label{eq:lp_identity_3b}
		\textrm{Subject to} \quad
		& \eta(x, y) = \eta(y, x),
		\quad  \text{ and } x, y \in \cX, \\
		\label{eq:lp_identity_3c}
		&\sum_{x \in \cX} \Pi(x) = 1, \\
		\label{eq:lp_identity_3d}
		&\eta(x, x + e_i) \le \Pi(x) q(x, x + e_i),
		\quad  \text{ and } x \in \cX \text{ and } i \in \un, \\
		\label{eq:lp_identity_3e}
		&\eta(x, y) = \Pi(x) q(x, y),
		\quad  \text{ and } x \in \cX \text{ and } y \in \cX \setminus \{x + e_i; i \in \un\}.
	\end{align}
\end{subequations}

\section{Proofs of \Cref{theo:polytope}, \Cref{prop:polytope}, and \Cref{prop:polytope_bis}} \label{app:polytope}

We prove the results in the following order.
First, we prove \Cref{prop:polytope}, which gives the vertices of the set $\cB^{\textrm{ac}}_\cX$.
Then, we build on this proof to show \Cref{prop:polytope_bis}, which characterizes the vertices of the set $\tilde\cB^{\textrm{ac}}_\cX$.
Lastly, we use \Cref{prop:polytope_bis} to prove our final result, \Cref{theo:polytope}. 

\begin{proof}[Proof of \Cref{prop:polytope}]
	Consider a finite Ferrers set $\cX \subseteq \bN^n$
	and let $m = \card(\cX)$.
	We proceed in three steps.
	
	\vspace{.2cm}
	
	\noindent \emph{$\cB_\cX$ is a convex polytope.}
	It follows from the fact that $\cB_\cX$
	can be seen as a set of vectors (with coordinates in~$\cX$ and values in~$\bR_{\ge 0}$)
	defined by linear constraints,
	both equality ($\Gamma(0) = 1$)
	and inequality ($\Gamma$ is non-negative and component-wise non-increasing).
	
	\vspace{.2cm}
	
	\noindent \textit{Deterministic function are extreme points of~$\cB_\cX$.}
	Let $\cA \in \cF_\cX$.
	Assume for the sake of contradiction that there exist
	$\alpha \in (0, 1)$ and two distinct functions $\Gamma, \Delta \in \cB_\cX$
	such that $\one_\cA = \alpha \Gamma + (1 - \alpha) \Delta$.
	Since $\Gamma \neq \Delta$, there exists $x \in \cX$ such that $\Gamma(x) \neq \Delta(x)$.
	By exchanging symbols if necessary, we can assume without loss of generality that
	$\Gamma(x) > \Delta(x)$.
	We now distinguish two cases:
	\begin{itemize}
		\item If $x \notin \cA$, then
		$0 = \one_\cA(x) = \alpha \Gamma(x) + (1 - \alpha) \Delta(x) > \Delta(x)$.
		It follows that $\Delta(x) < 0$,
		which contradicts our assumption that $\Delta \in \cB_\cX$.
		\item If $x \in \cA$, then
		$1 = \one_\cA(x) = \alpha \Gamma(x) + (1 - \alpha) \Delta(x) < \Gamma(x)$.
		As $\Gamma$ is non-increasing, it follows that $\Gamma(0) > 1$,
		which contradicts our assumption that $\Gamma \in \cB_\cX$.
	\end{itemize}
	Both cases lead to a contradiction,
	hence $\one_\cA$ is an extreme point of~$\cB_\cX$.
	
	\vspace{.2cm}
	
	\noindent \textit{Every element of~$\cB_\cX$ is
		a convex combination of deterministic balance functions.}
	Let $\Gamma \in \cB_\cX$.
	We first describe our approach to write $\Gamma$
	as a convex combination of deterministic balance functions,
	and then we prove the validity of this approach
	(e.g., that the coefficients of the combination
	are indeed non-negative and sum up to one).
	
	Starting with $\cV_0 = \cX$, let us define recursively,
	for each $k \in \{1, 2, \ldots, m\}$,
	\begin{align}
		\label{eq:yn}
		&y_{k-1} \in \argmin_{x \in \partial \cV_{k-1}} \Gamma(x), \\
		\label{eq:Vn}
		&\cV_k = \cV_{k-1} \setminus \{y_{k-1}\},
	\end{align}
	where $\partial \cV = \{x \in \cV; x \not\le y \text{ for each } y \in \cV\}$
	is the set of maximal (or non-dominated) points of~$\cV$, for each $\cV \in \cF_\cX$.
	By construction, $y_{m - 1} = 0$ and $\cV_{m} = \emptyset$,
	as we remove maximal elements one by one starting from $\cV_0 = \cX$.
	Furthermore, $\cV_0$, $\cV_1$, \ldots, $\cV_{m-1}$
	are Ferrers subsets of~$\cX$,
	hence $\one_{\cV_k} \in \cB_\cX$ for each $k \in \{0, 1, \ldots, m - 1\}$.
	We will verify that we can write $\Gamma$ as follows:
	\begin{align} \label{eq:convex-combination}
		\Gamma(x)
		&= \sum_{k = 0}^{m - 1} \alpha_k \one_{\cV_k}(x)
		\quad \text{for each } x \in \cX,
	\end{align}
	where $\alpha_0 \triangleq \Gamma(y_0)$,
	and $\alpha_k \triangleq \Gamma(y_k) - \Gamma(y_{k-1})$
	for each $k \in \{1, 2, \ldots, m - 1\}$.
	We will also verify that $\alpha_k \ge 0$ for each $k \in \{0, 1, \ldots, m - 1\}$
	and that $\sum_{k = 0}^{m - 1} \alpha_k = 1$,
	which will conclude the proof.
	Let us prove the required points:
	\begin{itemize}
		\item \textit{Proof of~\eqref{eq:convex-combination}.}
		Let us first verify~\eqref{eq:convex-combination} for each $x \in \cX$.
		Since the sequence $y_0$, $y_1$, \ldots, $y_{m - 1}$
		visits every element of~$\cX$,
		there exists (a unique) $\ell \in \{0, 1, \ldots, m - 1\}$
		such that $x = y_\ell$.
		By~\eqref{eq:Vn}, if follows that
		$x \in \cV_k$ for each $k \in \{0, 1, \ldots, \ell\}$
		and $x \notin \cV_k$ for each $k \in \{\ell + 1, \ell + 2, \ldots, m - 1\}$.
		The conclusion follows by simplifying the right-hand side of~\eqref{eq:convex-combination}:
		\begin{align*}
			\sum_{k = 0}^{m - 1} \alpha_k \one_{\cV_k}(x)
			&= \sum_{k = 0}^{\ell} \alpha_k \one_{\cV_k}(y_\ell)
			= \Gamma(y_0) + \sum_{k = 0}^\ell (\Gamma(y_k) - \Gamma(y_{k-1}))
			= \Gamma(y_\ell) = \Gamma(x).
		\end{align*}
		\item \textit{Proof that $\alpha_k \ge 0$ for each $k \in \llbracket 0,m - 1 \rrbracket$.}
		We have directly $\alpha_0 = \Gamma(y_0) \ge 0$
		since $\Gamma$ takes values in~$\bR_{\ge 0}$.
		Now, given $k \in \llbracket 1,m - 1 \rrbracket$,
		we want to prove that $\alpha_k \ge 0$,
		i.e., that $\Gamma(y_k) \ge \Gamma(y_{k-1})$.
		We know that $y_k \in \cV_{k-1}$
		(because $y_k \in \cV_k = \cV_{k-1} \setminus \{y_{k-1}\}$),
		hence it suffices to prove that
		$\Gamma(x) \ge \Gamma(y_{k-1})$ for each $x \in \cV_{k-1}$.
		For each $x \in \cV_{k-1}$,
		we can verify that $\Gamma(x) \ge \Gamma(y) \ge \Gamma(y_{k-1})$,
		where $y \in \delta \cV_{k-1}$ is such that $x \le y$;
		the first inequality follows because $\Gamma$ is component-wise non-increasing,
		and the second follows from the definition of $y_{k-1}$.
		\item \textit{Proof that $\sum_{k = 0}^{m - 1} \alpha_k = 1$.}
		This follows by observing that this is a telescoping sum
		and recalling that $\Gamma(y_{m - 1}) = \Gamma(0) = 1$. \qedhere
	\end{itemize}
\end{proof}

\begin{proof}[Proof of \Cref{prop:polytope_bis}]
	Consider a finite Ferrers set $\cX \subseteq \bN^n$
	and let $m = \card(\cX)$.
	Buliding on the proof of \Cref{prop:polytope},
	we again proceed in three steps.
	
	\vspace{.2cm}
	
	\noindent \emph{$\tilde\cB_\cX$ is a convex polytope.}
	Like for $\cB_\cX$, it follows from the fact that $\tilde\cB_\cX$
	can be seen as a set of vectors
	(with coordinates in~$\cX$ and values in~$\bR_{\ge 0}$)
	defined by linear constraints,
	both equality ($\sum_{x \in \cX} \Pi(x) \Gamma(x) = 1$)
	and inequality ($\Gamma$ is non-negative and component-wise non-increasing).
	
	\vspace{.2cm}
	
	\noindent \textit{Deterministic function are extreme points of~$\tilde\cB_\cX$.}
	Let $\cA \in \cF_\cX$ and
	assume for the sake of contradiction that there exist
	$\alpha \in (0, 1)$
	and two distinct functions $\Gamma, \Delta \in \tilde\cB_\cX$
	such that $\Gamma_\cA = \alpha \Gamma + (1 - \alpha) \Delta$.
	Let $\mu = 1 / (\sum_{y \in \cA} \Pi(y))$.
	
	Let us first verify that
	$\Gamma(x) = \Delta(x) = 0$
	for each $x \in \cX \setminus \cA$.
	Assume for the sake of contradiction that
	there exists $x \in \cX \setminus \cA$ such that
	$\Gamma(x) \neq \Delta(x)$.
	By exchanging symbols if necessary,
	we can assume without loss of generality
	that $\Gamma(x) > \Delta(x)$.
	It follows that
	$0 = \Gamma_\cA(x) = \alpha \Gamma(x) + (1 - \alpha) \Delta(x) > \Delta(x)$,
	which contradicts our assumption that $\Delta \in \tilde\cB_\cX$.
	Therefore, for each $x \in \cX \setminus \cA$, we have
	$\Gamma(x) = \Delta(x)$ which, combined with the fact that
	$0 = \Gamma_\cA(x) = \alpha \Gamma(x) + (1 - \alpha) \Delta(x)$,
	implies that $\Gamma(x) = \Delta(x) = 0$.
	
	Since $\Gamma \neq \Delta$, there exists $x \in \cX$ such that $\Gamma(x) \neq \Delta(x)$.
	The previous paragraph implies that $x \in \cA$.
	By exchanging symbols if necessary, we can assume without loss of generality that $\Gamma(x) > \Delta(x)$ which,
	combined with $\mu = \Gamma_\cA(x) = \alpha \Gamma(x) + (1 - \alpha) \Delta(x)$, implies
	$\Gamma(x) > \mu > \Delta(x)$.
	Since $\Gamma$ is non-increasing, it follows that
	$\Gamma(0) > \mu$.
	Since we also have
	$\mu = \Gamma_\cA(0) = \alpha \Gamma(0) + (1 - \alpha) \Delta(0)$,
	we obtain $\Delta(0) < \mu$.
	Since $\Delta$ is also non-increasing,
	we obtain $\Delta(x) < \mu$ for each $x \in \cA$.
	All in all, we obtain
	\begin{align*}
		\sum_{x \in \cX} \Pi(x) \Delta(x)
		&\overset{\text{(a)}}=
		\sum_{x \in \cA} \Pi(x) \Delta(x)
		\overset{\text{(b)}}<
		\sum_{x \in \cA} \Pi(x) \mu
		\overset{\text{(c)}}=
		1,
	\end{align*}
	where (a) follows from the fact that $\Delta(x) = 0$ for each $x \in \cX \setminus \cA$,
	(b) from the fact that $\Delta(x) < \mu$ for each $x \in \cA$,
	and (c) from the definition of $\mu$.
	This contradicts our assumption that $\Delta \in \tilde\cB_\cX$.
	
	\vspace{.2cm}
	
	\noindent \textit{Every element of~$\tilde\cB_\cX$ is
		a convex combination of deterministic balance functions.}
	Let $\Gamma \in \tilde\cB_\cX$.
	Start off from the decomposition derived
	in the proof of \Cref{prop:polytope}, we have
	\begin{align*}
		\Gamma(x) &= \sum_{k = 0}^{m-1} \alpha_k \indicator{x \in \cV_k},
		\quad \text{for each } x \in \cX,
	\end{align*}
	where $y_0, y_1, \ldots, y_{m-1}$ and $\cV_0, \cV_1, \ldots, \cV_{m-1}$
	are built according to~\eqref{eq:yn}--\eqref{eq:Vn},
	$\alpha_0 = \Gamma(y_0)$,
	and $\alpha_k = \Gamma(y_k) - \Gamma(y_{k-1})$
	for each $k \in \llbracket 1,m - 1 \rrbracket$.
	We also showed in the proof of \Cref{prop:polytope}
	that $\alpha_k \ge 0$ for each $k \in \llbracket 0,m - 1 \rrbracket$.
	We can rewrite this equality as
	\begin{align*}
		\Gamma(x)
		&= \sum_{k = 0}^{m-1} \tilde\alpha_k \Gamma_{\cV_k}(x),
		\quad \text{for each } x \in \cX,
	\end{align*}
	where $\tilde\alpha_k = \alpha_k \sum_{y \in \cV_k} \Pi(y)$
	for each $k \in \llbracket 0,m - 1 \rrbracket$.
	We have $\tilde\alpha_k \ge 0$ for each $k \in \llbracket 0,m - 1 \rrbracket$.
	All that remains is to check that $\sum_{k = 0}^{m-1} \tilde\alpha_k = 1$:
	\begin{align*}
		\sum_{k = 0}^{m-1} \tilde\alpha_k
		&\overset{\text{(a)}}=
		\sum_{k = 0}^{m-1} \alpha_k \sum_{y \in \cV_k} \Pi(y), \\
		&\overset{\text{(b)}}=
		\Gamma(y_0) \sum_{y \in \cV_0} \Pi(y)
		+ \sum_{k = 1}^{m-1} (\Gamma(y_k) - \Gamma(y_{k-1})) \sum_{y \in \cV_k} \Pi(y), \\
		&\overset{\text{(c)}}=
		\Gamma(y_0) \sum_{y \in \cV_0} \Pi(y)
		+ \sum_{k = 1}^{m-1} \Gamma(y_k) \sum_{y \in \cV_k} \Pi(y)
		- \sum_{k = 0}^{m-2} \Gamma(y_k) \sum_{y \in \cV_{k+1}} \Pi(y), \\
		&\overset{\text{(d)}}=
		\sum_{k = 0}^{m-2}
		\Gamma(y_k) \left( \sum_{y \in \cV_k} \Pi(y) - \sum_{y \in \cV_{k+1}} \Pi(y) \right)
		+ \Gamma(y_{m-1}) \sum_{y \in \cV_{m-1}} \Pi(y), \\
		&\overset{\text{(e)}}=
		\sum_{k = 0}^{m-2}
		\Gamma(y_k) \Pi(y_k)
		+ \Gamma(0) \Pi(0)
		\overset{\text{(f)}}=
		1,
	\end{align*}
	where (a) follows from the definition of $\tilde\alpha_k$
	for $k \in \llbracket 0,m - 1 \rrbracket$,
	(b) follows from the definition of $\alpha_k$
	for $k \in \llbracket 0,m - 1 \rrbracket$,
	(c) follows by applying the distributivity property and then making a change of variable in the second sum,
	(d) follows by rearranging the terms,
	(e) follows by applying \eqref{eq:yn}--\eqref{eq:Vn},
	and in particular by recalling that
	$\cV_{k+1} = \cV_k \setminus \{y_k\}$ for each $k \in \llbracket 0,m - 2 \rrbracket$, $y_{m-1} = 0$, and $\cV_{m-1} = \{0\}$,
	and (f) follows by recalling that $y_0, y_1, \ldots, y_{m-1}$ is a permutation of $\cX$ and from our assumption that $\Gamma \in \tilde\cB_\cX$.
\end{proof}

\begin{proof}[Proof of \Cref{theo:polytope}]
	Consider an admission control problem
	$\mathcal{P} = (Q, \rcont, \rdisc)$,
	where $Q = (\cS, |\cdot|, q)$.
	Let $\Pi$ be a stationary measure of~$Q$.
	Since Problem~\eqref{eq:problem_balanced}
	is a linear optimization problem
	whose set of feasible solutions is $\tilde\cB_{|\cS|}$,
	the conclusion follows
	from the fundamental theorem of linear programming
	and by applying \Cref{prop:polytope_bis} to $|\cS|$.
\end{proof}

\section{Proof of \Cref{lemma:whittle-queueing}}
\label{sec:preuvewhittle-queueing}
First observe that \eqref{eq:transitionWhittle} implies that, from any state $s\in\cS$, the only possible transitions are to 
a state $t$ of $\cS_{|s|}$ (if $t$ is of the form $t=s-e_{ik}+e_{i\ell}$), of $\bigcup_{i = 1}^n \cS_{|s| + e_i}$ if $t$ is of the form $s+e_{ik}$, 
or of $\bigcup_{i = 1}^n \cS_{|s| - e_i}$ if $t$ is of the form $s-e_{ik}$. Now, observe the following consequence of irreducibility which, as we will see below, guarantees that the model at stake satisfies 
Assumption \ref{ass:unichain}. 
\begin{lemma}\label{lemma:whittle-irred}
	Let $s\in\cS\setminus\{\mathbf 0\}$. Then, for any $i\in\llbracket 1,m \rrbracket$ such that $|s|_i>0$, 
	\begin{enumerate}
		\item There exist a positive integer $r$, and $u_0,u_1,\cdots,u_{r-1}\in\cS$ such that $|u_0|+e_i=|u_1|=\cdots=|u_{r-1}|=|s|$, and such that 
		\[\left(\prod_{j=0}^{r-2}q(u_j,u_{j+1})\right)q(u_{r-1},s)>0.\]
		\item There exist a positive interger $m$, and 
		$v_{1},\cdots,v_{m}\in\cS$ such that $|v_{m}|+e_i=|v_{m-1}|=\cdots=|v_{1}|=|s|$, and such that 
		\[q(s,v_{1})\left(\prod_{j=1}^{m-1}q(v_{j},v_{j+1})\right)>0.\]
	\end{enumerate}
\end{lemma}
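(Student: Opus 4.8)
The plan is to derive both claims from the irreducibility condition~\eqref{eq:whittle-irred} on~$P$ by means of a single ``tagged customer'' argument. Since $|s|_i > 0$, there is a label $ik \in \cL$ such that $s_{ik} > 0$; fix one. Applying~\eqref{eq:whittle-irred} to this label produces an integer $p \ge 1$ and a tuple $(k_1, \ldots, k_p)$ containing~$k$, say $k = k_a$, for which the cyclic product $P_{0,ik_1}\left(\prod_{l=1}^{p-1}P_{ik_l,ik_{l+1}}\right)P_{ik_p,0}$ is positive; in particular each of its factors is positive. I would use the prefix $0 \to ik_1 \to \cdots \to ik_a$ for the first claim and the suffix $ik_a \to \cdots \to ik_p \to 0$ for the second.

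I would establish the second claim first, because every state it constructs is reached by transitions issued from $s \in \cS$ and therefore automatically lies in~$\cS$. The idea is to route one customer, initially at label $ik = ik_a$, along $ik_a \to ik_{a+1} \to \cdots \to ik_p$ by internal transitions and then let it depart. Concretely I would take $v_j = s - e_{ik_a} + e_{ik_{a+j}}$ for $j = 1, \ldots, p-a$ (internal moves, all within macrostate~$|s|$) and $v_{p-a+1} = s - e_{ik_a}$ (a class-$i$ departure), so that $m = p-a+1$; the case $a = p$ degenerates to the single departure $s \to s - e_{ik}$. Along this path each factor $P_{ik_l,ik_{l+1}}$ and $P_{ik_p,0}$ is positive by the previous paragraph, and each relevant service rate is positive because the tagged customer keeps the corresponding label occupied, whence every transition rate in~\eqref{eq:transitionWhittle} along the path is strictly positive.

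The first claim is the same construction run in reverse. Setting $u_0 = s - e_{ik}$ and $u_j = s - e_{ik} + e_{ik_j}$ for $j = 1, \ldots, a-1$, the path $u_0 \to u_1 \to \cdots \to u_{a-1} \to s$ (noting $s = s - e_{ik} + e_{ik_a}$) consists of a class-$i$ arrival at label~$ik_1$, with rate $P_{0,ik_1}\phi_0 > 0$, followed by internal transitions carrying the new customer to label~$ik$; this gives $r = a$ and $|u_0| + e_i = |u_1| = \cdots = |u_{r-1}| = |s|$, the case $a = 1$ degenerating to the single arrival $s - e_{ik} \to s$. Positivity of all rates is checked exactly as for the second claim.

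The only genuine obstacle is membership of the constructed matrices in~$\cS$, which is defined as the set entailed by the transitions. For the second claim this is free, since the path starts at $s \in \cS$. For the first claim the base point $u_0 = s - e_{ik}$ must itself be shown to lie in~$\cS$; but $u_0$ is precisely the terminal state $v_m$ produced by the second claim, so I would invoke the already-proven second claim to guarantee $u_0 \in \cS$, after which all the $u_j$ are reachable from~$u_0$ and hence in~$\cS$. This is why proving the second claim before the first is essential; the remainder of the argument is a routine verification that each $P$- and $\phi$-factor encountered is positive.
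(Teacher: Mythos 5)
Your proposal is correct and follows essentially the same route as the paper: both split the irreducibility cycle $0 \to ik_1 \to \cdots \to ik_p \to 0$ at the position of $k$, using the prefix (arrival at $ik_1$ plus internal moves of a tagged customer) for the first claim and the suffix (internal moves plus departure from $ik_p$) for the second, with positivity of each rate coming from the $P$-factors in \eqref{eq:whittle-irred} and the occupancy of the tagged label. Your extra step of proving the second claim first so that $u_0 = s - e_{ik} = v_m$ is certified to lie in $\cS$ is a sensible refinement of a membership issue the paper's proof leaves implicit.
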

\begin{proof}
	Fix $s\in\cS$ and $i\in\llbracket 1,n \rrbracket$ such that $|s|_i>0$. Let $k\in\llbracket 1,n \rrbracket$ be such that $s_{ik}>0$, and take $p\equiv p_{ik}$ and $k_1,\cdots k_{p}$ be such that \eqref{eq:whittle-irred} holds. Let $r\in\llbracket 1,p \rrbracket$ be such that $k_{r}=k$. 
	It then readily follows that, by defining the descending 
	sequence
	\[\begin{cases}
		u_{r} &=s;\\
		u_{j} &=u_{j+1}+e_{ik_j}-e_{ik_{j+1}},\quad j\in\llbracket 1,r-1 \rrbracket,\mbox{ if }r\ge 2;\\
		u_0 &=u_1-e_{ik_1},
	\end{cases}\] we have 
	\[|u_0|+e_i=|u_1|=\cdots =|u_{r-1}|=|s|.\]
	Moreover, it follows from \eqref{eq:whittle-irred} that 
	\begin{equation*}
		\prod_{j=0}^{r-1} q(u_j,u_{j+1})
		= P_{0,ik_1}\phi_0(u_1-e_{ik_1})\left(\prod_{j=1}^{r-1}P_{ik_j,ik_{j+1}}\phi_{ik_j}(u_j) \right)>0,
	\end{equation*}
	proving the first assertion. To show the second one we reason similarly, by letting 
	\[\begin{cases}
		v_{0}&=s;\\
		v_{j+1} &=v_{j}+e_{ik_{r+j+1}}-e_{ik_{r+j}},\quad j=\llbracket 0,p-r-1 \rrbracket,\mbox{ if }r\le p-1;\\
		v_{p-r+1}&=v_{p-r}-e_{ik_{p}},
	\end{cases}\]
	in a way that 
	\[|v_{p-r+1}|+e_i=|v_{p-r}|=\cdots =|v_{1}|=|s|,\]
	and 
	\begin{equation*}
		\prod_{j=0}^{p-r} q(v_{j},v_{j+1})
		= \left(\prod_{j=0}^{p-r-1}P_{ik_{r+j},ik_{r+j+1}}\phi_{ik_{r+j}}(v_{j}) \right)P_{ik_{p},0}\phi_{ik_{p}}(v_{p-r})>0,
	\end{equation*}
	concluding the proof by setting $m=p-r+1$. 
\end{proof}
We are now ready to show that multi-class Whittle networks are indeed a queueing system, in the sense of \Cref{sec:queueing_systems}. 
\begin{proof}[Proof of \Cref{lemma:whittle-queueing}] 
	We have to check that the system satisfies Assumptions \ref{ass:unichain}, and we show that it is so for $\cS_\rec\equiv\cS.$ 
	For any $s$ it suffices to apply Assertion 1 of \Cref{lemma:whittle-irred}, 
	a number $\|s\|:=\sum_{i\in\llbracket 1,n \rrbracket,k\in\llbracket 1,m \rrbracket} s_{ik}$ of times, to obtain, by induction, the existence of 
	$\|s\|$ positive integers $r^1,\cdots,r^{\| s \|}$ and of 
	$\|s\|$ array of elements $\left(\left(u^i_0,...,u^i_{r^i-1}\right)\right)_{i=1,\cdots,\|s\|}$, such that $u^1_0=\mathbf 0$ and such that, 
	by setting $u^{\|s\|+1}_{0}:=s$ we have 
	$$|u^i_0|<|u^i_{1}|=|u^i_{2}|=\cdots=|u^i_{r^i-1}|=|u^{i+1}_{0}|,\,\,\mbox{ for all }i\in \llbracket 1,\|s\| \rrbracket$$
	and
	\begin{equation*}
		\prod_{i=1}^{\|s\|}\left(\prod_{j=0}^{r^i-2}q(u^i_j,u^i_{j+1})\right)q(u^i_{r^i-1},u^{i+1}_{0})>0,
	\end{equation*}
	implying Assertion 1.
	Likewise, Assertion 3 follows by applying Assertion 2 of \Cref{lemma:whittle-irred}, 
	a number $\|s\|:=\sum_{i\in\llbracket 1,n \rrbracket,k\in\llbracket 1,m \rrbracket} s_{ik}$ of times, to obtain, by induction, the existence of 
	$\|s\|$ positive integers $m^1,\cdots,m^{\|s\|}$ and of 
	$\|s\|$ array of elements $\left(\left(v^i_{1},...,v^i_{m^i}\right)\right)_{i=1,\cdots,\|s\|}$, such that $v^{\|s\|}_{m^{\|s\|}}=\mathbf 0$ and such that, by setting $v^{0}_{m^0}:=s$ we have 
	$$|v^{i-1}_{m^{i-1}}|=|v^i_{1}|=\cdots=|v^i_{m^i-1}|>|v^i_{m^i}|,\,\,\mbox{ for all }i\in \llbracket 1,\|s\| \rrbracket$$
	and
	\begin{equation*}
		\prod_{i=1}^{\|s\|}q(v^{i-1}_{m^{i-1}},v^{i}_{1})\left(\prod_{j=1}^{m^i-1}q(v^i_{j},v^i_{j+1})\right)>0,
	\end{equation*}
	in view of Assertion 3. 
	To finish, to check Assertion 4 of Assumptions \ref{ass:unichain}, take two elements $u,s\in\cS$ such that $|u|\le |s|$. 
	As above, we can apply Assertion 2 of \Cref{lemma:whittle-irred} a number $\|u\|$ of times, to construct a path from $\mathbf 0$ to $u$ 
	having positive probability,
	whence the fact that $\cS$ is coordinate-convex. This concludes the proof of \Cref{ass:unichain}.  
\end{proof}

\section{Proof of \Cref{lemma:OI-queueing}}
\label{sec:preuveOI-queueing}
In view of \eqref{eq:transitionWhittle}, from any state $s=s_1\cdots c_\ell\in\cS$, the only possible transitions are to 
a state of the form $s_1\cdots s_\ell i$, i.e., an element of $\bigcup_{i = 1}^n \cS_{|s| + e_i}$, or to a state  
of the form $s_1\cdots s_{p-1}s_{p+1}\cdots s_\ell$, that is, an element of $\bigcup_{i = 1}^n \cS_{|s| - e_i}$.  

First, fix $s=s_1\cdots s_\ell\in\cS\setminus\{\varnothing\}$. 
Set $t_1=\varnothing$ and, for any $k\in \llbracket 2,\ell+1 \rrbracket$, $t_k=s_1\cdots s_{k-1}$. Then, for all $k\in\llbracket 1,\ell \rrbracket$, as $\cS$ is prefix-stable, 
$t_k$ is an element of $\cS$. Moreover we have $|t_k|=|t_{k+1}|-e_{s_k}\le |t_{k+1}|$, and also $q(t_k,t_{k+1})=\nu_{s_k}>0$. This shows Assumption \ref{ass:unichain-1}. Then, to show Assumption \ref{ass:unichain-3}, take 
again $s=s_1\cdots s_\ell\in\cS\setminus\{\varnothing\}$. Observe that, by setting $s_1\cdots s_0 \equiv \varnothing$, 
\[\sum_{p=1}^{\ell}\Delta\mu(s_1\cdots s_p)=\sum_{p=1}^{\ell} \left(\mu(s_1\cdots s_p) - \mu(s_1\cdots s_{p-1}) \right)=\mu(s)> 0,\]
implying that $\Delta\mu(s_1\cdots s_p)>0$ for some $p\in\llbracket 1,\ell \rrbracket$. Therefore, setting $t_1=s$, for such $p$ there exists $t_2=s_1\cdots s_{p-1}s_{p+1}\cdots s_\ell$, such that 
$|t_2|=|t_1|-e_{s_p}\le |t_1|$ and $q(t_1,t_2)=\Delta\mu(s_1\cdots s_p)>0$. We can then iterate this argument by induction, to prove the existence of a sequence of elements 
$s=t_1,t_2,\cdots,t_{\ell+1}=\emptyset$, such that $q(t_k,t_{k+1})>0$ and $|t_k|\ge |t_{k+1}|$ for all $k\in\llbracket 1,\ell \rrbracket$, completing the proof of Assertion \ref{ass:unichain-3}. 

It remains to check Assertion \ref{ass:unichain-4}. Let $s=s_1\cdots s_\ell\in\cS\setminus\{\varnothing\}$, and let 
$y\in \in\bN^n$ be such that $y\le |s|$. This means exactly that for all $i\in\un$, $|s|_i - y_i\ge 0$. Then, set $\cI=\{i\in\un\,;´\,|s|_i-y_i>0\}$, and 
for any $i\in\cI$, $\alpha_i=|s|_i-y_i>0$. Then, if we denote by $i_1,\cdots,i_q$, the elements of $\cI$, let $\tilde s$ be a permuted version of 
$s$ ending with the suffix
\[\underline s:=\underbrace{i_1\cdots i_1}_{\alpha_{i_1} \mbox{ \scriptsize{times} }}\underbrace{i_2\cdots i_2}_{\alpha_{i_2} \mbox{ \scriptsize{times} }} \cdots \underbrace{i_q\cdots i_q}_{\alpha_{i_q} \scriptsize{\mbox{ times }}}.\] 
Then, by permutation stability, $\tilde s$ is an element of $\cS$. Moreover, the prefix $\breve s$ obtained from $\tilde s$ by erasing the suffix $\underline s$, 
is such that $|\breve s|=y$ by construction, and it is an element of $\cS$ by prefix-stability. This concludes the proof of Assumptions \ref{ass:unichain}.

\end{document}